\newcommand{\Tau}{\mathrm{T}}
\newcommand{\Bellman}{\mathcal{L}}
\newcommand{\M}{\mathcal{M}}
\newcommand{\B}{\mathcal{B}}
\newcommand{\X}{\mathcal{X}}
\newcommand{\A}{\mathcal{A}}
\newcommand{\SState}{\mathcal{S}}
\newcommand{\Cost}{\mathcal{C}}
\newcommand{\cmin}{c_{\min}}
\newcommand{\cmax}{c_{\max}}
\newcommand{\Prob}[1]{\mathbb{P}[#1]}
\newcommand{\pp}{\xleftarrow{+}}
\newcommand{\E}{\mathbb{E}}
\newcommand*\xbar[1]{%
   \hbox{%
     \vbox{%
       \hrule height 0.5pt 
       \kern0.5ex
       \hbox{%
         \kern-0.1em
         \ensuremath{#1}%
         \kern-0.1em
       }%
     }%
   }%
} 
\theoremstyle{plain}
\newtheorem{thm}{Theorem}[section]
\newtheorem{lem}[thm]{Lemma}
\newtheorem{prop}[thm]{Proposition}
\newtheorem{cor}[thm]{Corollary}
\newtheorem{assumption}{Assumption}
\newtheorem{rem}[thm]{Remark}
\theoremstyle{definition}
\newtheorem{defn}{Definition}[section]
\newtheorem{prob}{Problem}[]
\theoremstyle{remark}
\newcommand*{\starnr}{\stepcounter{equation}\tag{\theequation}}
\newlength{\subcolumnwidth}
\newenvironment{subcolumns}[1][0.45\columnwidth]
 {\valign\bgroup\hsize=#1\setlength{\subcolumnwidth}{\hsize}\vfil##\vfil\cr}
 {\crcr\egroup}
\newcommand{\nextsubcolumn}[1][]{%
  \cr\noalign{\hfill}
  \if\relax\detokenize{#1}\relax\else\hsize=#1\setlength{\subcolumnwidth}{\hsize}\fi
}
\title{Policy Optimization with Linear Temporal Logic Constraints}
\author{%
  Cameron Voloshin \\
  Caltech \\
   \And
   Hoang M. Le \\
   Argo AI \\
   \And
   Swarat Chaudhuri \\
   UT Austin \\
   \And
   Yisong Yue \\
   Argo AI \\
   Caltech
}
\begin{document}

\maketitle


\begin{abstract}
We study the problem of policy optimization (PO) with linear temporal logic (LTL) constraints. The language of LTL allows flexible description of tasks that may be unnatural to encode as a scalar cost function. We consider LTL-constrained PO as a systematic framework, decoupling task specification from policy selection, and as an alternative to the standard of cost shaping. With access to a generative model, we develop a model-based approach that enjoys a sample complexity analysis for guaranteeing both task satisfaction and cost optimality (through a reduction to a reachability problem). Empirically, our algorithm can achieve strong  performance even in low-sample regimes.


\end{abstract}

\section{Introduction}\label{sec:intro}

The standard reinforcement learning (RL) framework aims to find a policy that minimizes a cost function. 
The premise is that this scalar cost function can completely capture the task specification (known as the ``reward hypothesis'' \cite{Sutton1998, silver2021reward}). To date, almost all theoretical understanding of RL is focused on this cost minimization setting (e.g., \cite{tsitsiklis1996analysis,kakade2002approximately,kakade2003sample,szepesvari2010algorithms,osband2013more,ayoub2020model,gheshlaghi2013minimax,dann2015sample,azizzadenesheli2016reinforcement,agarwal2020model,agarwal2020optimality,li2020breaking,qu2020finite,qu2021exploiting}).

However, capturing real-world task specifications using scalar costs can be challenging. For one, real-world tasks often consist of objectives that are required, as well as those that are merely desirable. By combining these objectives into scalar costs, one erases the distinction between these two categories of tasks. 
Also, there is recent theoretical evidence that certain tasks are simply not reducible to scalar costs \cite{Abel2021} (see Section \ref{sec:motivation}).  In practice, one circumvents these challenges using heuristics such as adding ``breadcrumbs'' \cite{Sorg2011}.
However, such heuristics can lead to catastrophic failures in which the learning agent ends up exploiting the cost function in an unanticipated way \cite{Randlv1998LearningTD,
toromanoff2019deep,
ibarz2018reward,
zhang2021importance,
ng1999policy}. 

In response to these limitations, recent work has studied alternative RL paradigms that use Linear Temporal Logic (LTL) to specify tasks (see Section \ref{sec:related_work}). LTL is a modeling language that can express desired characteristics of future paths of the system \cite{modelchecking}.
The notation is precise enough to allow the specification of both the required and desired behaviors; the cost minimization is left only to discriminate between which LTL-satisfying policy is ``best''. This ensures that the main objective --- e.g., time, energy, or effort --- does not have any relation to the task and is easily interpretable.

Existing work on RL with LTL constraints tends to make highly restrictive assumptions. Examples include (i) known mixing time of the optimal policy \cite{FuLTLPAC}, (ii) the assumption that every policy satisfies the task eventually \cite{Wolff2012RobustControl}, or (iii) known optimal discount factor \cite{Hasanbeig2018lcrl}, all of which assist in task satisfaction verification. These assumptions have complex interactions with the environment, making them impractical if not impossible to calculate.
The situation is made more complex by recent theoretical results \cite{Yang2021Intractable,alur2021framework} that show that there are LTL tasks that are not PAC-MDP-learnable.

In this paper, we address these limitations through a novel policy optimization framework for RL under LTL constraints. Our approach relies on two assumptions that are significantly less restrictive than those in prior work and circumvent the negative results on RL-modulo-LTL: the availability of a generative model of the environment and a lower bound on the transition probabilities in the underlying MDP. Under these assumptions, we derive a learning algorithm based on a reduction to a reachability problem. 
The reduction in our method can be instantiated with several planning procedures that handle unknown dynamics \cite{bertsekas2011dynamic, puterman2014markov}. 
We show that our algorithm offers strong constraint satisfaction guarantees and give a rigorous sample complexity analysis of the algorithm. 

In summary, the contributions of this paper are:
\setlength{\leftmargini}{10pt}
\begin{enumerate}
\item We provide a novel approach to LTL-constrained RL that requires significantly fewer assumptions, and offers stronger guarantees, than previous work.

\item We develop several new theoretical tools for our analysis. These 
may be of independent interest.
\item We empirically validate using both infinite- and indefinite-horizon problems, and with composite specifications such as collecting items while avoiding enemies.  We find that our method enjoys strong performance, often requiring many fewer samples than our worst-case guarantees.
\end{enumerate}

\section{Motivating Examples} \label{sec:motivation}

We examine two examples where standard cost engineering cannot capture the task (Figure \ref{fig:motivation}). We consider the undiscounted setting here. See \cite{Littman2017, Abel2021} for difficult examples for the discounted setting. 

\textbf{Example 1 (Infinite Loop).} A robot is given the task of perpetually walking between the coffee room and the office (Figure \ref{fig:motivation} (Left)). 
To achieve this behavior, both the policy and cost-function must be history-dependent. These can be made Markovian through proper state-space augmentation and has been studied in  hierarchical reinforcement learning or learning with options \cite{Le2018, Sutton1999Option}. Options engineering is laborious and requires expertise. Nevertheless, without the appropriate augmentation, any cost-optimal policy of a Markovian cost function will fail at the task. We will see in Section \ref{sec:problem_formulation} that any LTL expression comes with automatic state-space augmentation, requiring no expert input.

\textbf{Example 2 (Safe Delivery).} 
The goal is to maximize the probability of safely sending a packet from one computer to another (Figure \ref{fig:motivation} (Right)).  
Policy $1$ leads to a hacker sniffing packets but passing them through, and is unsafe. Policy $2$ leads to a hacker stealing packets with probability $p > 0$, and is safe with probability $1-p$, and is the policy that satisfies the task. 
For cost engineering, let $R$ and $S$ be the recurring costs of the received and stolen states. For the two policies, the avg.~costs are $g_1 = R$ and $g_2 = pS + (1-p)R$.
Strangely, we must set $R>S$ in order for $g_2 < g_1$.
Fortunately, optimizing any cost function constrained to satisfying the LTL specification does not suffer from this counter intuitive behavior as only policy $2$ has any chance of satisfying the LTL expression.

\begin{figure}[h]
\begin{center}
\begin{tabular}{cc}
\minipage{.45\textwidth}
  \begin{center}
  \includegraphics[width=0.7\linewidth]{./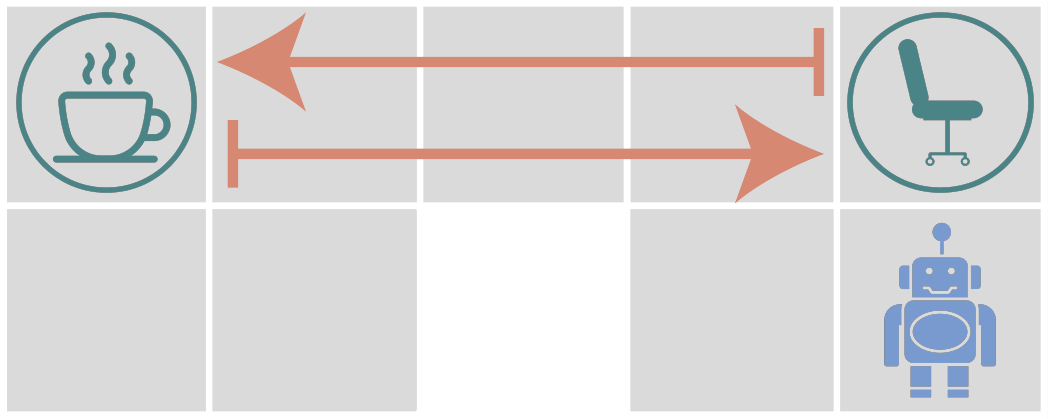}
  \end{center}
\endminipage
&
\minipage{.45\textwidth}
  \begin{center}
  \includegraphics[width=0.7\linewidth]{./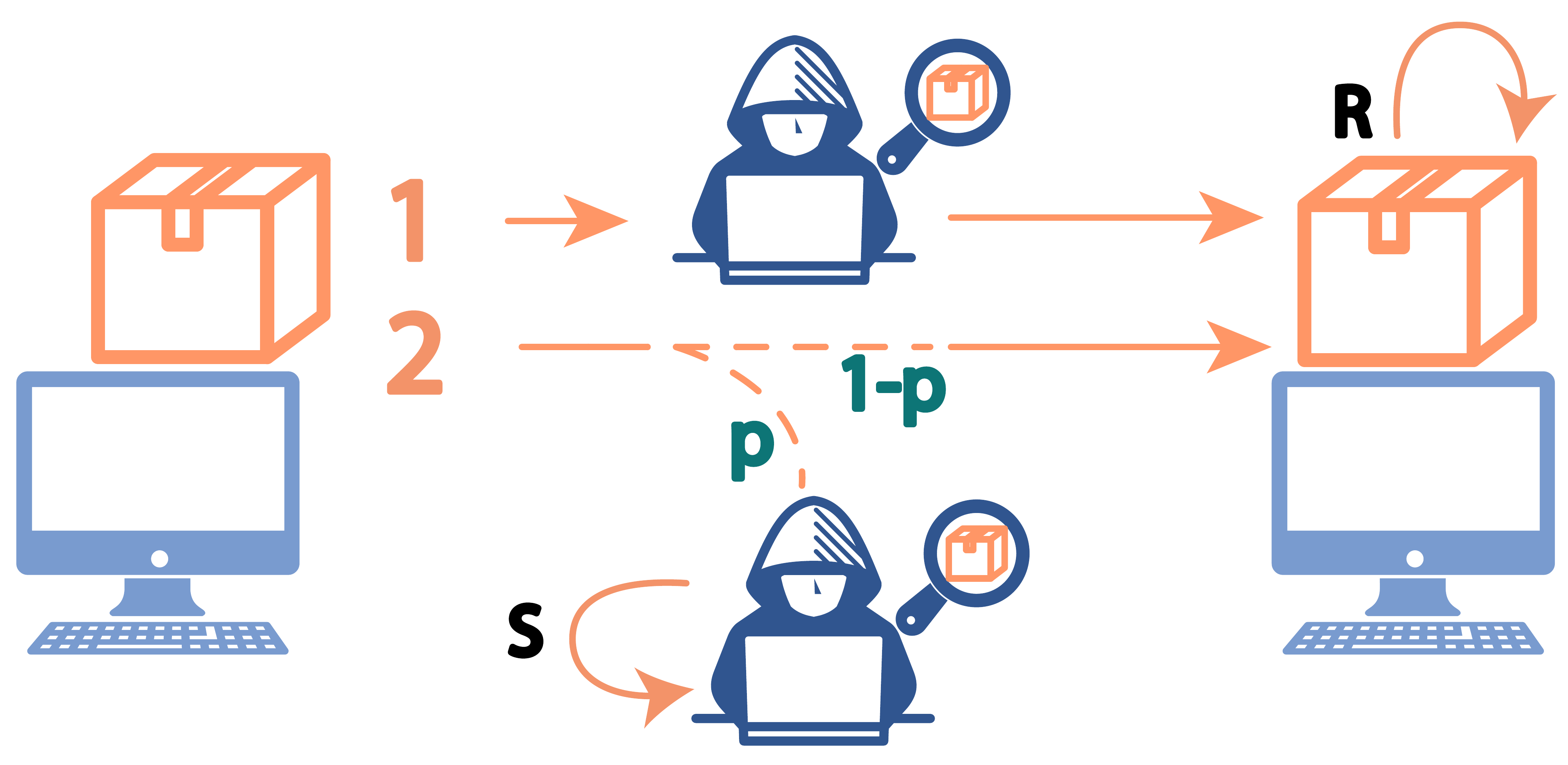}
  \end{center}
\endminipage
\end{tabular}
\end{center}
\caption{\textit{ (Left) Infinite Loop. The robot must perpetually walk between the coffee room and office. Without proper state-space augmentation, a markovian cost function cannot capture this task. (Right) Safe Delivery. The specification is to deliver a packet without being interfered. Policy 2 should be chosen. One would need to penalize receiving the packet significantly over having it stolen: $R > S$.
}}
 \label{fig:motivation}
\end{figure} 
\section{Background and Problem Formulation}\label{sec:problem_formulation}

We now formulate the problem.
An \emph{atomic proposition} is a variable that takes on a truth value. An \emph{alphabet} over a set of atomic propositions $\text{AP}$ is given by 
$\Sigma = 2^{\text{AP}}$. For example, if $\text{AP} = \{a, b\}$ then $\Sigma = \{\{\}, \{a\}, \{b\}, \{a,b\}\}$. $\Delta(X)$ represents the set of probability distributions over a set $X$. 

\subsection{MDPs with Labelled State Spaces} 
\label{sec:mdp}

We assume that the environment follows the finite Markov Decision Process (MDP) framework given by the tuple $\M = (\SState^\M, \A^\M, P^\M, \Cost^\M, d_0^\M, L^{\mathcal{M}})$ consisting of a finite state space $\SState^\M$, a finite action space $\A^\M$, an \textit{\textbf{unknown}} transition function $P^\M : \SState^\M \times \A^\M \to \Delta(\SState^\M)$, a cost function $\Cost : \SState^\M \times \A^\M \to \Delta([\cmin, \cmax])$, an initial state distribution $d_0 \in \Delta(\SState^\M)$, and a labelling function $L^{\mathcal{M}}: \SState^\M \to \Sigma$. We take $\A^\M(s)$ to be the set of available actions in state $s$. Unlike traditional MDPs, $\M$ has a labeling function $L^{\mathcal{M}}$ which returns the atomic propositions that are true in that state. A \textbf{run} in $\M$ is a sequence of 
states 
$\tau = (s_0,s_1,\ldots)$ reached through successive transitions.

\subsection{Linear Temporal Logic (LTL), Synchronization with MDPs, and Satisfaction} 

Now we give some basic background on LTL. For a more comprehensive overview, see \cite{modelchecking}.

\begin{defn}[LTL Specification, $\varphi$]
An LTL specification $\varphi$ is the entire description of the task, including both desired and required behaviors, and is constructed from a composition of atomic propositions, including logical connectives: not ($\lnot$), and ($\&$), and implies
($\rightarrow$); and temporal operators: next $(X)$, repeatedly/always/globally ($G$), eventually ($F$), and until ($U$).
\end{defn}

\textbf{Examples.} Consider again the examples in Section \ref{sec:motivation}.
For $AP = \{a,b\}$, some basic task specifications include safety ($G \lnot a$), reachability ($F a $),
stability ($FG a$), response ($a \rightarrow F b$), and progress $(a \;\&\; X F b)$. For the Infinite Loop example (Figure \ref{fig:motivation} (Left)), $AP = \{o, c\}$ indicating the label of the grid location of our agent (office, coffee, or neither). The specification is ``$GF(o \;\&\; XF c)$'' meaning ``go between office and coffee forever'', and is a combination of safety, reachability, and progress. For the Safe Delivery example (Figure \ref{fig:motivation} (Right)), $AP = \{s\}$ indicating the safety of a state. The specification is ``$G s$'' meaning ``always be safe''. 


\textbf{LTL Satisfaction: Synchronizing MDP with LTL.}
By synchronizing an MDP with an LTL formula, we can easily check if a run in the MDP satisfies a specification $\varphi$. In particular, it is possible to model the progression of satisfying $\varphi$ through a specialized automaton, an LDBA $\mathcal{B}_\varphi$ \cite{Sickert2016ldba}, defined below. More details for constructing LDBAs are in \cite{hahn2013lazy, modelchecking, kvretinsky2018owl}. We drop $\varphi$ from $\B_\varphi$ for  brevity. 

\begin{defn}(Limit Deterministic B\"uchi Automaton, LDBA \cite{Sickert2016ldba})
\label{def:ldba}
An \textbf{\textit{LDBA}} is a tuple $\mathcal{B} = (\SState^\B, \Sigma \cup \A_\B, P^\B, \SState^\mathcal{B \ast}, s^\B_0)$ consisting of (i) a finite set of states $\SState^\B$, (ii) a finite alphabet $\Sigma = 2^{\text{AP}}$, $\A_\B$ is a set of indexed jump transitions (iii) a transition function $P^\B : \SState^\B \times (\Sigma \cup \A_\B) \to 2^{\SState^\B}$, (iv) accepting states $\SState^\mathcal{B \ast} \subseteq \SState^\B$, and (v) initial state $s^\B_0$. There exists a mutually exclusive partitioning of $\SState^\B = \SState^\B_D \cup \SState^\B_{N}$ such that $\SState^\mathcal{B \ast} \subseteq \SState^\B_D$, and for $s \in S^\B_D, a \in \Sigma$ then $P^\B(s, a) \subseteq \SState^\B_D$ and $|P^\B(s, a)| = 1$, deterministic. $\A_\B(s)$ is only (possibly) non-empty for $s \in \SState^\B_{D}$ and allows $\B$ to transition without reading an AP. 
A \textit{path} $\sigma = (s_0, s_1, \ldots)$ is a sequence of states in $\mathcal{B}$ reached through successive transitions.  $\mathcal{B}$ \textbf{accepts} a path $\sigma$ if there exists some state $s \in \SState^\mathcal{B \ast}$ in the path that is visited infinitely often.
\end{defn}

We can now construct a synchronized product MDP from the interaction of $\M$ and $\B$.

\begin{defn}(Product MDP)\label{def:product_mdp}
The product MDP $\mathcal{X}_{\M, \B} = (\SState, \A, P, \Cost, d_0, L, \SState^{\ast})$ is an MDP with $\SState = \SState^\M \times \SState^\B$, $\A = \A^{\M} \cup \A^{\B}$, $\Cost((m, b),a) = \Cost^{\M}(m,a)$ if $a \in A^{\M}(m)$ otherwise $0$, $d_0 = \{(m,b)| m \in d_0^\M, b \in P^\mathcal{B}(s_0^\mathcal{B}, L^\M(m))\}$, $L((m,b)) = L^\M(m)$, $S^\ast = \{(\cdot, b) \in \SState | b \in \SState^{\B \ast}\}$ accepting states, and $P: \SState \times \A \to \Delta(\SState)$ taking the form:
\begin{equation*}
    P((m, b),a,(m', b')) = 
    \begin{cases} 
    P^{\M}(m,a,m') 
    &a \in A^{\M}(m), b' \in P^\B(b, L(m')) \\
    1, &a \in  A^{\B}(b), b' \in P^\B(b, a), m=m' \\
    0, &\text{otherwise}
    \end{cases}
\end{equation*} 
\end{defn}

A run $\tau = (s_0, s_1, \ldots) = ((m_0, b_0), (m_1, b_1), \ldots)$ in $\X$ is accepting (accepted) if $(b_0,b_1,\ldots)$, the projection onto $\B$, is accepted. Equivalently, some $s \in \SState^\ast$ in $\X$ is visited infinitely often. This leads us to the following definition of LTL satisfaction:

\begin{defn}[Satisfaction, $\tau \models \varphi$]
A run $\tau$ in $\X$ \textit{satisfies} $\varphi$, denoted $\tau \models \varphi$, if it is accepted.
\end{defn}

\begin{defn}(Satisfaction, $\pi \models \varphi$)
A policy $\pi \in \Pi$ \textit{satisfies} $\varphi$ with probability $\Prob{\pi \models \varphi} = \E_{\tau \sim \Tau^P_\pi}[\mathbf{1}_{\tau \models \varphi}]$. Here, $\mathbf{1}_{X}$ is an indicator variable which is $1$ when $X$ is true, otherwise $0$. $\Tau^P_{\pi}$ is the set of trajectories induced by $\pi$ in $\X$ with transition function $P$.
\end{defn}

\subsection{Problem Formulation}
\label{sec:problem}


Our goal is to find a policy that simultaneously satisfies a given LTL specification $\varphi$ with highest probability (probability-optimal) and is also optimal w.r.t.~the cost function of the MDP. 
We consider (stochastic) Markovian policies $\Pi$, and define the set of all probability-optimal policies as $\Pi_{\max} = \left\{ \arg\max_{\pi' \in \Pi} \Prob{\pi' \models \varphi}\right\}$.  
We first define the gain $g$ (average-cost) and transient cost $J$:
\begin{equation}\label{def:costs}
\begin{split}
g^P_\pi \equiv &\mathbb{E}_{\tau \sim \Tau^P_{\pi}} \left[ \lim_{T \to \infty} \frac{1}{T} \sum_{t=0}^{T-1} \Cost(s_t, \pi(s_t)) \bigg| \tau \models \varphi \right], \  J^P_\pi \equiv \mathbb{E}_{\tau \sim \Tau^P_{\pi}}\left[\sum_{t=0}^{\kappa_{\tau}} \Cost(s_t, \pi(s_t))  \bigg| \tau \models \varphi \right]
\end{split}
\end{equation}
where $\kappa_\tau$ is the first (hitting) time the trajectory $\tau$ leaves the transient states induced by $\pi$. When $P$ is clear from context, we abbreviate $g^P_\pi$ and $J^P_\pi$ by $g_\pi$ and $J_\pi$, respectively.

Gain optimality for infinite horizon problems has a long history in RL \cite{bertsekas2011dynamic, puterman2014markov}. Complementary to gain optimality, we consider a hybrid objective including the transient cost. For any $\lambda \geq 0$, we define the optimal policy as the probability-optimal policy with minimum combined cost:
\begin{align*}
\tag{OPT}\label{eqn:main_problem}
\pi^\ast_{\lambda} \equiv \arg\min_{\pi \in \Pi_{\max}} \; &J_\pi + \lambda g_\pi  = \arg\min_{\pi \in \Pi_{\max}} \; (J_\pi + \lambda g_\pi )  \Prob{\pi \models \varphi} \quad (\equiv V^P_{\pi, \lambda}).
\end{align*}
In other words, probability-optimal policies are those that satisfy the entirety of the task, both desired and required behaviors, where $V^P_{\pi, \lambda} \equiv (J_\pi + \lambda g_\pi )  \Prob{\pi \models \varphi}$ is the normalized value function\footnote{Normalized objectives are not unusual in RL, e.g. in discounted settings, multiplication by $(1-\gamma)$}, corresponding to a notion of energy or effort required, with $\lambda$ representing the tradeoff between gain and transient cost.
We will often omit the dependence of $V$ on $P$ and $\lambda$ for brevity.

\textbf{Example.} Consider the Safe Delivery example (Figure \ref{fig:motivation} (Right)). For policy $1$, $\Prob{1 \models \varphi} = 0$ and so $1 \not\in \Pi_{\max}$. Let policy $2$ be a cost $1$ timestep before stolen or receipt, then $g_{2} = R$ is the (conditional) gain, $J_{2} = 1$ is the (conditional) transient costs, $\Prob{2 \models \varphi} = 1-p$, and $V_{2} = (1+\lambda R)(1-p).$

\begin{prob}[Planning with Generative Model/Simulator]
\label{prob:main}
Suppose access to a generative model of the true dynamics $P$ from which we can sample transitions $s' \sim P(s,a)$ for any state-action pair $(s,a) \in \SState \times \A$.\footnote{The use of a generative model is increasingly common in RL  \cite{gheshlaghi2013minimax,li2020breaking,agarwal2020model,Tarbouriech2021}, and is applicable in many settings where such a generative model is readily available as a simulator (e.g., \cite{dosovitskiy2017carla}).}  With probability $1-\delta$, for some errors $\epsilon_{\varphi}, \epsilon_{V} >0$, find a policy $\pi \in \Pi$ that simultaneously has the following properties: $(i) \;\; |\Prob{\pi \models \varphi} - \Prob{\pi^\ast \models \varphi}| < \epsilon_{\varphi} \quad (ii)\;\; |V_{\pi} - V_{\pi^\ast}| < \epsilon_{V}$.
\end{prob}

\section{Approach}

\subsection{End Components \& Accepting Maximal End Components}

Our analysis relies on the idea of an end component: a recurrent, inescapable set of states when restricted to a certain action set. It is a sub-MDP of a larger MDP that is probabilistically closed.

\begin{defn}(End Component, EC/MEC/AMEC \cite{modelchecking})\label{def:end_component} Consider MDP $(\SState, \A, P, \mathcal{C}, d_0, L, \SState^\ast)$. An end component $(E, \A_E)$ is a set of states $E \subseteq \SState$ and acceptable actions $\A_E(s) \subseteq \A(s)$ (where $s \in E$) such that $\forall(s,a) \in E \times \A_E$ then $Post(s,a) = \{s' |  P(s,a, s') > 0 \} \subseteq E$. Furthermore, $(E, \A_E)$ is strongly connected:  any two states in $E$ is reachable from one another by means of actions in $\A_E$. We say an end component $(E, \A_E)$ is \textit{maximal} (MEC) if it is not contained within a larger end component $(E', \A_{E'})$, ie. $\nexists (E', \A_{E'})$ EC where $E \subseteq E', \A_E(s) \subseteq \A_{E'}(s)$ for each $s \in A$. A MEC $(E, \A_E)$ is an \textit{accepting} MEC (AMEC) if it contains an accepting state,  $\exists s \in E$ s.t. $s \in \SState^\ast$.
\end{defn}

\subsection{High-Level Intuition}\label{subsec:intuition}

The description of our approach, LTL Constrained Planning (LCP), in Section \ref{sec:algo} is rather technical in order to yield theoretical guarantees.  We thus first summarize the high-level intuitions.

\begin{figure}[!htp]
\centering

\begin{minipage}{1\textwidth}
\begin{subcolumns}[0.33\textwidth]
  \subfloat[Abstract Diagram]{\includegraphics[width=\subcolumnwidth]{./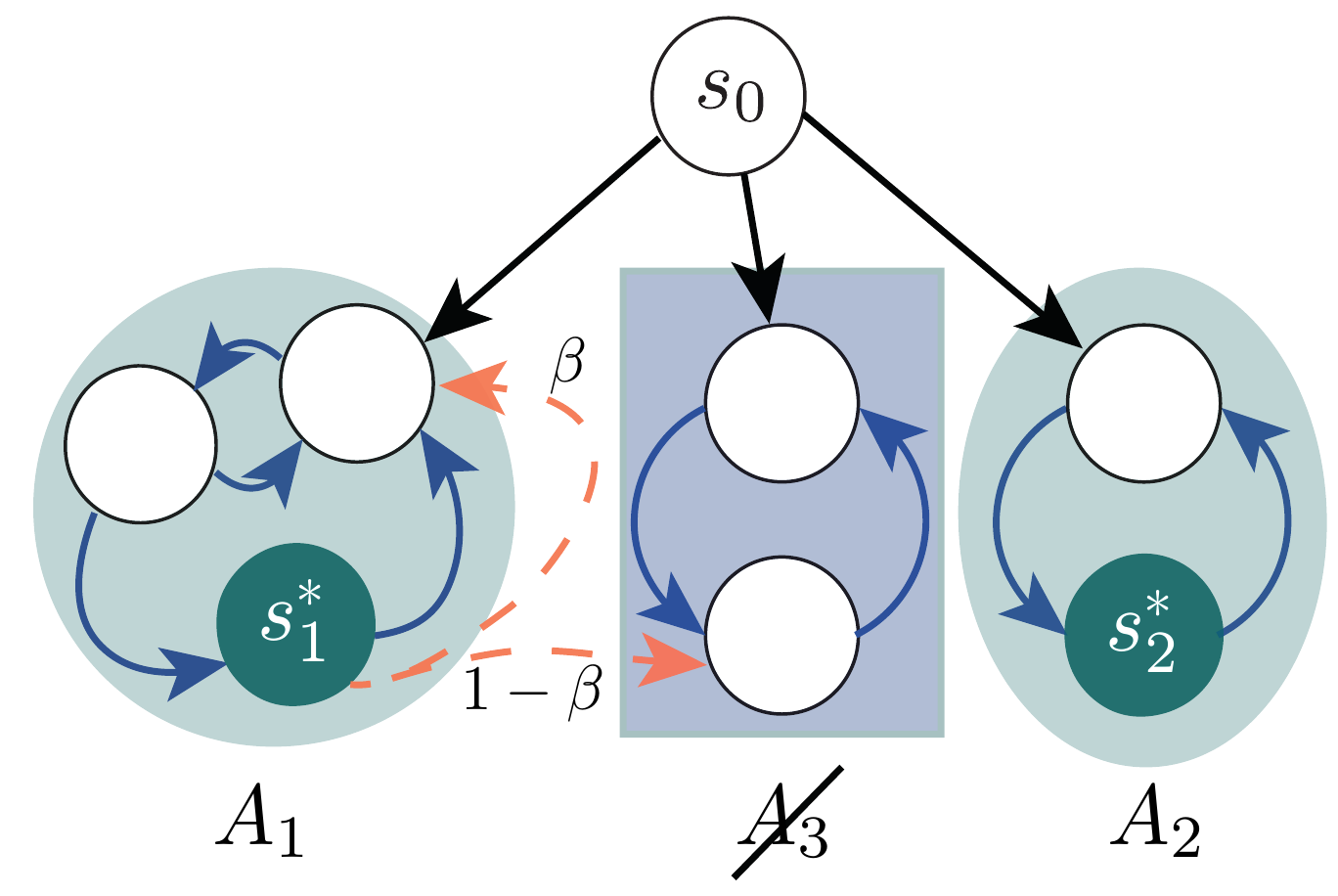}} 
\nextsubcolumn[0.33\textwidth]
  \subfloat[Example, Infinite Loop ]{\includegraphics[width=\subcolumnwidth]{./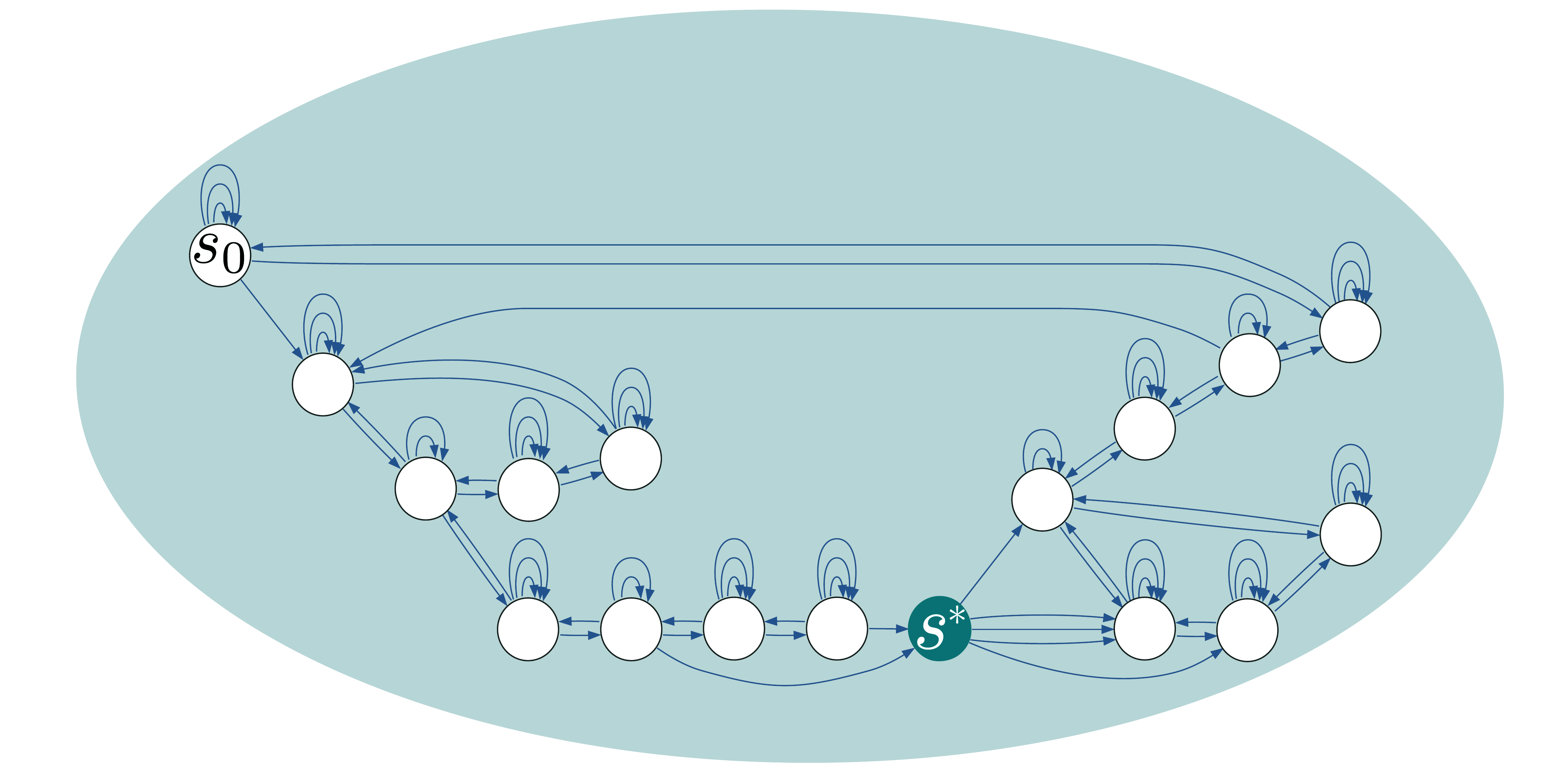}}
\nextsubcolumn[.33\textwidth]
  \subfloat[Example, Safe Delivery]{\includegraphics[width= \subcolumnwidth]{./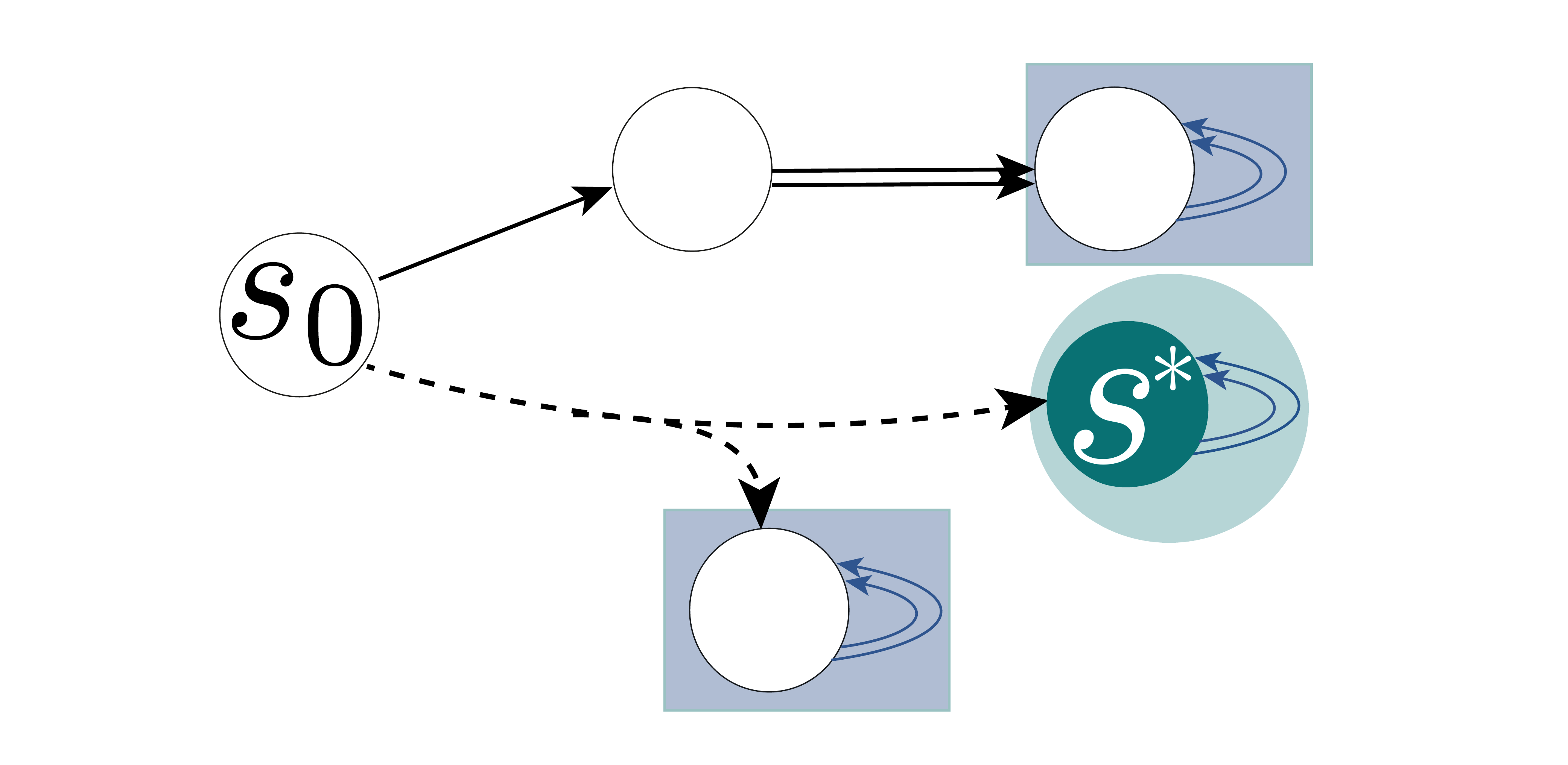}}
\end{subcolumns}
\end{minipage}
\caption{Product MDP diagrams. (Left) The goal of LTL Constrained Policy Optimization can be reduced to a reachability problem. We want to reach $A_1$ or $A_2$ from $s_0$ and then follow the blue arrows with some distribution. $A_3$ with the blue arrows is a rejecting end component because it does not contain an accepting state $s^\ast$. For $\beta < 1$ , the yellow action is not in the allowable action set of $A_1$ because there is a risk of entering $A_3$, strictly decreasing our probability of LTL satisfaction. (Center) Example for Infinite Loop, Figure \ref{fig:motivation} Left. (Right) Example for Safe Delivery, Figure \ref{fig:motivation} Right.}
\label{fig:intuition}
\end{figure}

\textbf{Solution Decomposition.} 
Consider the accepting states $s_1^\ast, s_2^\ast$ in Figure \ref{fig:intuition} (Left), which are the states we need to visit infinitely often to satisfy the specification.
First, let us identify the accepting maximal end components (AMECs) of $s_1^\ast$ and $s_2^\ast$: the state sets $A_1$ and $A_2$ (resp.) and their corresponding action sets $\A_{A_1}$ and $\A_{A_2}$ (the blue arrows in $A_1$ and $A_2$). Note that these AMECs do not include the yellow action in Figure \ref{fig:intuition} (Left), which has a chance of leaving $A_1$ and getting stuck in $A_3$.  

Our solution first runs a \textit{transient} policy until reaching $A_1$ or $A_2$, and then switches to a (probability-optimal) \textit{recurrent} policy that stays within $A_1$ or $A_2$ (resp.) while visiting $s_1^\ast$ or $s_2^\ast$ (resp.) infinitely often.
A probability-optimal \textit{recurrent} policy will select actions in $\A_{A_1}$ and $\A_{A_2}$ to visit $s_1^\ast, s_2^\ast$ infinitely often (e.g., the uniform policies with the AMECs $(A_1, \A_{A_1})$ and $(A_2, \A_{A_2})$).
Finding a \textit{transient} policy from $s_0$ to $A_1, A_2$ can be viewed as a reachability problem, which we can solve via a Stochastic Shortest Path (SSP) problem and leverage recent literature \cite{Tarbouriech2021, Kolobov2012DeadEnd}.


\textbf{Cost Optimality.} As stated in \ref{eqn:main_problem}, the goal is to find a cost-optimal policy within the set of probability-optimal policies.
For instance, the uniform policy over  $\A_{A_1}$ and $\A_{A_2}$ (the blue arrows in Figure \ref{fig:intuition} (Left) is probability optimal, but may not be cost optimal. Similarly, the unconstrained cost-optimal policy may not be probability optimal. Consider just $A_1$ for the moment. Suppose the cost of the arrows between the white nodes is $4$ while the other costs are $7$. Then the uniform (probability-optimal) policy in $A_1$ over $\A_{A_1}$ has cost $\frac{1}{2}\left(\frac{4 + 4}{2}\right) + \frac{1}{2}\left(\frac{7 + 7 + 4}{3}\right) = 5$. The gain-optimal policy that deterministically selects the actions between the white nodes $\tilde{\pi}$ has cost $\left(\frac{4 + 4}{2}\right) = 4$, but is not probability optimal. If we perturb $\tilde{\pi}$ to make it even slightly stochastic (but still mostly deterministic, i.e $\eta$-greedy with $\eta \approx 0$), then it will be arbitrarily close to gain optimality and also recover probability optimality. This is a preferable probability-optimal policy over the uniform policy. 

\textbf{Overall Procedure.} The high-level procedure is: (i) identify the AMECs (e.g. $(A_1, \A_{A_1}), (A_2, \A_2))$ by filtering out bad actions like the yellow arrow; (ii) find a cost-optimal (optimal gain cost) recurrent policy in each AMEC that visits some $s^\ast$ infinitely often; (iii) instantiate an SSP problem that finds a cost-optimal (optimal transient cost) transient policy from $s_0$ to $A_1 \cup A_2$ and avoids $A_3$; (iv) return a policy that stitches together the policies from $(ii)$ and $(iii)$.
See Section \ref{sec:algo}  for the algorithmic details. We show in Section \ref{sec:analysis} that this solution gives the optimal solution to \ref{eqn:main_problem}.

\subsection{Additional Assumptions and Definitions}\label{subsec:assumptions} 



Perhaps surprisingly, when planning with a simulator (i.e., generative model), even infinite data is insufficient to verify an LTL formula without having a known lower-bound on the lowest nonzero probability of the transition function $P$ \cite{Littman2017}. Without this assumption, LTL constrained policy learning is not learnable \cite{Yang2021Intractable}.  We thus begin by assuming a known lower bound on entries in $P$.\footnote{Our assumptions are consistent with the minimal requirements studied by \cite{Littman2017}}
%
%
\begin{assumption}[Lower Bound]\label{assump:lower_bound}
We assume we have access to a lower bound $\beta > 0$ on the lowest non-zero probability of the transition function $P$ (Sec.~\ref{sec:mdp}):
\begin{equation}
    0 < \beta \leq \min_{s,a,s' \in \SState\times\A\times\SState} \{P(s,a,s') | P(s,a,s') > 0\}.
\end{equation}
\end{assumption}

We assume that all the costs are strictly positive, avoiding zero-cost (or negative-cost) cycles that trap a policy. Leveraging cost-perturbations and prior work \cite{Tarbouriech2021} can remove the assumption.
\begin{assumption}[Bounds on cost function]\label{assump:cost_function}
The minimum cost $c_{\min} > 0$ (Sec.~\ref{sec:mdp}) is strictly positive.
\end{assumption}




Let $D = \{(s,a,s')\}$ be  all the collected samples $(s,a,s')$ while running the algorithm. At any point, $\widehat{P}(s,a,s') = \frac{|\{(s,a,s') \in D\}|}{|\{(s,a) \in D\}|}$ is the empirical frequency of visiting $s'$ from $(s,a)$. 
We  introduce an event $\mathcal{E}$ and error $\psi(n)$ to quantify uncertainty on $\widehat{P}(s,a,s')$ based on current data: $n(s,a) =  |\{(s,a) \in D\}|$. $\mathcal{E}$ is based on empirical Bernstein bounds \cite{Maurer2009}, and holds w.p.~$1-\delta$ (Lemma \ref{lem:event}).

\begin{restatable}[High Probability Event]{defn}{highProbEvent}
\label{def:high_prob_event} A high probability event $\mathcal{E}$:
$$\mathcal{E} = \{\forall s,a, s' \in S \times A \times S, \forall n(s,a) > 1: | (P(s,a,s') - \widehat{P}(s,a,s')) | \leq  \psi_{sas'}(n) \leq \psi(n) \},$$
where $\psi_{sas'}(n) \equiv \sqrt{2 \hat{P}(s,a,s')(1-\hat{P}(s,a,s')))\xi(n)} + \frac{7}{3}\xi(n)$, $\psi(n) \equiv  \sqrt{\frac{1}{2} \xi(n)} + \frac{7}{3}\xi(n)$, and $\xi(n) \equiv \log(\frac{4 n^2 |\SState|^2 |\A|}{\delta})/(n-1)$.
\end{restatable}

\begin{rem}
For some $\rho > 0$, if we require $|P(s,a,s') - \widehat{P}(s,a,s')| \leq \rho$ then we need $n(s,a) = \psi^{-1}(\rho)$ samples for state-action pair $(s,a)$. See Lemma \ref{lem:samples_req} for the quantity $\psi^{-1}(\rho)$. 
\end{rem}




\begin{restatable}[Plausible Transition Function]{defn}{ptf}
\label{def:plausible_transition} The set of plausible transition functions is given by
\begin{small}
\begin{equation}
    \mathcal{P} = \{ \tilde{P}: \SState \times \A \to \Delta(\SState) | \begin{cases}
    \tilde{P}(s,a,s') = \widehat{P}(s,a,s'), \quad &\widehat{P}(s,a,s') \in \{0,1\}\\
    \tilde{P}(s,a,s') \in \widehat{P}(s,a,s') \pm \psi_{sas'} \cap [\beta, 1-\beta], \quad &\text{otherwise}
    \end{cases}\}
\end{equation} 
\end{small}
\end{restatable}

Let $\mathcal{P}(s,a) \equiv \{P(s,a,\cdot) | P \in \mathcal{P}\}$ be the possible transition distributions for state-action pair $(s,a)$. We denote $P_\pi(s,s') = \mathbb{E}_{a \sim \pi}[P(s,a,s')]$ as the Markov chain given dynamics $P$ with policy $\pi$, and can be thought of as a $|\SState| \times |\SState|$ matrix $P_\pi = \{p_{ij}\}_{i,j=1}^{|\SState|}$.

\subsection{Main Algorithm: LTL Constrained Planning (LCP)}
\label{sec:algo}

\begin{minipage}{1\textwidth}
\begin{algorithm}[H]
	\begin{small}
	\caption{ LTL Constrained Planning (LCP)} 
	\label{algo:main}
	\begin{algorithmic}[1]
	    \REQUIRE Error $\epsilon_V > 0$, Error $\epsilon_{\varphi} > 0$, Tolerance $\delta > 0$, Lower bound $\beta > 0$ (see Assumption \ref{assump:lower_bound})
	    \STATE Globally, track $\widehat{P}(s,a,s') = \frac{|\{(s,a,s') \in D\}|}{|\{(s,a) \in D\}|}$ \hfill$\COMMENT{\text{Empirical estimate of } P}$
	    \STATE $((A_1, \A_{A_1}), \ldots, (A_m, \A_{A_k})) \leftarrow \texttt{FindAMEC}((\SState, \A, \widehat{P}))$
	    \FOR{$i = 1, \ldots, k$}
	       \STATE Set $\pi_i, g_i \leftarrow \texttt{PlanRecurrent}((A_i, \A_{A_i}), \frac{\epsilon_V}{7 \lambda})$ \hfill$\COMMENT{\text{Plan gain-optimal policy } \pi_i \text{ for } A_i}$
	    \ENDFOR
	    \STATE Set $\pi_0 \leftarrow \texttt{PlanTransient}(((A_1, g_1), \ldots,(A_k, g_k)), \frac{2\epsilon_V}{9})$ \hfill$\COMMENT{\text{Plan shortest paths policy } \pi_0 \text{ to } \cup_{i=1}^k A_i}$
	\RETURN $\pi = \cup_{i=0}^k \pi_i$
	\end{algorithmic}
	\end{small}
\end{algorithm}
\end{minipage}

Our approach, LTL Constrained Planning (LCP), has three  components, as shown in Algorithm \ref{algo:main} and described below.  Recall from Problem \ref{prob:main} that the policy optimization problem \ref{eqn:main_problem} is instantiated over a product MDP (Def.~\ref{def:product_mdp}), and that we are given a generative model of the true dynamics $P$ from which we can sample transitions $s'\sim P(s,a)$ for any state/action pair.

\textbf{Finding AMECs (\texttt{FindAMEC}).} After sampling each state-action pair $\phi_{\texttt{FindAMEC}} = O(\frac{1}{\beta})$ times (see Prop.~\ref{prop:support_verification}), by Assumption \ref{assump:lower_bound}, we can verify the support of $P$. We can compute all of the MECs using Algorithm 47 from \cite{modelchecking}. Among these MECs, we keep the AMECs, which amounts to checking if the MEC $(A_i, \A_{A_i})$ contains an accepting state $s^\ast\in \SState^\ast$ from the given product MDP.

\textbf{PlanRecurrent ($\texttt{PR}$).} 
To plan in each AMEC $(A, \A_A)$ (i.e., find the optimal recurrent 
\begin{wrapfigure}{l}{0.49\textwidth}
\vspace{-0.8cm}
\begin{minipage}{.48\textwidth}
\begin{algorithm}[H]
	\begin{small}
	\caption{$\texttt{PlanRecurrent}$ ($\texttt{PR}$) } 
	\label{algo:plan_recurrent}
	\begin{algorithmic}[1]
	    \REQUIRE AMEC $(A, \A_A)$, error $\epsilon_{\texttt{PR}} > 0$
	    \STATE Set $\rho \leftarrow 2 \psi(\phi_{\texttt{FindAMEC}}(\beta))$ \hfill$\COMMENT{\rho \sim \|P - \tilde{P}\|^{-1}_1}$
	    \REPEAT
	    \STATE Set $\rho \leftarrow \frac{\rho}{2}$
	    %
	    \STATE Sample $\psi^{-1}(\rho)$ times $\forall (s,a) \in A \times \A_A$\\
	    %
	    \STATE $v', v, \tilde{P} \leftarrow \texttt{VI}(\Bellman^{\alpha}_{\texttt{PR}}, d_{\texttt{PR}}, \epsilon^{\Bellman}_{\texttt{PR}})$ \hfill$\COMMENT{v' = \Bellman_{\texttt{PR}}^{\alpha}v}$
	    %
	\UNTIL{$\rho > \frac{\epsilon_{\texttt{PR}} (1 - \Delta(\tilde{P}))}{3 |A| c_{\max}}$ \hfill$\COMMENT{\|P - \tilde{P}\|_1 \textit{ small} }$}
	\smallskip
	\STATE Set policy $\pi \leftarrow \eta$-greedy policy w.r.t.~$v'$
	\STATE Set gain $g_\pi \leftarrow \frac{1}{2}\left(\max(v' - v) + \min(v'-v)\right)$ 
	\vspace{-0.13in}
	\RETURN $\pi, g_\pi$
	\end{algorithmic}
	\end{small}
\end{algorithm}
\end{minipage}
\vspace{-.5cm}
\end{wrapfigure}
policy), we use Alg.~\ref{algo:plan_recurrent} with (extended) relative value iteration ($\texttt{VI}$, Alg.~\ref{algo:VI} in appendix) using the optimistic Bellman operator $\Bellman^\alpha_{\texttt{PR}}$ (see Table \ref{tab:param}, we discuss $\alpha$ in next paragraph).
Let $\pi_v$ denote the greedy policy w.r.t.~the fixed point  $v =\Bellman^\alpha_{\texttt{PR}} v$ ($v$ is the optimistic value estimate). 
Using the $\eta$-greedy policy, $\pi \equiv (1-\eta)\pi_v + \eta\texttt{Unif}(\A_{A})$ (Alg.~\ref{algo:plan_recurrent}, Line 7), together with  $P_{\pi}$, makes $A$ recurrent: $s^\ast \in A$ is visited infinitely often and $\Prob{\pi \models \varphi| s_0 \in A} = 1$. Since $\eta$ can be arbitrarily small (Lemma \ref{lem:eta_greedy_approx}), then $g_{\pi} \approx g_{\pi_v}$ and $\pi$ is both cost and probability optimal. As intuited in Section \ref{subsec:intuition}, $\pi$ has full support over $\A_A$ but is nearly deterministic.\footnote{Typically, RL settings admit a fully deterministic optimal policy, but for LTL constrained policy optimization the optimal policy may not be deterministic (although can be very nearly so). See Cost Optimality in Section \ref{subsec:intuition}}

$\texttt{VI}$ in Line 5 of Alg.~\ref{algo:plan_recurrent} is an iterative procedure (Alg.~\ref{algo:VI} in appendix), and terminates via $d_{\texttt{PR}} < \epsilon^\Bellman_{\texttt{PR}}$ (Table \ref{tab:param}).
Convergence of extended $\texttt{VI}$ is guaranteed \cite{puterman2014markov, Jaksch2010, FruitUCRL22020}, so long as the dynamics, $\tilde{P} = \arg\min_{p \in \mathcal{P}(s,a)} p^T v$, achieving the inner minimization of $\Bellman^\alpha_{\texttt{PR}}$ are aperiodic -- hence the aperiodicity transform $\alpha\in(0,1)$ in $\Bellman^\alpha_{\texttt{PR}}$ \cite{puterman2014markov}. Computing $\tilde{P}$ can be done efficiently \cite{Jaksch2010} (Alg.~\ref{algo:inner_min_P} in appendix). For stability, we shift each entry of 
$v_n$ by the value of the first entry 
$v_n(0)$ \cite{bertsekas2011dynamic}. 

Alg.~\ref{algo:plan_recurrent} returns the average gain cost $g_\pi$ of policy $\pi$ when we have enough samples for each state-action pair in $(A, \A_A)$ to verify that $n > \psi^{-1}\left( \frac{\epsilon_{\texttt{PR}} (1 - \Delta(\tilde{P}_\pi))}{3 |A| c_{\max}} \right)$ where $\Delta(\tilde{P}_\pi) = \frac{1}{2} \max_{ij} \sum_{k} |\tilde{p}_{ik} - \tilde{p}_{jk}|.$
Here, $\Delta(\tilde{P}_\pi)$ is an easily computable measure on the ergodicity of the Markov chain $\tilde{P}_\pi$ \cite{Cho_Meyer_2001}. We track $\psi(n)$ (recall Def.~\ref{def:high_prob_event}) via a variable $\rho$ and sample $\psi^{-1}(\rho) \approx \frac{1}{\rho^2}$ (see Lemma \ref{lem:samples_req}) samples from each state-action pair in $(A, \A_A)$ (Alg.~\ref{algo:plan_recurrent}, Line 4). We halve $\rho$ each iteration (Alg.~\ref{algo:plan_recurrent}, Line 3) and convergence is guaranteed because $\rho$ will never fall below some unknown constant $\frac{\epsilon_{\texttt{PR}} (1- \bar{\Delta}_A)}{6|A|c_{\max}}$ (see Lemma \ref{lem:simulation_avg}); 
the halving trick is required because $\bar{\Delta}_A$ is unknown a priori.

\begin{restatable}[\texttt{PR} Convergence \& Correctness, Informal]{prop}{gainConvCorrectness}
\label{prop:plan_recurrent} 
Let $\pi_A$ be the gain-optimal policy in AMEC $(A, \A)$. Algorithm \ref{algo:plan_recurrent} terminates after at most $\log_2\left(\frac{6|A|c_{\max}}{\epsilon_{\texttt{PR}} (1- \bar{\Delta}_A)}\right)$ repeats, and collects at most $n = \tilde{\mathcal{O}}(\frac{|A|^2c_{\max}^2}{\epsilon_{\texttt{PR}}^2 (1- \bar{\Delta}_A)^2})$ samples for each $(s,a) \in (A, \A_A)$. The $\eta$-greedy policy $\pi$ w.r.t.~$v'$ (Alg.~\ref{algo:plan_recurrent}, Line 5) is gain optimal and probability optimal:
 $   |g_{\pi} - g_{\pi_A} | < \epsilon_{\texttt{PR}},$  $\Prob{\pi \models \varphi | s_0 \in A} = 1.$
\end{restatable}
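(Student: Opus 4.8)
The plan is to prove Proposition~\ref{prop:plan_recurrent} by decomposing the three claims --- termination, sample complexity, and the joint gain/probability optimality of the returned $\eta$-greedy policy --- and handling each with a separate tool. First I would establish \emph{termination}. The loop halves $\rho$ on each pass (Line~3) starting from $\rho = 2\psi(\phi_{\texttt{FindAMEC}}(\beta))$, and exits once $\rho \le \frac{\epsilon_{\texttt{PR}}(1-\Delta(\tilde P))}{3|A|c_{\max}}$. The stated bound replaces the data-dependent $\Delta(\tilde P)$ with the true mixing quantity $\bar\Delta_A$; I would invoke the simulation lemma (Lemma~\ref{lem:simulation_avg}) to argue that once $\|P-\tilde P\|_1$ is small enough, $\Delta(\tilde P_\pi)$ is within a constant factor of $\bar\Delta_A$, so $\rho$ is guaranteed never to drop below $\frac{\epsilon_{\texttt{PR}}(1-\bar\Delta_A)}{6|A|c_{\max}}$ (the factor-of-two slack absorbs the difference between the empirical and true $\Delta$). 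Since $\rho$ starts at a fixed value and halves each iteration, the number of repeats is at most $\log_2$ of the ratio of initial to final $\rho$, which gives the claimed $\log_2\!\left(\frac{6|A|c_{\max}}{\epsilon_{\texttt{PR}}(1-\bar\Delta_A)}\right)$ bound. The sample complexity then follows immediately: at termination $\rho = \tilde\Theta\!\left(\frac{\epsilon_{\texttt{PR}}(1-\bar\Delta_A)}{|A|c_{\max}}\right)$, and by Lemma~\ref{lem:samples_req} the number of samples per state-action pair is $\psi^{-1}(\rho) \approx \tfrac{1}{\rho^2}$, yielding $n = \tilde{\mathcal O}\!\left(\frac{|A|^2 c_{\max}^2}{\epsilon_{\texttt{PR}}^2(1-\bar\Delta_A)^2}\right)$.

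Next I would prove the \emph{gain-optimality bound} $|g_\pi - g_{\pi_A}| < \epsilon_{\texttt{PR}}$. The natural route is a triangle-inequality split into two errors: the gap between the true optimal gain $g_{\pi_A}$ and the gain $g_{\pi_v}$ of the greedy policy computed under the optimistic/plausible dynamics, and the gap between $g_{\pi_v}$ and $g_\pi$ introduced by the $\eta$-greedy perturbation. For the first gap I would rely on the convergence of extended relative value iteration under the optimistic Bellman operator $\Bellman^\alpha_{\texttt{PR}}$ (cited to \cite{puterman2014markov, Jaksch2010, FruitUCRL22020}): the span seminorm of $v'-v$ contracts, so $g_{\pi_v} \approx \tfrac12(\max(v'-v)+\min(v'-v))$, and the gain estimation error scales like $\|P-\tilde P\|_1$ times a sensitivity factor controlled by $\frac{|A|c_{\max}}{1-\Delta(\tilde P_\pi)}$ --- precisely the quantity driving the termination threshold, so plugging in $\rho$ at termination bounds this contribution. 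For the second gap I would cite Lemma~\ref{lem:eta_greedy_approx} to make the $\eta$-greedy perturbation contribute negligibly (taking $\eta\to 0$), so $g_\pi \approx g_{\pi_v}$. Summing the two controlled errors gives the $\epsilon_{\texttt{PR}}$ bound, with the constants (the $3$ in the threshold) chosen to split the budget across the error sources.

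Finally, the \emph{probability-optimality} claim $\Prob{\pi \models \varphi \mid s_0 \in A}=1$ is the most structural and, I expect, the cleanest part. Because $(A,\A_A)$ is an AMEC, it is by Definition~\ref{def:end_component} strongly connected and closed under every action in $\A_A$, and it contains an accepting state $s^\ast \in \SState^\ast$. The $\eta$-greedy policy $\pi = (1-\eta)\pi_v + \eta\,\texttt{Unif}(\A_A)$ has full support over $\A_A$ for any $\eta>0$, so the induced Markov chain $P_\pi$ restricted to $A$ is irreducible on $A$; an irreducible finite Markov chain visits every state, in particular $s^\ast$, infinitely often almost surely, which is exactly the Büchi acceptance condition. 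Hence every run stays in $A$ and satisfies $\varphi$ with probability one.

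The main obstacle I anticipate is the termination/sample-complexity argument, specifically reconciling the \emph{empirical, data-dependent} stopping condition (which uses $\Delta(\tilde P)$) with the \emph{true, a-priori-unknown} quantity $\bar\Delta_A$ appearing in the bound. The halving trick exists precisely because $\bar\Delta_A$ cannot be computed in advance, and making the argument rigorous requires showing that the simulation lemma controls the deviation $|\Delta(\tilde P_\pi) - \bar\Delta_A|$ uniformly as $\rho$ shrinks, so that the empirical threshold brackets the ideal one up to the constant factor that the $\log_2$ bound tolerates. Coupling this concentration argument (which holds only on the high-probability event $\mathcal{E}$ of Definition~\ref{def:high_prob_event}) with the contraction guarantee of extended relative value iteration --- and tracking how the ergodicity factor $1-\Delta(\tilde P_\pi)$ feeds into the gain-sensitivity bound --- is where the real work lies; the optimality and satisfaction claims are comparatively routine given the cited lemmas.
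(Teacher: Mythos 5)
Your proposal is correct and follows essentially the same route as the paper: the same halving-trick termination argument with the unknown $\bar\Delta_A$, the same telescoping of the gain error through the simulation lemma (Lemma \ref{lem:simulation_avg}), extended value iteration convergence under the optimistic operator, and the $\eta$-greedy perturbation lemma (Lemma \ref{lem:eta_greedy_approx}), and the identical irreducibility/recurrence argument for $\Prob{\pi \models \varphi \mid s_0 \in A} = 1$. The only cosmetic difference is that you group the paper's four-term decomposition (which isolates the optimism inequality $g^{\tilde P}_{\tilde\pi_A} \leq g^P_{\pi_A}$ as a separate nonpositive term) into two gaps.
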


\begin{table*}[t]
\vspace{-0.15in}
\caption{Subroutine Operators and Parameters for Value Iteration}
\label{tab:param}
\vspace{-0.1in}
\begin{center}
\begin{small}
\begin{tabular}{ll}
\toprule
Op/Param & Description \\
 \midrule
$\Bellman^\alpha_{\texttt{PR}}v(s)$ & $\min_{a \in \A_A(s)} \big(\mathcal{C}(s,a) + \alpha \min_{p \in \mathcal{P}(s,a)} p^T v \big) + (1-\alpha) v(s) \quad \forall s \in A$ \\
$d_{\texttt{PR}}(v_{n+1}, v_n) < \epsilon^{\Bellman}_{\texttt{PR}}$ & $\max_{s \in A}(v_{n+1}(s) - v_n(s)) - \min_{s \in A}(v_{n+1}(s) - v_n(s)) < \frac{2\epsilon_{\texttt{PR}}}{3}$ \\
$\Bellman_{\texttt{PT}} v(s)$ & $\begin{cases} \min\left\{\min_{a \in \A_A(s)} \left( \mathcal{C}(s,a) + \min_{p \in \mathcal{P}(s,a)}p^T v \right), \bar{V} / \epsilon_{\varphi}\right\},& \quad s\in \SState\setminus \cup_{i=1}^k A_i \\
\lambda g_i,& \quad s \in A_i
\end{cases}$\\
$d_{\texttt{PT}}(v_{n+1}, v_n) < \epsilon^{\Bellman}_{\texttt{PT}}$ & $\|v_{n+1} - v_n\|_1 < c_{\min} \epsilon_{\texttt{PT}} \epsilon_{\varphi} / (4 \bar{V})$ \\
 \bottomrule
\end{tabular}
\end{small}
\end{center}
\vspace{-0.1in}
 \end{table*}

\begin{wrapfigure}{l}{0.49\textwidth}
\vspace{-0.8cm}
\begin{minipage}{.48\textwidth}
\begin{algorithm}[H]
	\begin{small}
	\caption{$\texttt{PlanTransient}$ ($\texttt{PT}$) } 
	\label{algo:plan_transient}
	\begin{algorithmic}[1]
	    \REQUIRE States \& gains: $\{(A_i, g_i)\}_{i=1}^k$, err. $\epsilon_{\texttt{PT}} > 0$
	    \STATE Set $V_T(s) = \lambda g_i$ for $s \in A_i$ \hfill$\COMMENT{\textit{Terminal costs}}$
	    %
	    \STATE Sample $\phi_{\texttt{PT}}$ times $\forall (s,a)\in (\SState \setminus \cup A_i) \times \A$ 
	 %
	    \STATE $v', v, \tilde{P}  \leftarrow \texttt{VI}(\Bellman_{\texttt{PT}}, d_{\texttt{PT}}, \epsilon^\Bellman_{\texttt{PT}}, V_T)$ \hfill$\COMMENT{v' = \Bellman_{\texttt{PT}}v}$
	    %
	\STATE Set $\pi \leftarrow $greedy policy w.r.t $v'$
	\RETURN $\pi$
	\end{algorithmic}
	\end{small}
\end{algorithm}
\end{minipage}
\vspace{-.2cm}
\end{wrapfigure}
\textbf{PlanTransient (\texttt{PT}).} 
This is the stochastic shortest path (SSP) reduction step that finds a policy from the initial state $s_0$ to the AMECs (Alg.~\ref{algo:plan_transient}).
The main algorithmic tool used by  \texttt{PlanTransient}  is similar to that of \texttt{PlanRecurrent}: it also uses extended value iteration ($\texttt{VI}$, Alg.~\ref{algo:VI} in appendix) but with a different optimistic Bellman operator $\Bellman_{\texttt{PT}}$ (Table \ref{tab:param}), and then returns a (fully deterministic) greedy policy w.r.t.~the resulting optimistic value $v$ (Alg.~\ref{algo:plan_transient}, Line 4).
$\Bellman_{\texttt{PT}}$ is used to calculate the highest probability, lowest cost path to the AMECs (Alg.~\ref{algo:plan_transient}, Line 3).

Since rejecting end components might exist (see $A_3$ from Figure \ref{fig:intuition} (Left)),
a trajectory may end up stuck and accumulate cost indefinitely, and so we must bound $\|v\|_\infty < \bar{V}/\epsilon_{\varphi}$ to prevent blow up. In Prop.~\ref{prop:select_V}, we show how to select $\bar{V}$ such that $\pi$ will reach the target states (in this case, the AMECs), first with high prob and then with lowest cost. The existence of such a bound on $\|v\|_\infty$ was shown to exist, without construction, in \cite{Kolobov2012DeadEnd}. In practice, choosing a large $\bar{V}$ is enough. 

The terminal costs $V_T$ (Alg.~\ref{algo:plan_transient}, Line 1) together with Bellman equation $\Bellman_{\texttt{PT}}$ has value function $\tilde{V}_{\pi} \approx p(J_{\pi} + \frac{1}{p}\sum_{i=1}^k p_i g_{\pi_i}) + (1-p) \bar{V}/\epsilon_{\varphi} \approx V_{\pi}$, relating to $V_{\pi}$ \eqref{eqn:main_problem}, see Section \ref{sec:app:overview}. Here, $p_i = \Prob{\pi \text{ reaches } A_i} \equiv \mathbb{E}_{\tau \sim \Tau_{\pi}^P}[1_{\exists s \in \tau \text{ s.t } s \in A_i}]$ and $p = \sum_{i=1}^k p_i$. $\texttt{VI}$ converges when $d_{\texttt{PT}} < \epsilon_{\texttt{PT}}$ (see Table \ref{tab:param}). Convergence of extended VI for SSP is guaranteed \cite{Tarbouriech2021, Kolobov2012DeadEnd}. The number of samples required for each state-action pair $(s,a) \in (\SState \setminus \cup A_i) \times \A$ is $\phi_{\texttt{PT}} = \psi^{-1}\left(\frac{c_{\min} \epsilon_{\texttt{PT}}
\epsilon_{\varphi}^2 
}{14 |\SState \setminus \cup_{i=1}^k A_i| \bar{V}^2}\right)$. 
\begin{restatable}[\texttt{PlanTransient} Convergence \& Correctness, Informal]{prop}{transientConvCorrectness}
\label{prop:plan_transient}
Denote the cost- and prob-optimal policy as $\pi'$. After collecting at most $n = \tilde{\mathcal{O}}(\frac{|\SState \setminus \cup_{i=1}^k A_i|^2 \bar{V}^4}{c_{\min}^2 
\epsilon_{\texttt{PT}}^2 
\epsilon_{\varphi}^4})$ samples for each $(s,a) \in (\SState \setminus \cup_{i=1}^k A_i) \times \A$, the greedy policy $\pi$ w.r.t.~$v'$ (Alg.~\ref{algo:plan_transient}, Line 3) is both cost and probability optimal:
\begin{equation*}
    \|\tilde{V}_{\pi} - \tilde{V}_{\pi'} \| < \epsilon_{\texttt{PT}}, \quad |\Prob{\pi \text{ reaches } \cup_{i=1}^k A_i} - \Prob{\pi' \text{ reaches } \cup_{i=1}^k A_i}| \leq \epsilon_{\varphi}.
\end{equation*}
\end{restatable}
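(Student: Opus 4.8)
The plan is to decompose the cost error $\|\tilde{V}_{\pi} - \tilde{V}_{\pi'}\|$ via a triangle inequality through the optimistic fixed point of $\Bellman_{\texttt{PT}}$, and to handle the reachability guarantee separately using the value cap $\bar{V}/\epsilon_{\varphi}$. Throughout I would condition on the high-probability event $\mathcal{E}$ (Definition \ref{def:high_prob_event}), which holds with probability $1-\delta$ and guarantees that the true dynamics $P$ lie in the plausible set $\mathcal{P}$ (Definition \ref{def:plausible_transition}) at every state--action pair sampled $\phi_{\texttt{PT}}$ times. Under $\mathcal{E}$ the inner minimization $\min_{p\in\mathcal{P}(s,a)} p^T v$ inside $\Bellman_{\texttt{PT}}$ is optimistic, so its fixed point $v^\ast$ lower-bounds the true capped optimal value and no policy can beat $v^\ast$ under $P$.

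First I would establish convergence of the extended VI for the SSP operator $\Bellman_{\texttt{PT}}$. The cap at $\bar{V}/\epsilon_{\varphi}$ forces $\|v\|_\infty \le \bar{V}/\epsilon_{\varphi}$ even when rejecting end components exist (e.g.\ $A_3$ in Figure \ref{fig:intuition}), which would otherwise drive the accumulated cost to infinity; this regularizes the SSP so that extended VI converges in the sense of \cite{Tarbouriech2021, Kolobov2012DeadEnd}. The stopping test $\|v_{n+1}-v_n\|_1 < c_{\min}\epsilon_{\texttt{PT}}\epsilon_{\varphi}/(4\bar{V})$ (Table \ref{tab:param}) then bounds the residual $\|v' - v^\ast\|$ between the returned iterate and the fixed point by a small multiple of $\epsilon_{\texttt{PT}}$.

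The main technical work is a simulation argument relating the value $\tilde{V}_\pi$ that the greedy policy $\pi$ actually achieves under the true $P$ to the optimistic fixed point $v^\ast$. Each backup contributes a per-step error at most $\|P_\pi(s,\cdot) - \tilde{P}_\pi(s,\cdot)\|_1\,\|v\|_\infty \le |\SState\setminus\cup_i A_i|\,\rho\,\bar{V}/\epsilon_{\varphi}$, where $\rho$ is the per-component transition error (the state-count factor arises from converting the componentwise bound $\psi_{sas'}$ to an $\ell_1$ bound). Since costs are at least $c_{\min}>0$ (Assumption \ref{assump:cost_function}) and the value is capped, the effective horizon — the expected number of transient steps before hitting $\cup_i A_i$ — is at most $\bar{V}/(c_{\min}\epsilon_{\varphi})$, so propagating and summing gives a total error scaling like $|\SState\setminus\cup_i A_i|\,\rho\,\bar{V}^2/(c_{\min}\epsilon_{\varphi}^2)$. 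Choosing $\rho = \psi(\phi_{\texttt{PT}}) \approx c_{\min}\epsilon_{\texttt{PT}}\epsilon_{\varphi}^2/(14\,|\SState\setminus\cup_i A_i|\,\bar{V}^2)$ drives this below a constant fraction of $\epsilon_{\texttt{PT}}$, and since $\psi^{-1}(\rho)\approx 1/\rho^2$ (Lemma \ref{lem:samples_req}) this is exactly the claimed count $n=\tilde{\mathcal{O}}(|\SState\setminus\cup_i A_i|^2\bar{V}^4/(c_{\min}^2\epsilon_{\texttt{PT}}^2\epsilon_{\varphi}^4))$. Chaining the VI residual, the optimism lower bound, and this simulation error through $\|\tilde{V}_\pi - \tilde{V}_{\pi'}\| \le \|\tilde{V}_\pi - v'\| + \|v' - v^\ast\| + \|v^\ast - \tilde{V}_{\pi'}\|$ yields the cost guarantee $\|\tilde{V}_\pi - \tilde{V}_{\pi'}\| < \epsilon_{\texttt{PT}}$.

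For the reachability guarantee I would invoke Proposition \ref{prop:select_V}: $\bar{V}$ is chosen large enough that, under $\Bellman_{\texttt{PT}}$, any policy whose probability of reaching $\cup_i A_i$ undershoots the optimum by more than $\epsilon_{\varphi}$ pays at least $\bar{V}/\epsilon_{\varphi}$ at the states from which it fails, rendering it strictly suboptimal in the capped objective; hence the optimistic-cost-optimal $\pi$ is forced within $\epsilon_{\varphi}$ of the optimal reachability probability. Writing $\tilde{V}_\pi \approx p\big(J_\pi + \tfrac1p\sum_i p_i g_{\pi_i}\big) + (1-p)\bar{V}/\epsilon_{\varphi}$ makes the separation between the reachability-weighted transient term and the failure term explicit. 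The main obstacle is precisely this coupling: the SSP with dead-ends has no finite optimal value, so the cap simultaneously regularizes the problem and encodes the reachability preference, and the delicate step is showing the resulting bias is benign — that a single capped-cost objective recovers near-optimal reachability probability and near-optimal conditional transient cost at once, which is where the specific scaling $\bar{V}/\epsilon_{\varphi}$, rather than an arbitrary large constant, is essential.
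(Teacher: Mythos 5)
Your proposal is correct and follows essentially the same route as the paper's proof: the same two-term decomposition into a simulation error (value of $\pi$ under $P$ versus $\tilde{P}$, handled by Lemma \ref{lem:simulation_transient} with the choice $\psi(n)\approx c_{\min}\epsilon_{\texttt{PT}}\epsilon_{\varphi}^2/(14\,|\SState\setminus\cup_i A_i|\,\bar{V}^2)$) plus an optimism/VI-residual term (Lemma \ref{lem:EVI_bound} with $\epsilon^{\Bellman}_{\texttt{PT}} = c_{\min}\epsilon_{\texttt{PT}}\epsilon_{\varphi}/(4\bar{V})$), with the reachability guarantee delegated to Proposition \ref{prop:select_V} exactly as in the paper. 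The only cosmetic difference is that you unpack the per-step-error-times-effective-horizon argument inside the simulation lemma, which the paper instead imports from \cite{Tarbouriech2021}.
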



\section{End-To-End Guarantees}
\label{sec:analysis}

The number of samples necessary to guarantee an $(\epsilon_{V}, \epsilon_{\varphi}, \delta)$-PAC approximation to the cost-optimal and probability-optimal policy relies factors: $\beta$ (lower bound on the min. non-zero transition probability of $P$), $\{c_{\min}, c_{\max}\}$ (bounds on the cost function $\mathcal{C}$), $\bar{\Delta}_{A_i}$ (worst-case coefficient of ergodicity for EC $(A_i, \A_{A_i})$), $\bar{V}$ (upper bound on the value function), and $\lambda$ (tradeoff factor).

\begin{restatable}[Sample Complexity]{thm}{generalization}
\label{thm:generalization}
Under the event $\mathcal{E}$, Assumption \ref{assump:lower_bound} and \ref{assump:cost_function}, after 
\begin{equation*}
n = \tilde{\mathcal{O}}\left(\frac{1}{\beta} + \frac{1}{\epsilon_V^2} \left( \frac{|\SState|^2 \bar{V}^4}{ c_{\min}^2 \epsilon_{\varphi}^4} + \lambda^2 \sum_{i=1}^k \frac{|A_i|^2 c_{\max}^2}{(1-\bar{\Delta}_{A_i})^2} \right) \right)
\end{equation*}
samples\footnote{
The lower bound relating to $\beta$ from \cite{Littman2017} is $\Omega(\frac{\log(2\delta)}{\log(1-\beta)})$ whereas ours is $\tilde{O}(\frac{1}{\beta})$. We conjecture that $\tilde{\Omega}(\frac{1}{\beta})$ samples is required. See Appendix Section \ref{app:conjecture}.} are collected from each state-action pair, the policy $\pi$ returned by Algorithm \ref{algo:main} is, with probability $1-\delta$, simultaneously $\epsilon_V$-cost optimal and $\epsilon_\varphi$-probability optimal, satisfying:
\begin{equation}\label{eq:guarantees}
(i) \;\; |\Prob{\pi \models \varphi} - \Prob{\pi^\ast \models \varphi}| \leq \epsilon_{\varphi} \quad (ii)\;\; \|V_\pi - V_{\pi^\ast}\|_{\infty} < \epsilon_{V}.
\end{equation}
\end{restatable}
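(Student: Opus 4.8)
The plan is to condition on the high-probability event $\mathcal{E}$ and then compose the correctness guarantees of the three subroutines (\texttt{FindAMEC}, \texttt{PlanRecurrent}, \texttt{PlanTransient}) in the order Algorithm \ref{algo:main} invokes them, tracking how the per-subroutine error budgets sum to the final $\epsilon_V$ and $\epsilon_\varphi$ targets. By Lemma \ref{lem:event}, $\mathcal{E}$ holds with probability $1-\delta$, so for the rest of the argument we work deterministically under $\mathcal{E}$; the final failure probability is absorbed entirely here. First I would invoke the support-verification result (Prop.~\ref{prop:support_verification}): after $O(1/\beta)$ samples per state-action pair, Assumption \ref{assump:lower_bound} guarantees that every nonzero transition of $P$ has been observed, so the support of $\widehat{P}$ equals that of $P$. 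Since membership in an end component depends only on the support of the transition function, the MECs---and hence the AMECs $(A_i,\A_{A_i})$---computed from $\widehat{P}$ coincide exactly with those of the true MDP. This is the step that makes the subsequent planning well-posed.

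The key structural fact I would establish next is the reduction to reachability sketched in Section \ref{subsec:intuition}: a run in the product MDP $\X$ satisfies $\varphi$ if and only if it eventually enters an AMEC and visits an accepting state infinitely often. Because each recurrent policy $\pi_i$ returned by \texttt{PlanRecurrent} achieves $\Prob{\pi \models \varphi \mid s_0 \in A_i} = 1$ (Prop.~\ref{prop:plan_recurrent}), the overall satisfaction probability of the stitched policy $\pi = \cup_i \pi_i$ collapses to the reachability probability, $\Prob{\pi \models \varphi} = \Prob{\pi \text{ reaches } \cup_i A_i}$, and the same identity holds for $\pi^\ast$. Probability optimality (guarantee $(i)$) then follows from the reachability guarantee of Prop.~\ref{prop:plan_transient}, namely $|\Prob{\pi \text{ reaches } \cup_i A_i} - \Prob{\pi' \text{ reaches } \cup_i A_i}| \leq \epsilon_\varphi$, once I argue that the reachability-optimal transient policy $\pi'$ and the true cost/probability-optimal policy $\pi^\ast$ agree on reachability probability.

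For the cost guarantee $(ii)$ I would decompose the objective using the identity $\tilde{V}_\pi \approx p\big(J_\pi + \tfrac{1}{p}\sum_i p_i g_{\pi_i}\big) + (1-p)\bar{V}/\epsilon_\varphi$ that relates the quantity \texttt{PlanTransient} actually minimizes to $V_\pi$ of \eqref{eqn:main_problem}. Here the two error sources enter separately: the recurrent gains are off by at most $\epsilon_{\texttt{PR}} = \epsilon_V/(7\lambda)$ each (Prop.~\ref{prop:plan_recurrent}), contributing at most $\lambda \cdot \epsilon_V/(7\lambda) = \epsilon_V/7$ to $\lambda g_\pi$, while the transient value is within $\epsilon_{\texttt{PT}} = 2\epsilon_V/9$ (Prop.~\ref{prop:plan_transient}). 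Summing these contributions together with the truncation slack from bounding $\|v\|_\infty < \bar{V}/\epsilon_\varphi$ (whose admissibility is Prop.~\ref{prop:select_V}) yields $\|V_\pi - V_{\pi^\ast}\|_\infty < \epsilon_V$ after checking that the constants $1/7$, $2/9$, and the residual slack sum to at most $1$.

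The sample complexity is then just the sum over the three stages of their individual requirements, taken as a maximum per state-action pair: $O(1/\beta)$ from \texttt{FindAMEC}; $\tilde{\mathcal{O}}(|A_i|^2 c_{\max}^2 / (\epsilon_{\texttt{PR}}^2(1-\bar{\Delta}_{A_i})^2)) = \tilde{\mathcal{O}}(\lambda^2 |A_i|^2 c_{\max}^2 / (\epsilon_V^2(1-\bar{\Delta}_{A_i})^2))$ from \texttt{PlanRecurrent} on each $A_i$; and $\tilde{\mathcal{O}}(|\SState \setminus \cup A_i|^2 \bar{V}^4 / (c_{\min}^2 \epsilon_{\texttt{PT}}^2 \epsilon_\varphi^4)) = \tilde{\mathcal{O}}(|\SState|^2 \bar{V}^4/(c_{\min}^2 \epsilon_V^2 \epsilon_\varphi^4))$ from \texttt{PlanTransient}, which add to the stated bound after substituting the two error budgets. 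I expect the main obstacle to be the cost decomposition in the third paragraph: carefully controlling how the $(1-p)\bar{V}/\epsilon_\varphi$ stuck-cost term interacts with the reachability error $\epsilon_\varphi$, so that an $\epsilon_\varphi$-suboptimal reachability probability perturbs the normalized value $V_\pi$ by only $O(\epsilon_V)$ rather than blowing up through the $1/\epsilon_\varphi$ factor. This is precisely where the choice of $\bar{V}$ in Prop.~\ref{prop:select_V}---forcing the transient policy to secure reachability before optimizing cost---must be used to keep the two error scales decoupled.
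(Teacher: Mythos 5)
Your proposal follows essentially the same route as the paper's proof: condition on $\mathcal{E}$, verify the support to recover the AMECs exactly, reduce LTL satisfaction to reachability of $\cup_i A_i$, and split the value error across the \texttt{PlanRecurrent} and \texttt{PlanTransient} budgets ($\epsilon_{\texttt{PR}} = \epsilon_V/(7\lambda)$, $\epsilon_{\texttt{PT}} = 2\epsilon_V/9$) before summing the per-stage sample counts. The one step you assert rather than argue is that $\pi^\ast$ can be taken to have its accepting end components coincide with the computed AMECs, with conditional gain $\sum_i (p_i^\ast/p)\, g^P_{\pi_{A_i}}$ --- the paper establishes this up front via an explicit exchange argument (replacing $\pi^\ast$ inside each $A_i$ by the gain-optimal $\pi_{A_i}$ without changing value or satisfaction probability), and this is what makes both the reachability identity for $\pi^\ast$ and the bound on $|\tilde{V}_{\pi^\ast} - V_{\pi^\ast}|$ go through.
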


With a sufficiently large $\lambda$ (which may not be verifiable in practice), $\pi$ is also gain optimal. 



\begin{restatable}[Gain (Average Cost) Optimality]{cor}{gain}
\label{cor:lambda}
There exists  $\lambda^\ast > 0$ s.t.~for $\lambda > \lambda^\ast$, the policy $\pi$ returned by Alg.~\ref{algo:main} satisfies \eqref{eq:guarantees}, $g_{\pi} = \arg\min_{\pi' \in \Pi_{\max}} g_{\pi'}$, and is probability and gain optimal.
\end{restatable}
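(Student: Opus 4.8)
The plan is to exploit that, restricted to $\Pi_{\max}$, the satisfaction probability $\Prob{\pi \models \varphi}$ is a fixed constant $p^\ast = \Prob{\pistar \models \varphi}$, so minimizing the normalized value $V^P_{\pi,\lambda} = (J_\pi + \lambda g_\pi)\Prob{\pi \models \varphi}$ over $\Pi_{\max}$ is equivalent to minimizing $J_\pi + \lambda g_\pi$. Thus \eqref{eqn:main_problem} becomes a single-parameter family of linear tradeoffs between transient cost and gain, and the claim is that the minimizer of this family coincides with the pure gain minimizer once $\lambda$ is large enough.

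First I would characterize the achievable pairs $(g_\pi, J_\pi)$ as $\pi$ ranges over $\Pi_{\max}$. Using the solution decomposition of Section \ref{subsec:intuition}, every probability-optimal policy is determined by a transient part reaching the AMECs $\{A_i\}$ with reaching distribution $\{p_i\}$ ($\sum_i p_i = p^\ast$) together with recurrent parts inside each $A_i$. The gain contributed by $A_i$ is minimized by its gain-optimal recurrent policy, whose value $g_i$ is a constant independent of $\lambda$ (computed by PlanRecurrent, Prop.~\ref{prop:plan_recurrent}); hence the conditional gain is the linear functional $g_\pi = (1/p^\ast)\sum_i p_i g_i$, and $J_\pi$ is a linear functional of the transient occupation measure. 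Both live over the polytope of feasible transient occupation measures / reaching distributions, so the attainable region $\{(g_\pi,J_\pi)\}$ is a convex polytope (randomizing over deterministic transient policies), whose lower-left Pareto frontier is piecewise linear with finitely many vertices.

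Then the existence of $\lambda^\ast$ is a standard parametric-LP argument: minimizing the linear objective $J + \lambda g$ over a fixed polytope, the optimal vertex is piecewise constant in $\lambda$ with only finitely many breakpoints, and as $\lambda \to \infty$ the objective direction rotates toward pure gain minimization, so beyond the largest breakpoint $\lambda^\ast$ the minimizer is exactly the vertex achieving $g^\ast := \min_{\pi' \in \Pi_{\max}} g_{\pi'}$ (ties broken by smallest $J$). This already gives $g_{\pistar_\lambda} = g^\ast$ for all $\lambda > \lambda^\ast$, i.e.~the true optimum of \eqref{eqn:main_problem} is exactly gain optimal. To transfer this to the policy $\pi$ returned by Algorithm \ref{algo:main}, I would note that $\pi$ inherits gain-optimal recurrent behavior from PlanRecurrent, while PlanTransient (Prop.~\ref{prop:plan_transient}) solves the SSP with terminal costs $\lambda g_i$; for $\lambda > \lambda^\ast$ the $\lambda g_i$ terms dominate the bounded transient costs, so the greedy transient policy selects exactly the gain-minimizing routing. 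Because the choice of which AMECs to target is discrete, there is a strict gap between the gain-optimal routing and any other, so this discrete decision is made correctly; combined with the $\epsilon_V$- and $\epsilon_\varphi$-guarantees of Theorem \ref{thm:generalization}, this yields \eqref{eq:guarantees} together with $g_\pi = g^\ast$.

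The main obstacle is establishing \emph{exact} (rather than approximate) gain optimality at a finite $\lambda$: a naive gap argument fails because stochastic policies make the set of achievable gains a continuum with no uniform separation from $g^\ast$, so near-gain-optimal policies with smaller $J$ could beat the gain optimum for any fixed $\lambda$. The resolution is precisely the polyhedral structure — linearity of $g_\pi$ and $J_\pi$ in the occupation measure forces the minimizer to a vertex and makes the parametric family have finitely many breakpoints — which is what lets a single threshold $\lambda^\ast$ work for all larger $\lambda$. A secondary technical point is compatibility with the truncation $\|v\|_\infty < \bar{V}/\epsilon_\varphi$ in PlanTransient: we must keep $\bar{V}/\epsilon_\varphi > \lambda \max_i g_i$ so that reaching an AMEC (probability optimality) is always prioritized over the gain term, which either bounds $\lambda^\ast$ in terms of $\bar{V}$ or requires $\bar{V}$ to scale with $\lambda$.
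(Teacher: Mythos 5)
Your proposal is correct in its essentials but takes a genuinely different route from the paper's. The paper argues by direct pairwise comparison: letting $\pi'$ be the gain minimizer over $\Pi_{\max}$ and $\pi$ any competitor with $g_{\pi} > g_{\pi'}$, it bounds $V_{\pi',\lambda} - V_{\pi,\lambda} \le \max_{\tilde{\pi}\in\Pi} J_{\tilde{\pi}} + \lambda(g_{\pi'}-g_{\pi})$, concludes $\pi'$ must be preferred once $\lambda > \max_{\tilde{\pi}} J_{\tilde{\pi}}/(g_{\pi}-g_{\pi'})$, and then sets $\lambda^\ast = \max_{\tilde{\pi}\in\Pi} J_{\tilde{\pi}} / \min_{\{\pi : g_{\pi}\neq g_{\pi'}\}}(g_{\pi}-g_{\pi'})$. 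That is shorter and yields an explicit (if uncomputable) threshold, but it silently assumes the denominator --- the gap between the optimal gain and the nearest distinct achievable gain --- is positive, which is exactly the obstacle you flag: over stochastic policies the achievable gains form a continuum and that infimum can be $0$. Your parametric-LP argument (linearity of $g_\pi$ and $J_\pi$ over the polytope of occupation measures, hence a vertex minimizer and finitely many breakpoints in $\lambda$) supplies precisely the missing justification; equivalently, it shows the paper's min may be restricted to the finitely many deterministic-vertex gains. So your route costs more machinery but buys rigor on the one step the paper elides. Your secondary point that $\bar{V}$ must scale with $\lambda$ so the truncation $\bar{V}/\epsilon_{\varphi}$ does not override the terminal costs is also well taken and consistent with the paper's choice of $\bar{V}$ in Prop.~\ref{prop:select_V}. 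One caveat shared by both arguments: the corollary is stated for the policy returned by Algorithm~\ref{algo:main}, which is only $\epsilon_V$-cost-optimal, so exact gain optimality is really established for the true optimizer $\pi^\ast_\lambda$ and transfers to the returned policy only up to the \texttt{PR} tolerance; neither the paper's proof nor yours fully discharges that last step, though you at least acknowledge it.
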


The high-level structure of our analysis follows the algorithm structure in Section \ref{sec:algo}, via composing the constituent guarantees. To complete the analysis, we develop some technical tools which may be of independent interest, including a gain simulation Lemma \ref{lem:simulation_avg} and an $\eta$-greedy optimality Lemma \ref{lem:eta_greedy_approx}.
For ease of exposition, we also ignore paths between AMECs (see Appendix \ref{app:sec:blocking}).
\vspace{-.3cm}
\section{Empirical Analysis} \label{sec:experiments}
\vspace{-.3cm}

We perform experiments in two domains: (1) Pacman domain where an agent finds food and indefinitely avoids a ghost; (2) discretized version of mountain car (MC) \cite{openAI} where the agent must reach the flag. Our goal is to understand whether: (i)  our LCP approach (Alg.\ref{algo:main}) produces 
competitive polices; (ii) LCP can work in continuous state spaces through discretization; (iii) LCP can enjoy efficient sample complexity in practice. For a baseline, we use Logically Constrained RL (LCRL, \cite{Hasanbeig2018lcrl}), which is a Q-learning approach to LTL-constrained PO in unknown MDPs. We also do heavy cost shaping to LCRL as another baseline. See App \ref{app:experiments} for more details, experiments, and figures.

\subsection{Results}

\textbf{Competitiveness of the policy in full LTL specs?} The probability of LCP satisfying the LTL \begin{wrapfigure}{r}{0.55\textwidth}
\vspace{-0.15in}
\includegraphics[width=\linewidth]{./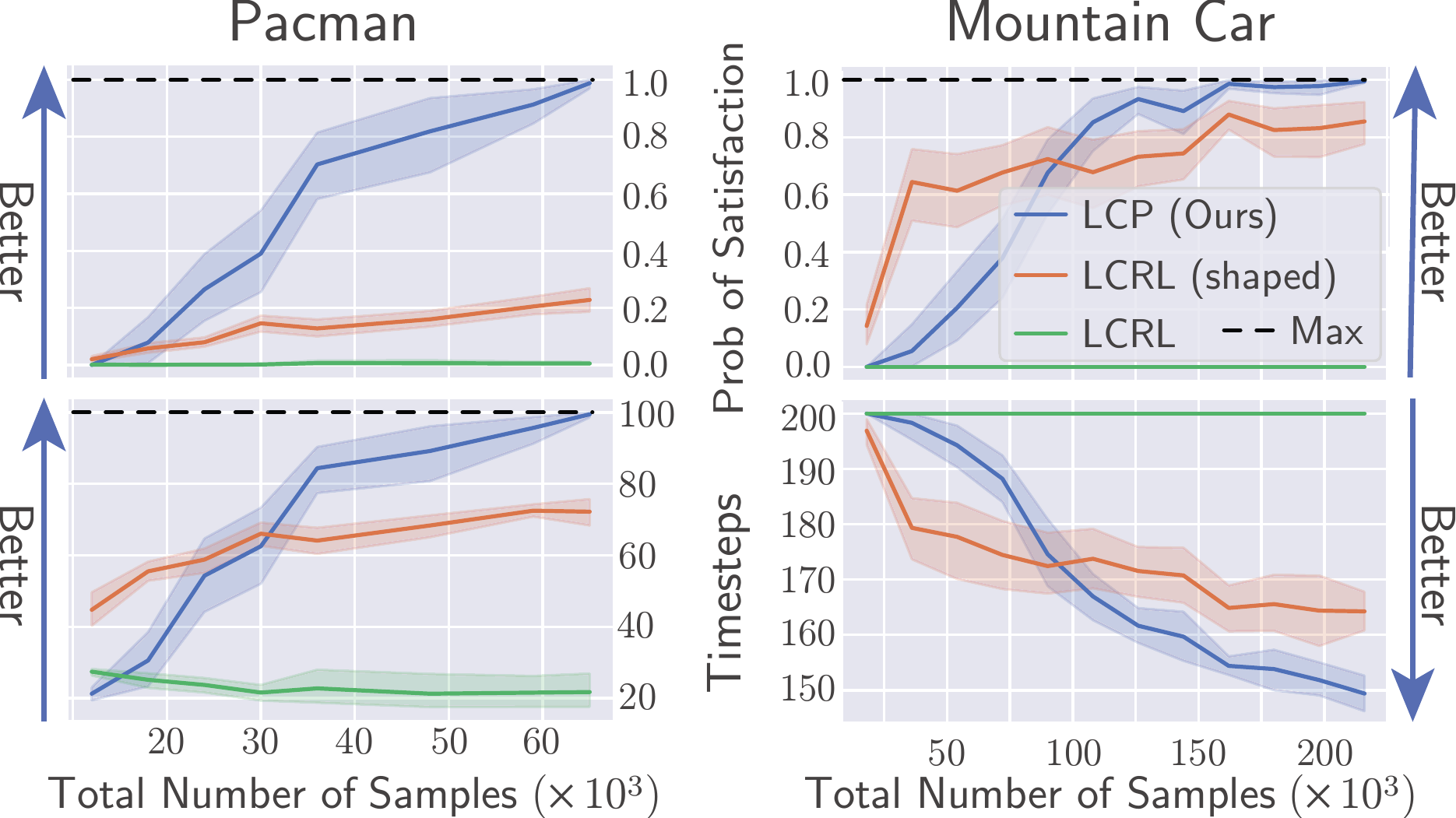}
\caption{\textit{Results. (Left Column) Pacman. $\varphi$ is to eventually collect food and always avoid the ghost. We let the system run for a maximum of 100 timesteps. (Right Column) Discretized Mountain Car (MC). $\varphi$ is to eventually reach the flag.
}}
 \label{fig:experiments}
 \vspace{-.2in}
\end{wrapfigure}
spec in Figure \ref{fig:experiments} (Left) approaches $1$ much faster than the two baselines. The returned policy collects the food quickly and then stays close, but avoids, the ghost. Any policy that avoids the ghost is equally good, as we have not incentivized it to stay far away. LCRL redefines cost as 1 if the LTL is solved and 0 otherwise, which is too sparse and learning suffers. Indeed, shaped LCRL performs better than straight LCRL.

\textbf{Performance in continuous state space?} Similarly, the probability of satisfying the LTL spec in Figure \ref{fig:experiments} (Right) goes up to $1$. However, here the LCRL (shaped) baseline performs relatively well as it is being given ``breadcrumbs'' for how to solve the task.
Our algorithm performs well without needing any cost shaping. Standard LCRL fails to learn. This experiment demonstrates that our method can be used even in discretized continuous settings.

\textbf{Sample Complexity?} Our theory is quite conservative w.r.t.~empirical performance. In Pacman (Figure \ref{fig:experiments}, Left), Thm.~\ref{thm:generalization} suggests $\approx350$ samples per $(s,a)$ pair just to calculate the AMECs. Empirically, LCP finds a good policy after $11$ samples per $(s,a)$ pair ($\sim 66k/6k$ samples/pair). 

\textbf{Other Considerations.} One of the strengths and potential drawbacks of LTL is its specificity. If a $\varphi$, for a truly infinite horizon problem, is to ``eventually'' do something, then accomplishing the task quickly is not required. 
As a finite horizon problem, in MC (Fig.~\ref{fig:experiments}, Right) SSP finds the fastest path to the goal. In contrast, since any stochastic policy with full support will ``eventually'' work, the policy returned by LCP for Fig \ref{fig:motivation} (Left) (Fig. \ref{fig:intuition} Center, \& App Fig. \ref{fig:experiments_policy_types}) may take exponential time to complete a single loop. Two straightforward ways to address this issue are: (a) including explicit time constraints in $\varphi$; and (b) cost shaping to prefer policies reaching some $s^\ast$ quickly and repeatedly. Unlike standard cost-shaping, $\varphi$ satisfaction is still guaranteed since the cost is decoupled from $\varphi$.

\section{Related Work}\label{sec:related_work}
\vspace{-.2cm}

\textbf{Constrained Policy Optimization.}
One attempt at simplifying cost functions is to split the desired behaviors from the required behaviors. The desired behaviors remain as part of the cost function while the required behaviors are treated as constraints. 
Recent interest in constrained policy optimization within the RL community has been related to the constrained Markov Decision Process (CMDP) framework \cite{Altman99, fqe, Achaim_2017,miryoosefi2019reinforcement}. 
This framework enables clean methods and guarantees, but enforces expected constraint violations rather than absolute constraint violations. Setting and interpreting constraint thresholds can be very challenging, and inappropriate in safety-critical problems \cite{Le2018}. 

\textbf{LTL + RL.} Recently, LTL-constrained policy optimization has been developed as an alternative to CMDPs \cite{Littman2017}. Unlike CMPDs, the entire task is encoded into an LTL expression and is treated as the constraint. Q-learning variants when dynamics are unknown and Linear Programming methods when dynamics are known are common solution concepts \cite{Sadigh2014, Hasanbeig2018lcrl, Bozkurt2020Q, Cai2021, Ding2014}. The Q-learning approaches rely on proper, unknowable tuning of discount factor for their guarantees. Theoretically oriented works include \cite{FuLTLPAC, Wolff2012RobustControl}. While providing PAC-style guarantees, the assumptions made in these works rely on unknowable policy-environment interaction properties. We make no such assumptions here.

Another solution technique is employing reward machines \cite{Icarte2020RewardMachine, Camacho2019LTLAndBeyond, Vaezipoor2021LTL2Action} or high-level specifications that can be translated into reward machines \cite{jothimurugan2019composable}. These works are generally empirical and handle finite or repeated finite problems (episodic problems at test time); they can only handle a smaller set of LTL expressions, specifically regular expressions. Our work handles $\omega$-regular expressions, subsuming regular expressions and requires a nontrivial leap, algorithmically and theoretically, to access the broader set of allowable expressions. Many problems are $\omega$-regular problems, but not regular, such as liveness (something good will happen eventually) and safety (nothing bad will happen forever). 
The works that attempt to handle full LTL expressibility redefine reward as 1 if the LTL is solved and 0 otherwise; the cost function of the MDP is entirely ignored.



\textbf{Verification and Planning.}  As an alternative to our approach, one might consider LTL satisfaction verification  and extend it to an optimization technique by checking every policy (which will naively take an exponential amount of samples to verify a single policy \cite{Bradzil2014, Kretinsky2019}). Many verification approaches exist \cite{prism2011, modelchecking, Agha2018ModelChecking, Younes2011, Lassaigne2006, Herault2004} and among the ones that do not assume known dynamics, the verification guarantees rely on quantities as difficult to calculate as the original verification problem itself \cite{Kretinsky2019}.

\section{Discussion}

We have presented a novel algorithm, LCP, for policy optimization under LTL constraints in an unknown environment. We formally guarantee that the policy returned by LCP simultaneously has minimal cost with respect to the MDP cost function and maximal probability of LTL satisfaction. Our experiments verify that our policies are competitive and our sample estimates conservative.

The assumptions we make are strong, but to the best of our knowledge, are the most relaxed amongst tractable model-based algorithms proposed for this space. Model-free algorithms (Q-learning) have less stringent assumptions but do not come with the kind of guarantees that our work has and largely ignore the cost function, solving only part of the problem. An interesting future direction would be to extend  our work to continuous state and action spaces and settings with function approximation.

\begin{small}
\textbf{Acknowledgements.} Cameron Voloshin is funded partly by an NSF Graduate Fellowship and a Kortschak Fellowship. This work is also supported in part by NSF \#1918865, ONR \#N00014-20-1-2115, and NSF \#2033851.
\end{small}

\begin{small}
\bibliography{neurips_2022}
\bibliographystyle{plain}
\end{small}

\newpage
\tableofcontents
\appendix
\newpage
\section{Notation and Overview}
\label{app:glossary}

\begin{figure}[!h]
\begin{minipage}[t]{1\textwidth}
\vspace{-.1in}
\captionof{table}{Glossary of terms}
\label{tab:glossary}
\vspace{-0.05in}
\begin{center}
\begin{small}
\centerline{
\begin{tabular}{ll}
\toprule
 Acronym & Term \\
 \midrule
RL & Reinforcement Learning \\
PO & Policy Optimization \\
LTL & Linear Temporal Logic \\
MDP & Markov Decision Process \\
LDBA & Limit Determinisitic Buchi Automaton \\
AMEC/MEC/EC & Accepting MEC, Maximal EC, End Component \\
\hline
LCP & LTL Constrained Planning, Algo \ref{algo:main} \\
$\texttt{FindAMEC}$ & Subroutine to assist in finding AMECs \\
$\texttt{PlanRecurrent}, \texttt{PR}$ & Subroutine to plan in AMECs, Algo  \ref{algo:plan_recurrent} \\
$\texttt{PlanTransient}, \texttt{PT}$ & Subroutine to plan to AMECs, Algo  \ref{algo:plan_transient} \\
$\texttt{NoBlockPlanTransient}, \texttt{NB-PT}$ & Subroutine to plan to AMECs, Algo  \ref{algo:no_block_plan_transient} \\
$\texttt{VI}$ & Value Iteration Subroutine, Algo  \ref{algo:VI} \\
\hline
AP & Atomic Proposition \\
$\Sigma$ & Alphabet $\Sigma = 2^{AP}$ \\
$\SState$ & State Space \\
$\mathcal{A}$ & Action Space. $\A(s)$ allowable actions in state $s$. \\
$\mathcal{A}_{A}$ & Restricted Action Space. $\A_A(s) \subseteq \A(s)$ allowable actions in state $s \in A \subseteq \SState$. \\
$P$ & Transition Function \\
$P_{\pi}$ & Markov Chain induced by $\pi$ in $P$ \\
$\mathcal{C}$ & Cost Function \\
$\mathcal{X}$ & Product-MDP \\
$\tau$ & Run or trajectory in an MDP \\
$\pi$ & Policy \\
$\eta$-greedy w.r.t $\pi$ in $A \subseteq \SState$ & $= (1-\eta)\pi + \eta\texttt{Unif}(\A_A)$ with $0 \leq \eta \leq 1$ \\
\hline
$\varphi$ & LTL Specification/Formula/Task \\
$\Prob{\pi \models \varphi}$ & Probability that a policy satisfies the task \\
$\Pi, \Pi_{\max}$ & Class of stochastic policies, $\pi \in \Pi$ with maximal $\Prob{\pi \models \varphi}$ \\
$\beta$ & Lower bound on minimum, nonzero transition probability \\
$D$ & Dataset tracking all tuples $(s,a,s')$ simulated \\
$\widehat{P}$ & Empirical estimate of $P$ from data in $D$ \\
$\tilde{P}$ & Optimistic dynamics returned by \texttt{VI} \\
$\mathcal{P}$ & Plausible transition functions consistent with all the information gathered in $D$ \\
$\mathcal{E}$ & High probability event \\
$n(s,a)$ & Number of samples accumulated in $(s,a)$. Also denoted $n$ \\
$\psi(n)$ & Error bound on $\max_{s' \in \SState} |\widehat{P}(s,a,s') - P(s,a,s')|$ \\
$\psi^{-1}(\rho)$ & Number of samples $n(s,a)$ necessary to achieve $\max_{s' \in \SState}  |\widehat{P}(s,a,s') - P(s,a,s')| < \rho$ \\
\hline
$\epsilon_V$ & Cost-optimality tolerance wrt. main problem \eqref{prob:main} \\
$\epsilon_{\varphi}$ & Prob-optimality tolerance wrt. main problem \eqref{prob:main} \\
$\epsilon_{\texttt{PR}}$ & error input into $\texttt{PR}$, $\epsilon_{\texttt{PR}} = \frac{\epsilon_V}{7 \lambda}$ \\
$\epsilon_{\texttt{PT}}$ & error input into $\texttt{PT}$, $\epsilon_{\texttt{PT}} = \frac{2\epsilon_V}{9}$ \\
$\epsilon^{\Bellman}_{\texttt{PR}}$ & Convergence condition for VI in $\texttt{PR}$, $\epsilon^{\Bellman}_{\texttt{PR}}= \frac{2\epsilon_{\texttt{PR}}}{3}$\\
$\epsilon^{\Bellman}_{\texttt{PT}}$ & Convergence condition for VI in $\texttt{PT}$, $ \epsilon^{\Bellman}_{\texttt{PT}} = \frac{c_{\min} \epsilon_{\texttt{PT}}\epsilon_{\varphi}}{4 \bar{V}}$ \\
\hline
$\alpha$ & Aperiodicity Coefficient $\alpha \in (0,1)$. $P_{\alpha, \pi} = \alpha P_{\pi} + (1-\alpha) I$\\
$\Bellman^\alpha_{\texttt{PR}}$ &  $\Bellman^\alpha_{\texttt{PR}}v(s) = \min_{a \in \A_A(s)} \big(\mathcal{C}(s,a) + \alpha \min_{p \in \mathcal{P}(s,a)} p^T v \big) + (1-\alpha) v(s) \quad \forall s \in A$ \\
$\Bellman_{\texttt{PT}}$ &  $\Bellman_{\texttt{PT}} v(s) = \begin{cases} \min\left\{\min_{a \in \A_A(s)} \left( \mathcal{C}(s,a) + \min_{p \in \mathcal{P}(s,a)}p^T v \right), \bar{V}\right\},& \quad s\in \SState\setminus \cup_{i=1}^k A_i \\
\lambda g_i,& \quad s \in A_i
\end{cases}$\\
$d_{\texttt{PR}} $ & Convergence operator for $\texttt{PR}$, $d_{\texttt{PR}}(v', v) = \max_{s \in A}(v'(s) - v(s)) - \min_{s \in A}(v'(s) - v(s))$ \\
$d_{\texttt{PT}}$ & Convergence operator for $\texttt{PT}$, $d_{\texttt{PT}}(v_{n+1}, v_n) = \|v_{n+1} - v_n\|_1$ \\
$V_T$ & Terminal costs. $V_T = 0$ by default \\
\hline
$\lambda$ & Tradeoff between $g_{\pi}$ and $J_{\pi}$ \\
$J_\pi$ & Transient cost, conditioned on runs satisfying $\varphi$ \\
$g_\pi$ & Gain, Average-cost, conditioned on runs satisfying $\varphi$ \\
$\bar{V}$ & Upper bound on $J_{\pi}$ for any $\pi \in \Pi$ \\
\hline
$\Delta(M)$ & Coefficient of Ergodicity of matrix
$M$, $\Delta(M) = \frac{1}{2} \max_{ij} \sum_{k} |M_{ik} - M_{jk}|$ \\ 
$\bar{\Delta}_{A_i}$ & Worst-case coefficient of ergodicty in $A_i$ \\ 
\end{tabular}
}
\end{small}
\end{center}

  \end{minipage}
  \end{figure}
  
 \clearpage
 
 \subsection{Overview}\label{sec:app:overview}
 
There is a lot of notation that we will be using to get through the analysis. It is important to distinguish the following:
 
\begin{figure}[!h]
\begin{minipage}[t]{1\textwidth}
\vspace{-.1in}
\captionof{table}{Policies and Probabilities}
\label{tab:policies_and_probs}
\vspace{-0.05in}
\begin{center}
\begin{small}
\centerline{
\begin{tabular}{ll}
\toprule
 Acronym & Term \\
 \midrule
$\pi^\ast$ & Optimal policy w.r.t \eqref{eqn:main_problem} \\
$\pi$ & Policy returned by LCP (Algo \ref{algo:main}) \\
$\pi_{A_i}$ & Gain and Prob-optimal policy in AMEC $(A_i, \A_{A_i})$ in dynamics $P$ \\
$\tilde{\pi}_{A_i}$ & Gain and Prob-optimal policy in AMEC $(A_i, \A_{A_i})$ in dynamics $\tilde{P}$ \\
$\pi_{i}$ & A policy in states $A_i$ of an AMEC $(A_i, \A_{A_i})$, ignoring what $\pi$ does outside of $A_i$ \\
\hline
$p^{\pi}$, $p^\ast$ & $\Prob{\pi \models \varphi}$ and $\Prob{\pi^\ast \models \varphi}$. Also denoted $p^{\pi}$ and $p^{\pi^\ast}$ \\
$p^{\pi}_i$, $p^{\pi^\ast}_i$ & $\Prob{\pi \text{ reaches } A_i}, \Prob{\pi^\ast \text{ reaches } A_i} \geq 0$ denoted $p_i$, $p^\ast_i$ (resp), $\sum_{i=1}^k p_i = p$, $\sum_{i=1}^k p^\ast_i = p^\ast$
\end{tabular}
}
\end{small}
\end{center}
\end{minipage}
\end{figure}

\begin{figure}[!h]
\begin{minipage}[t]{1\textwidth}
\vspace{-.1in}
\captionof{table}{Gains}
\label{tab:gains}
\vspace{-0.05in}
\begin{center}
\begin{small}
\begin{tabular}{ll}
\toprule
 Term & Description. (Subscript $i$ or $A_i$ denotes ``in AMEC $(A_i, \A_{A_i})$'') \\
 \midrule
$\widehat{g}^{\tilde{P}}_{\tilde{\pi}_{A_i}}$ & Approximated gain of (greedy) optimal policy $\tilde{\pi}_{A_i}$ under optimistic dynamics $\tilde{P}$ \\
$g^{P}_{\pi_i}$ & Actual gain of policy $\pi_i$ ($\eta$-greedy version of policy from \texttt{PR}) under true dynamics $P$ \\
$g_{\pi_{A_i}}^{P}$ & Gain of optimal policy $\pi_{A_i}$ under dynamics $P$ \\
\end{tabular}
\end{small}
\end{center}
\end{minipage}
\end{figure}

The relationship between these gains is subtle. \texttt{PlanRecurrent} (Algo \ref{algo:plan_recurrent}) returns $\widehat{g}^{\tilde{P}}_{\tilde{\pi}_{A_i}}$ as the estimate for how good the best greedy policy will be in AMEC $(A_i, \A_{A_i})$ under dynamics $\tilde{P}$. But, we don't use the greedy policy, we use the $\eta$-greedy policy $\pi_i$. With $\pi_{A_i}$ being the true gain-optimal policy in dynamics $P$ (in AMEC $(A_i, \A_{A_i})$), then we will find the following relations: 
\begin{equation*}
    \underbrace{\widehat{g}^{\tilde{P}}_{\tilde{\pi}_{A_i}}}_{\text{Output from } \texttt{PR} \;\;\;\;} 
    \underbrace{\approx}_{\frac{\epsilon_{\texttt{PR}}^\Bellman}{2} } 
    g^{\tilde{P}}_{\tilde{\pi}_{A_i}} 
    \underbrace{\approx}_{\text{Lem } \ref{lem:eta_greedy_approx}}
     g^{\tilde{P}}_{\pi_i} 
    \underbrace{\approx}_{\text{Lem } \ref{lem:simulation_avg}}
     g^{P}_{\pi_i}
    \underbrace{\approx}_{\text{Prop } \ref{prop:plan_recurrent_formal}}
     \underbrace{g^{P}_{\pi_{A_i}}}_{\text{Actual}}
\end{equation*}

In general gains $g$ are functions of state: $g(s)$. However, it is well known \cite{puterman2014markov,FruitUCRL22020} that in communicating MDPs (each state is reachable from one another by some policy) that the gain of the optimal policy (even if determinstic) is constant -- independent of state. Since AMECs are communicating MDPs, then $\pi_{A_i}, \tilde{\pi}_{A_i}$ induce constant gains in  $P,\tilde{P}$ respectively. Lastly, the stochastic policy $\pi_i$ makes both $\tilde{P}$ and $P$ recurrent, and so the gain is also constant. We will therefore only be considering the absolute difference between gains rather than $L_\infty$ norms (as they coincide).

\begin{figure}[!h]
\begin{minipage}[t]{1\textwidth}
\vspace{-.1in}
\captionof{table}{Value Functions}
\label{tab:value_funcs}
\vspace{-0.05in}
\begin{center}
\begin{small}
\begin{tabular}{ll}
\toprule
 Acronym & Term \\
 \midrule
$V_{\pi}$ & Main objective, value function $V_{\pi, \lambda}^P=J_\pi + \lambda g_{\pi}$\\
$v$ & Approximated value of policy $\pi$ from \texttt{PT} (Algo \ref{algo:plan_recurrent}) in dynamics $\tilde{P}$ \\
$\tilde{V}^{\tilde{P}}_\pi$ & Actual value of policy $\pi$ from \texttt{PT} in dynamics $\tilde{P}$ \\
$\tilde{V}^{P}_\pi$ & Actual value of policy $\pi$ from \texttt{PT} in dynamics $P$, \\ & \hfill also denoted $\tilde{V}_\pi = p(J_{\pi} + \sum_{i=1}^k \frac{p_i}{p} \widehat{g}^{\tilde{P}}_{\tilde{\pi}_{A_i}}) + (1-p) \frac{\bar{V}}{\epsilon_\varphi}$ \\
\end{tabular}
\end{small}
\end{center}
\end{minipage}
\end{figure}

When superscripts are dropped in $V$, the dynamics are the true dynamics $P$ of the product-MDP $\X$. Once again, the relationships between these value functions is subtle. \texttt{PlanTransient} (Algo \ref{algo:plan_transient}) returns $v$ as the estimate for how good $\pi$ (the greedy policy wrt $v$) will be in reaching AMECs $\{(A_i, \A_{A_i})\}_{i=1}^k$, but is optimistic. $v$ is an approximation to $\tilde{V}^{\tilde{P}}_\pi$. Roughly speaking, we will find that they are all similar/related:  
\begin{equation*}
    \underbrace{v}_{\text{Ouput from }\texttt{PT} \;\;}
    \underbrace{\approx}_{\text{Lem } \ref{lem:EVI_bound}} \tilde{V}^{\tilde{P}}_{\pi} \underbrace{\approx}_{\text{Lem } \ref{lem:simulation_transient}}
    \tilde{V}^{P}_{\pi} 
    \underbrace{\approx}_{\text{Prop }  \ref{prop:plan_transient_formal}/\ref{prop:no_block_plan_transient}}
    \underbrace{\tilde{V}^{P}_{\pi^\ast}}_{\text{An intermediate Value Func.}}
\end{equation*}
where the last approximation has $2$ different propositions: the first allows the simplifying assumption made in the main paper regarding paths between AMECS, the second removes that assumption at the expense of increased computation. Finally,
\begin{equation}
    \|\tilde{V}^{P}_{\pi} - \tilde{V}^{P}_{\pi^\ast} \| \underbrace{\approx}_{\text{Thm } \ref{thm:generalization}} \|V^{P}_{\pi} - V^{P}_{\pi^\ast}\|
\end{equation}
which involves swapping $\widehat{g}^{\tilde{P}}_{\pi_{A_i}}$ in $\tilde{V}$ for $g^P_{\pi_{A_i}}$.
\newpage
\section{Analysis: Statements with Proof}\label{app:analysis}

\subsection{Sample Complexity Guarantee}

The number of samples necessary to guarantee an $(\epsilon_{V}, \epsilon_{\varphi}, \delta)$-PAC approximation to the cost-optimal and probability-optimal policy relies factors: $\beta$ (lower bound on the mininum non-zero transition probability of $P$), $\{c_{\min}, c_{\max}\}$ (bounds on the cost function $\mathcal{C}$), $\bar{\Delta}_{A_i}$ (worst-case coefficient of ergodicity for EC $(A_i, \A_{A_i})$), $\bar{V}$ (upper bound on the value function), and $\lambda$ (tradeoff factor).
Recall that an event $\mathcal{E}$ captures the scenario where the empirical transition function $\widehat{P}$ is close to the true transition function $P$. $\mathcal{E}$ holds with probability at least $1-\delta$, see Lem \ref{lem:event}.

\generalization*

\textbf{Comparison To RL Literature.} Before presenting the proofs, we briefly compare this guarantee with standard guarantees in model-based reinforcement learning under a generative model. It is important to note that while we show that our guarantee is a sum of $3$ terms, a tighter bound would be a max over the $3$ terms. To the best of our knowledge, the current state-of-the-art RL (with generative model) guarantee is $\mathcal{\tilde{O}}(\frac{1}{(1-\gamma)^3 \epsilon^2})$ \cite{agarwal2020model}, per 
state-action pair. Here, $H = \frac{1}{1-\gamma}$ represents the effective horizon in discounted settings. In other words, $c_{\max}H$ is the bound on (their) $\|V\|$. In our case, for the SSP reduction, the 
effective horizon is $H = \frac{\|\tilde{V}\|_\infty}{c_{\min}}$, as this
is the expected goal-reaching time in the worst-case (since we do not have any discounting). We estimate $\|\tilde{V}\|_\infty$ with upper bound $\frac{\bar{V}}{\epsilon_{\varphi}}$. Suppose we set $\epsilon = \min(\epsilon_V, \epsilon_{\varphi})$. Focusing just on the center term, we have guarantee taking the form, roughly, $\frac{|\SState|^2 H^4}{\epsilon^2}$. Here, the $|\SState|^2$ comes from a loose upper bound $\max_{s \in \SState,a \in \A}\|\widehat{P}(s,a,\cdot)\|_1 = |\SState|.$ In fact, as noted in \cite{Tarbouriech2021}, when the MDP is not too chaotic $\max_{s \in \SState,a \in \A}\|\widehat{P}(s,a,\cdot)\|_1 = \mathcal{O}(1)$. Further, by using careful variance-aware arguments from \cite{Tarbouriech2021} we can decrease the dependency from $H^4$ to $H^3$. Hence, the SSP guarantee (our center term) and the standard RL guarantee are very similar. The first term $\frac{1}{\beta}$ does not appear in standard RL literature because there is no constraint verification needed, but in practice will be dominated by the other terms. The last term is also similar to the center term. $\frac{c_{\max}}{1-\bar{\Delta}_{A_i}}$ can also be seen as an effective horizon, accumulating $c_{\max}$ cost until the accepting component sufficiently mixes. Here, $|A_i|^2 \leq |\SState|^2$ and, again, comes from the loose upper bound $\max_{s \in A_i,a \in \A_{A_i}}\|\widehat{P}(s,a,\cdot)\|_1 = |A_i|$.

\begin{proof}[\textbf{Proof} of Theorem \ref{thm:generalization}]
We begin by examining the interaction of $\pi^\ast$ with $P$. The Markov chain $P_{\pi^\ast}$ has a number, say $m$, of recurrent classes $R_1,\ldots,R_m$, sets of states that are trapping and visited infinitely often once reached. Some of the recurrent classes $R_i$ contain an accepting state $s \in \SState^\ast$, making any trajectory entering $R_i$ an accepting run, without loss of generality call these $R_1,\ldots,R_{m'}$ (we just relabel them). Let $\A_{\pi^\ast_i}(s) = \{a \in \A | \pi^\ast_i(s|a) > 0 \}$ denote the support of actions taken by $\pi^\ast_i$ 
in state $s \in R_i$. Let $\A_{\pi^\ast_i} = \{\A_{\pi^\ast_i}(s)\}_{s \in R_i}$ be the indexed action set in $R_i$. Then, by definition, $\{(R_i, \A_{\pi^\ast_i})\}_{i=1}^{m'}$ are accepting EC. By definition, 
each accepting EC $(R_i, \A_{\pi^\ast_i})$ must be contained within (or is itself) some AMEC $(A_i, \A_i)$. 

Fix some accepting EC $(R_j, \A_{\pi^\ast_j})$. We claim, without loss of generality, $(R_j, \A_{\pi^\ast_j}) = (A_i, \A_{A_i})$ for some index $i \in {1,\ldots,k}$. To show this, let $\pi_{A_i}$ be the gain optimal, and probability-optimal policy in AMEC $(A_i, \A_{A_i})$: $\pi_{A_i}$ is defined over all states $s \in A_i$ and actions $a \in \A_{A_i}$. Further, consider the modified optimal policy
\begin{equation*}
    \tilde{\pi}^\ast(s,a) = \begin{cases}
    \pi_{A_i}(s,a),& s \in A_i \\
    \pi^\ast(s,a),& \text{otherwise}.
    \end{cases}
\end{equation*}
Because $\pi_{A_i}$ is prob-optimal (ie. $\Prob{\tilde{\pi}^\ast \models \varphi | s_0 \in A_i} = 1)$ in $A_i$ then the probability $\Prob{\tilde{\pi}^\ast \models \varphi} \geq \Prob{\pi^\ast \models \varphi}$.
Further, $J_{\tilde{\pi}^\ast} \leq J_{\pi}$ because any $\tau$ that formerly passed through $R_j \setminus A_i$ now accumulates less cost. Further, $g_{\pi_{A_i}} \leq g_{\pi^\ast_i}$ by definition of optimality in AMEC $(A_i, \A_{A_i})$. Thus, $V_{\tilde{\pi}^\ast} \leq V_{\pi^\ast}$. Of course, by definition of optimality, the opposite signs hold: $V_{\tilde{\pi}^\ast} \geq V_{\pi^\ast}$ and $\Prob{\tilde{\pi}^\ast \models \varphi} \leq \Prob{\pi^\ast \models \varphi}$. Therefore  $\tilde{\pi}^\ast$ and $\pi^\ast$ are indistinguishable. 

Repeating the above argument for each $(R_j, \A_{\pi^\ast_j})$ means the accepting EC of $\pi^\ast$ are AMECS and, by definition, form some subset of all of the AMECs $\{(A_i, \A_{A_i}\}_{i=1}^k$. In other words, all accepting runs of $\pi^\ast$ reach states $\cup_{i=1}^k A_i$. Furthermore, $g_{\pi^\ast} = \sum_{i=1}^{k} \frac{p^\ast_i}{p} g^P_{\pi_{A_i}}$ where $p^\ast_{i} \geq 0$ is the probability that $\pi^\ast$ reaches $A_i$ and $\sum_{i=1}^k p^\ast_{i} = p$. 

\textbf{Property $(i)$} now follows as a direct consequence of Prop \ref{prop:plan_transient} and Prop \ref{prop:plan_recurrent}. Recall by Prop \ref{prop:plan_transient} that $|\Prob{\pi \text{ reaches } \cup_{i=1}^k A_i} - \max_{\pi' \in \Pi_{\max}} \Prob{\pi' \text{ reaches } \cup_{i=1}^k A_i}| \leq \epsilon_{\varphi}$. Prop \ref{prop:plan_recurrent} implies that once a run enters some $A_i$, the run is accepted. Remaining runs cannot be accepted since they do not reach any AMEC, the only way to be accepted. Hence $\Prob{\pi \models \varphi} = \Prob{\pi \text{ reaches } \cup_{i=1}^k A_i}$. Since we just showed that all accepting runs of $\pi^\ast$ reach some $(A_i, \A_i)$ then:
\begin{align*}
    0 \leq \Prob{\pi^\ast \models \varphi} - \Prob{\pi \models \varphi} \leq \Prob{\pi^\ast \text{ reaches } \cup_{i=1}^k A_i} - \Prob{\pi \text{ reaches } \cup_{i=1}^k A_i} \leq \epsilon_{\varphi}.
\end{align*}


To show \textbf{Property $(ii)$}, first let us define $p^{\pi}_i$ as the probability of $\pi$ reaching AMEC $(A_i, \A_{A_i})$ and, by property $(i)$, $\sum_{i=1}^k p^{\pi}_i = \sum_{i=1}^k p^\ast_{i} = p$. 
The value function given by the Bellman operator $\Bellman_{\texttt{PT}}$ (Table \ref{tab:param}) in Algorithm \ref{algo:plan_transient} takes the form
\begin{equation}\label{eq:val_func}
    \tilde{V}_{\pi}(s) = p( J_{\pi}(s) + \lambda \sum_{i=1}^k \frac{p^{\pi}_i}{p} \widehat{g}^{\tilde{P}}_{\tilde{\pi}_{A_i}} + (1-p) \frac{\bar{V}}{\epsilon_{\varphi}} 
\end{equation}
where $\widehat{g}^{\tilde{P}}_{\tilde{\pi}_{A_i}}$ are the approximated gains for end component $(A_i, \A_{A_i})$ from Algorithm \ref{algo:plan_recurrent}. To see this, there is probability $p$ that $\pi \models \varphi$ and achieves (conditional) expected cost $J_{\pi}(s) + \lambda \sum_{i=1}^k \frac{p_i}{p} \widehat{g}^{\tilde{P}}_{\tilde{\pi}_{A_i}}$ and prob $1-p$ that $\pi \not\models \varphi$ where 
all cooresponding trajectories get stuck and accumulate $\frac{\bar{V}}{\epsilon_{\varphi}}$ cost. Let $\tilde{\pi}$ now represent the optimal solution to the value function $\tilde{V}_{\pi}$ (Algo \ref{algo:plan_transient}). Therefore we claim:
\begin{align*}
    0 \leq V_{\pi} - V_{\pi^\ast} &= V_{\pi} - \tilde{V}_{\pi} + \tilde{V}_{\pi} - \tilde{V}_{\tilde{\pi}} + \tilde{V}_{\tilde{\pi}} - \tilde{V}_{\pi^\ast} + \tilde{V}_{\pi^\ast} - V_{\pi^\ast} + (1-p)\frac{\bar{V}}{\epsilon_{\varphi}}  - (1-p)\frac{\bar{V}}{\epsilon_{\varphi}}  \\
    &\leq \underbrace{|V_{\pi} - \tilde{V}_{\pi} + (1-p)\frac{\bar{V}}{\epsilon_{\varphi}}|}_{(a)} + \underbrace{|\tilde{V}_{\pi} - \tilde{V}_{\tilde{\pi}}|}_{(b)} + \underbrace{\tilde{V}_{\tilde{\pi}} - \tilde{V}_{\pi^\ast}}_{(c)} + \underbrace{|\tilde{V}_{\pi^\ast} - V_{\pi^\ast} - (1-p)\frac{\bar{V}}{\epsilon_{\varphi}}|}_{(d)} \\
    &\leq \frac{\epsilon_V}{3}  + \frac{\epsilon_V}{3} + 0 + \frac{\epsilon_V}{3} \leq \epsilon_V
\end{align*}

For $(a)$, first we note that $g_{\pi} = \sum_{i=1}^k \frac{p^{\pi}_i}{p} g^P_{\pi_i}$, by definition of conditional expectation. Let $\epsilon_{\texttt{PR}} = \frac{\epsilon_V}{7 \lambda}$. Hence,
\begin{align*}
    \underbrace{|V_{\pi} - \tilde{V}_{\pi} + (1-p)\frac{\bar{V}}{\epsilon_{\varphi}}|}_{(a)} &= | p( J_{\pi}(s) + \lambda \sum_{i=1}^k \frac{p^{\pi}_i}{p} g^P_{\pi_i}) - p( J_{\pi}(s) + \lambda \sum_{i=1}^k \frac{p^{\pi}_i}{p} \widehat{g}^{\tilde{P}}_{\tilde{\pi}_{A_i}})| \\
    &\leq \lambda \max_{i=1,\ldots,k} |g^P_{\pi_i} - \widehat{g}^{\tilde{P}}_{\tilde{\pi}_{A_i}}| \\
    &\leq \lambda \frac{4\epsilon_{\texttt{PR}}}{3}, \quad \text{Corollary } \ref{cor:plan_recurrent_approx} \\
    &\leq \frac{\epsilon_V}{3}
\end{align*}
By similar argument, for $(d)$, together with earlier argument that $g^\ast_{\pi} = \sum_{i=1}^{k} \frac{p^\ast_i}{p} g^P_{\pi_{A_i}}$ then we also have that:
\begin{align*}
    \underbrace{|\tilde{V}_{\pi^\ast} - V_{\pi^\ast} - (1-p)\frac{\bar{V}}{\epsilon_{\varphi}}|}_{(d)} &= | p( J_{\pi^\ast}(s) + \lambda \sum_{i=1}^k \frac{p^\ast_i}{p} \widehat{g}^{\tilde{P}}_{\tilde{\pi}_{A_i}}) - p( J_{\pi^\ast}(s) + \lambda \sum_{i=1}^k \frac{p^{\pi}_i}{p} g^P_{\pi_{A_i}})| \\
    &\leq \lambda \max_{i=1,\ldots,k} |g^{P}_{\pi_{A_i}} - \widehat{g}^{\tilde{P}}_{\tilde{\pi}_{A_i}}| \\
    &\leq \lambda \max_{i=1,\ldots,k} |g^{P}_{\pi_{A_i}} - g^P_{\pi_i}| + |g^P_{\pi_i} - \widehat{g}^{\tilde{P}}_{\tilde{\pi}_{A_i}}| \\
    &\leq \lambda \frac{7\epsilon_{\texttt{PR}}}{3}, \quad \text{Prop } \ref{prop:plan_recurrent_formal} \text{ and Corollary } \ref{cor:plan_recurrent_approx} \\
    &\leq \frac{\epsilon_V}{3}
\end{align*}

Further, we have $(c) \leq 0$ holds because $\tilde{\pi}$ is optimal in $\tilde{V}$ (either by assuming $\cup_{i=1}^k A_i$ is the correct choice of AMECS, or using Algo \ref{algo:no_block_plan_transient} instead of $\texttt{planTransient}$). In either case, $(b) \leq \frac{3\epsilon_{\texttt{PT}}}{2} \leq \frac{\epsilon_V}{3}$ by Prop \ref{prop:plan_transient_formal} or Prop \ref{prop:no_block_plan_transient}, where $\epsilon_{\texttt{PT}}$ is set to $\epsilon_{\texttt{PT}} = \frac{2\epsilon_V}{9}$, completing the approximation guarantee.

We now compute the number of samples, per state-action pair, required by Algorithm \ref{algo:main}. By Prop \ref{prop:support_verification}, we need $n = \tilde{\mathcal{O}}(\frac{1}{\beta})$ to verify the support of $P$. After calculating the AMECs $\{(A_i, \A_{A_i})\}_{i=1}^k$, we calculate the gain-optimal policy $\pi_i$ for each AMEC. By Prop \ref{prop:plan_recurrent}, we need $n = \tilde{\mathcal{O}}((\frac{|A_i| c_{\max}}{\epsilon_{\texttt{PR}} (1-\bar{\Delta}_{A_i})})^2) = \tilde{\mathcal{O}}((\frac{\lambda |A_i| c_{\max}}{\epsilon_V (1-\bar{\Delta}_{A_i})})^2)$ for each state-action pair in each end component $(A_i, \A_{A_i})$, since $\epsilon_{\texttt{PR}} = \frac{\epsilon_V}{7\lambda}$. Finally, for the transient policy $\pi_0$, the SSP reduction requires $n = \mathcal{\tilde{O}}((\frac{|\SState \setminus \cup_{i=1}^k A_i| \bar{V}^2}{\epsilon_{\texttt{PT}} \epsilon_{\varphi}^2 c_{\min}})^{2}) = \mathcal{\tilde{O}}((\frac{|\SState \setminus \cup_{i=1}^k A_i| \bar{V}^2}{\epsilon_V \epsilon_{\varphi}^2 c_{\min}})^{2})$ for each state-action pair outside of the AMECs, by Prop \ref{prop:plan_transient}, since $\epsilon_{\texttt{PT}} = \frac{2\epsilon_V}{9}$. A similar sample complexity is guaranteed by using Algo $\ref{algo:no_block_plan_transient}$ in place of $\texttt{PlanTransient}$, where $n = \mathcal{\tilde{O}}((\frac{|\SState | \bar{V}^2}{\epsilon_V \epsilon_{\varphi}^2 c_{\min}})^{2})$ is required in place of $n  = \mathcal{\tilde{O}}((\frac{|\SState \setminus \cup_{i=1}^k A_i| \bar{V}^2}{\epsilon_V \epsilon_{\varphi}^2 c_{\min}})^{2})$. Adding these together, yields the worst-case number of samples necessary in any state-action pair $(s,a) \in \SState \times \A$. These sample guarantees hold only when the event $\mathcal{E}$ holds, which itself holds with probability $1-\delta$ (see Lem \ref{lem:event}).

\end{proof}

\gain*
\begin{proof}[\textbf{Proof} of Corollary \ref{cor:lambda}]
Fix some $\lambda > 0$. Let $\pi' = \arg\min_{\pi \in \Pi_{\max}} g_{\pi}$. Suppose $g_{\pi'} < g_{\pi}$ but $V_{\pi, \lambda} < V_{\pi', \lambda}$, elementwise. In other words, $\pi$ is the preferred policy. Then,
\begin{align*}
    0 &\leq V_{\pi', \lambda} - V_{\pi, \lambda} \\
    &= J_{\pi'} + \lambda g_{\pi'} -  J_{\pi} - \lambda g_{\pi} \\
    &\leq \max_{\tilde{\pi} \in \Pi} J_{\tilde{\pi}} + \lambda \underbrace{(g_{\pi'} - g_{\pi})}_{< 0}
\end{align*}
since $J_{\tilde{\pi}} \geq 0$ for each $\tilde{\pi} \in \Pi$. If $\lambda > \frac{\max_{\tilde{\pi} \in \Pi} J_{\tilde{\pi}} }{g_{\pi} - g_{\pi'}}$ then we contradict $V_{\pi, \lambda} < V_{\pi', \lambda}$. In particular, if $\pi'$ is the gain optimal policy then for any $\lambda > \lambda^\ast = \frac{\max_{\tilde{\pi} \in \Pi} J_{\tilde{\pi}} }{\min_{\{\pi \in \Pi | g_{\pi} \neq g_{\pi'}\}} g_{\pi} - g_{\pi'}}$ then $\pi'$ is preferred to any other policy $\pi \in \Pi$.

\end{proof}

\newpage
\subsection{High Probability Event and Sample Requirement}
\highProbEvent*
\begin{lem}[High Probability Event holds]\label{lem:event}
The event $\mathcal{E}$ holds with probability at least $1-\delta$.
\end{lem}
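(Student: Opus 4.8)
The plan is to realize $\mathcal{E}$ as a single union bound over one application of the empirical Bernstein inequality of Maurer and Pontil \cite{Maurer2009} for every transition triple and every sample count. Fix a state-action pair $(s,a)$ and a target $s'$. Since the generative model returns i.i.d.~draws from $P(s,a)$, the indicators $X_i = \mathbf{1}[\text{$i$-th draw} = s']$ are i.i.d.~Bernoulli with mean $P(s,a,s')$, empirical mean $\widehat{P}(s,a,s')$ after $n = n(s,a)$ draws, and sample variance $V_n = \tfrac{n}{n-1}\widehat{P}(s,a,s')(1-\widehat{P}(s,a,s'))$. First I would invoke the one-sided bound in both directions, applying it to $X_i$ and to $1-X_i$ (which share the same sample variance), so that with probability at least $1 - 2\delta'$ we get $|P(s,a,s') - \widehat{P}(s,a,s')| \le \sqrt{2 V_n \log(2/\delta')/n} + \tfrac{7}{3}\log(2/\delta')/(n-1)$.

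Next I would match this to the claimed form. Substituting the Bernoulli sample variance gives $V_n/n = \widehat{P}(1-\widehat{P})/(n-1)$, so the right-hand side equals $\sqrt{2\widehat{P}(1-\widehat{P})\,\xi} + \tfrac{7}{3}\xi$ exactly when $\xi = \log(2/\delta')/(n-1)$. Choosing $\delta' = \tfrac{\delta}{2 n^2 |\SState|^2 |\A|}$ makes $\log(2/\delta') = \log(4 n^2 |\SState|^2 |\A|/\delta)$, hence $\xi = \xi(n)$ and the per-triple guarantee becomes precisely $|P - \widehat{P}| \le \psi_{sas'}(n)$ as in Definition \ref{def:high_prob_event}. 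I would then union-bound over all triples $(s,a,s') \in \SState \times \A \times \SState$ and all counts $n \ge 2$: the total failure probability is $\sum_{(s,a,s')}\sum_{n\ge 2} 2\delta' = \sum_{(s,a,s')}\sum_{n\ge 2}\tfrac{\delta}{n^2|\SState|^2|\A|} = \delta\sum_{n\ge 2}\tfrac{1}{n^2} = \delta\big(\tfrac{\pi^2}{6}-1\big) < \delta$, since the $|\SState|^2|\A|$ factor inside the logarithm absorbs the union over triples and the two-sided application contributes the factor $2$.

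Finally, the remaining inequality $\psi_{sas'}(n) \le \psi(n)$ is deterministic and requires no probability: because $\widehat{P}(1-\widehat{P}) \le \tfrac14$ we have $2\widehat{P}(1-\widehat{P}) \le \tfrac12$, so $\sqrt{2\widehat{P}(1-\widehat{P})\,\xi(n)} \le \sqrt{\tfrac12 \xi(n)}$ while the $\tfrac{7}{3}\xi(n)$ terms coincide, giving $\psi_{sas'}(n)\le\psi(n)$ pointwise. This completes every clause in the definition of $\mathcal{E}$, so $\mathcal{E}$ holds with probability at least $1-\delta$.

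The main obstacle is that the guarantee must hold uniformly over the growing, unbounded count $n(s,a)$ (an anytime bound, since the algorithm keeps adding samples), so the delicate point is making the per-event confidence decay like $1/n^2$ — which is exactly what the $n^2$ inside $\xi(n)$ supplies — and verifying that $\sum_{n\ge 2} n^{-2}$ converges below $1$ so the union bound still totals below $\delta$. A secondary bookkeeping issue is reconciling the Bernoulli sample variance, with its $n$ versus $n-1$ normalizations, with the clean expression $\widehat{P}(1-\widehat{P})$ appearing in $\psi_{sas'}$; I would handle this by carrying the $\tfrac{n}{n-1}$ factor explicitly through the variance term so the $n$ in the denominator of the empirical-Bernstein leading term cancels into the $(n-1)$ of $\xi(n)$.
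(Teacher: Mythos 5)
Your proposal is correct and follows essentially the same route as the paper: an empirical Bernstein bound per triple $(s,a,s')$, a union bound over all triples and all counts $n\ge 2$ absorbed into the $4n^2|\SState|^2|\A|/\delta$ inside $\xi(n)$, and the deterministic observation that $\widehat{P}(1-\widehat{P})\le\tfrac14$ gives $\psi_{sas'}(n)\le\psi(n)$. The only difference is that the paper imports the anytime (uniform-in-$n$) version as a packaged lemma from prior work, whereas you re-derive it from the one-shot inequality via the $\sum_{n\ge2}n^{-2}<1$ union bound, which is exactly how that cited lemma is obtained.
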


\begin{proof}[\textbf{Proof} of Lemma \ref{lem:event}
]
We start with the anytime version of Theorem 4 of \cite{Maurer2009} given by Lemma 27 of \cite{TarbouriechSSPMinimax}:
\begin{equation*}
    \mathbb{P}\left[\forall \; n \geq 1, \left|\mathbb{E}[Z] - \frac{1}{n}\sum_{i=1}^n Z_i \right| > \sqrt{\frac{2 \hat{V}_n \log(4 n^2 / \delta)}{n-1} } + \frac{7 \log(4 n^2/\delta)}{3(n-1)} \right] \leq \delta,
\end{equation*}
for any $Z_i \in [0,1]$ iid. 
By re-setting $\delta \leftarrow \frac{\delta}{|\SState|^2 |\A|}$, applying union bound over all $(s,a,s') \in \SState \times \A \times \SState$, and observing that $Z_i \sim P(s,a,s')$ is a Bernoulli random variable with empirical variance $\hat{V}_n = \widehat{P}(s,a,s')(1-\widehat{P}(s,a,s'))$ yields the result:
$$
\{\forall s,a,s' \in \SState \times \A \times \SState, \forall n > 1 : \quad |P(s,a,s') - \widehat{P}(s,a,s')| \leq \psi_{sas'}(n)\} \quad \text{ holds with prob } 1-\delta
$$
Observing that $\psi_{sas'}(n) \leq \psi(n)$ for all $n > 1$ because $\psi_{sas'}(n)$ takes on a maximum when $\widehat{P}(s,a,s') = \frac{1}{2}$, completes the proof.
\end{proof}

\begin{lem}[Inverting $\mathcal{E}$]\label{lem:samples_req} Fix $(s,a,s') \in \SState \times \A \times \SState$. Under the event $\mathcal{E}$, the number of samples $\psi^{-1}(\rho)$ required to achieve $|P(s,a,s') - \widehat{P}(s,a,s')| \leq \psi_{sas'}(n) \leq \psi(n) < \rho$ is given by:
\begin{equation*}
    \psi^{-1}(\rho) = \lceil\frac{2}{\zeta^2} \log(\frac{16 |\SState|^2 |\A|}{ \zeta^4 \delta}) \rceil + 3 = \tilde{\mathcal{O}}(\frac{1}{\rho^2}),
\end{equation*}
where $\zeta \equiv \frac{-\frac{3}{7\sqrt{2}} + \sqrt{(\frac{3}{7\sqrt{2}})^2 + \frac{12}{7} \rho }}{2}.$
\end{lem}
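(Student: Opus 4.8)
The plan is to decouple the inversion into an algebraic step (inverting $\psi$ through the intermediate quantity $\xi$) followed by a transcendental step (inverting $\xi$ itself). Since Lemma \ref{lem:event} already establishes $\psi_{sas'}(n) \le \psi(n)$ for all $n > 1$, with the maximum attained at $\widehat{P}(s,a,s') = \tfrac 1 2$, it suffices to find the smallest $n$ for which $\psi(n) < \rho$; under the event $\mathcal{E}$ this forces $|P(s,a,s') - \widehat{P}(s,a,s')| \le \psi_{sas'}(n) \le \psi(n) < \rho$, as claimed. I would first observe that $\xi(n)$ is decreasing in $n$ for $n > 1$ and that $\psi$ is strictly increasing in its argument, so $\psi(n)$ is decreasing in $n$ and the threshold is pinned down by the equation $\psi(n) = \rho$.

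For the algebraic step, I would substitute $\zeta = \sqrt{\xi(n)}$, which turns $\psi(n) = \sqrt{\tfrac 1 2 \xi(n)} + \tfrac 7 3 \xi(n)$ into $\tfrac{1}{\sqrt 2}\zeta + \tfrac 7 3 \zeta^2$, so that $\psi(n) = \rho$ becomes the quadratic $\tfrac 7 3 \zeta^2 + \tfrac{1}{\sqrt 2}\zeta - \rho = 0$. Clearing to the monic form $\zeta^2 + \tfrac{3}{7\sqrt 2}\zeta - \tfrac 3 7 \rho = 0$ and applying the quadratic formula gives a unique positive root equal (after checking $4\cdot\tfrac 3 7 = \tfrac{12}{7}$) to the $\zeta$ in the statement. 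Consequently $\psi(n) < \rho$ holds if and only if $\xi(n) < \zeta^2$, and the entire problem reduces to solving the inequality $\log\!\big(\tfrac{4 n^2 |\SState|^2 |\A|}{\delta}\big) \le \zeta^2 (n-1)$ for $n$.

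For the transcendental step, writing $C = 4|\SState|^2|\A|/\delta$ I would solve $2\log n + \log C \le \zeta^2(n-1)$ using the standard linearization $\log m \le \tfrac{m}{t} + \log t - 1$ with the tuning $t = 2/\zeta^2$, which bounds the $2\log n$ contribution by $\tfrac{\zeta^2}{2}(n-1)$ and leaves a residual dominated by the explicit choice $n = \lceil \tfrac{2}{\zeta^2}\log\!\big(\tfrac{16|\SState|^2|\A|}{\zeta^4\delta}\big)\rceil + 3$; the ceiling and the additive $+3$ absorb the integer rounding and the gap between $\log n$ and $\log(n-1)$. Finally the asymptotic claim $\psi^{-1}(\rho) = \tilde{\mathcal O}(1/\rho^2)$ follows from the small-$\rho$ behavior of the quadratic, where the linear term dominates so that $\zeta \sim \sqrt 2\,\rho$ and hence $\tfrac{2}{\zeta^2} \sim \tfrac{1}{\rho^2}$ up to the logarithmic factor hidden in $\tilde{\mathcal O}$.

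The main obstacle is the transcendental step: recovering the precise constants in the statement (the $16$, the $\zeta^4$, and the additive $+3$) rather than a mere order bound requires a careful choice of the linearization parameter $t$ and close bookkeeping of constants through the rounding, since a naive application of $\log m \le \alpha m - \log\alpha - 1$ yields the correct $\tfrac{1}{\zeta^2}\log(1/\zeta^2)$ shape but slightly looser constants that must be reconciled with the stated expression. By contrast, the algebraic inversion and the monotonicity of $\psi$ are routine once the substitution $\zeta = \sqrt{\xi(n)}$ is identified.
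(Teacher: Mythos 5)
Your proposal is correct and follows essentially the same route as the paper: the same substitution $\zeta = \sqrt{\xi(n)}$ and quadratic inversion to reduce to $\xi(n) \le \zeta^2$, followed by inverting the resulting $n \gtrsim c_1\log(c_2 n)$ relation — which the paper handles by citing a packaged lemma (Lemma \ref{lem:invert_N}, i.e.\ $N > 2c_1\log(c_1c_2) \Rightarrow N > c_1\log(c_2N)$) rather than applying the tangent-line bound on $\log$ by hand. One small correction to your transcendental step: to bound the $2\log n$ term by $\tfrac{\zeta^2}{2}(n-1)$ you need the tuning $t = 4/\zeta^2$, not $t = 2/\zeta^2$ (the latter consumes the entire budget $\zeta^2(n-1)$ and leaves no room for $\log C$); with that fix the constants reconcile with the stated $\tfrac{2}{\zeta^2}\log\bigl(\tfrac{16|\SState|^2|\A|}{\zeta^4\delta}\bigr)$ expression exactly as in the paper's use of $2c_1\log(c_1c_2)$.
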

\begin{proof}[Proof of \ref{lem:samples_req}]
We have $\psi_{sas'} < \psi(n) = \frac{x}{\sqrt{2}} + \frac{7}{3} x^2 \leq \rho$ where $x^2 = \xi(n) = \frac{\log(4 n^2 |\SState|^2 |\A| \delta^{-1})}{{n-1}}$. Solving the quadratic inequality,
we have 
\begin{equation*}
    x \leq \frac{-\frac{3}{7\sqrt{2}} + \sqrt{(\frac{3}{7\sqrt{2}})^2 + \frac{12}{7} \rho }}{2}  \equiv \zeta
\end{equation*}
Hence, we have
\begin{align*}
\frac{\log(4 n^2 |\SState|^2 |\A| \delta^{-1})}{{n-1}} &\leq \zeta^2 \\
\implies n &\geq \frac{\log(4 n^2 |\SState|^2 |\A| \delta^{-1})}{\zeta^2} + 1 \\
&= \frac{1}{\zeta^2} \log(e^{\zeta^2} 4 n^2 |\SState|^2 |\A| \delta^{-1}) \\
&= \underbrace{\frac{2}{\zeta^2}}_{c_1} \log(\underbrace{e^{\frac{\zeta^2}{2}} \sqrt{4 |\SState|^2 |\A| \delta^{-1}}}_{c_2} n) \quad (\star)
\end{align*}
By Lemma \ref{lem:invert_N}, if $n > 2 c_1 \log(c_1 c_2)$ then $n > (\star)$. Simplifying,
\begin{equation*}
    n \geq \frac{2}{\zeta^2} \log(\frac{16 |\SState|^2 |\A|}{ \zeta^4 \delta}) + 2
\end{equation*}
Selecting $\psi^{-1}(\rho) = \lceil\frac{2}{\zeta^2} \log(\frac{16 |\SState|^2 |\A|}{ \zeta^4 \delta}) \rceil + 3$ and noting that $\zeta = \mathcal{\tilde{O}}(\rho)$ completes the proof: $n = \mathcal{\tilde{O}}(1/\rho^2)$.
\end{proof}

\begin{lem}\label{lem:invert_N}(Lemma 10 of \cite{Kazerouni2017}) If $\log(c_1 c_2) \geq 1$ and $c_1,c_2 > 0$ then
\begin{equation*}
    N > 2 c_1 \log(c_1 c_2) \implies N > c_1 \log(c_2 N)
\end{equation*}
\end{lem}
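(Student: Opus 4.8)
The plan is to reduce the implication to a one-dimensional monotonicity argument. Define $f(N) = N - c_1 \log(c_2 N)$; the goal is to show $f(N) > 0$ whenever $N > N_0 := 2 c_1 \log(c_1 c_2)$. First I would compute $f'(N) = 1 - c_1/N$, which is strictly positive precisely when $N > c_1$. Since the hypothesis $\log(c_1 c_2) \geq 1$ gives $N_0 = 2 c_1 \log(c_1 c_2) \geq 2 c_1 > c_1$, the threshold $N_0$ already lies in the region where $f$ is strictly increasing. Hence it suffices to verify $f(N_0) \geq 0$: strict monotonicity on $(c_1,\infty)$ then upgrades this to $f(N) > f(N_0) \geq 0$ for every $N > N_0$, which is exactly the desired $N > c_1 \log(c_2 N)$.

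The heart of the argument is evaluating $f$ at the boundary. Writing $u := \log(c_1 c_2) \geq 1$, we have $N_0 = 2 c_1 u$, and the key algebraic step is the decomposition $\log(c_2 N_0) = \log(2 c_1 c_2 u) = \log 2 + \log(c_1 c_2) + \log u = \log 2 + u + \log u$, where the $c_1 c_2$ factor inside the logarithm collapses back to $u$. Substituting gives
\[
f(N_0) = 2 c_1 u - c_1(\log 2 + u + \log u) = c_1\big(u - \log u - \log 2\big).
\]
It then remains to show $u - \log u \geq \log 2$ for all $u \geq 1$. I would argue this by noting that $g(u) = u - \log u$ satisfies $g'(u) = 1 - 1/u \geq 0$ on $[1,\infty)$, so $g$ is nondecreasing there and attains its minimum at $u = 1$, where $g(1) = 1$. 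Since $\log 2 < 1$, we conclude $f(N_0) \geq c_1(1 - \log 2) > 0$, and combining with the monotonicity of $f$ yields $f(N) > 0$ for all $N > N_0$.

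I expect the only genuinely delicate points to be the bookkeeping at the boundary: confirming that $N_0$ sits strictly above the critical point $c_1$ (so the sign of $f$ at $N_0$ propagates to all larger $N$ via strict monotonicity, giving the strict final inequality even in the borderline case $f(N_0)=0$), and carrying out the logarithm decomposition cleanly. Everything else is a routine first-derivative argument, and the gap $1 - \log 2 > 0$ leaves comfortable slack, so no sharp constant estimates are needed.
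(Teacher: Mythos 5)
Your proof is correct. Note that the paper itself gives no argument for this lemma --- it simply cites Lemma 10 of the external reference --- so there is no in-paper proof to compare against; your write-up supplies a self-contained verification. The two ingredients both check out: $N_0 = 2c_1\log(c_1c_2) \geq 2c_1 > c_1$ places the boundary point in the region where $f(N) = N - c_1\log(c_2 N)$ is strictly increasing, and the evaluation $f(N_0) = c_1\bigl(u - \log u - \log 2\bigr) \geq c_1(1-\log 2) > 0$ for $u = \log(c_1c_2) \geq 1$ is exactly right, with the hypothesis $\log(c_1c_2)\geq 1$ used in both places (to guarantee $N_0 > c_1$ and to bound $u - \log u$ from below by its value at $u=1$).
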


\newpage
\subsection{\texttt{FindAMEC} proofs}
\begin{prop}(Support Verification $\texttt{FindAMEC}$)\label{prop:support_verification}
Under the event $\mathcal{E}$ and Assumption \ref{assump:lower_bound}, if $n = \phi_{\texttt{FindAMEC}}(\beta) = \frac{5}{\beta} \log(\frac{100 |\SState|^2 |\A|}{\beta^2 \delta} ) =  \tilde{\mathcal{O}}(\frac{1}{\beta})$ samples are collected for each state-action pair $(s,a) \in \SState \times \A$ then the support of $P$ is verified:
\begin{equation*}
    P(s,a,s') = \begin{cases}
    0, & \widehat{P}(s,a,s') = 0 \\
    1, & \widehat{P}(s,a,s') = 1 \\
    \in [\beta, 1-\beta], & \text{otherwise}
    \end{cases}
\end{equation*}
\end{prop}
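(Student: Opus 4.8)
The plan is to condition throughout on the high-probability event $\mathcal{E}$ (Def.~\ref{def:high_prob_event}), fix a single triple $(s,a,s')$, and split into three cases according to the value of $\widehat{P}(s,a,s')$. The two degenerate cases $\widehat{P}\in\{0,1\}$ are where the sampling budget enters; the middle case $\widehat{P}\in(0,1)$ will turn out to be purely logical and require no concentration at all. Since the three cases partition the possible values of $\widehat{P}(s,a,s')$, handling each gives the claimed characterization.

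First I would treat $\widehat{P}(s,a,s') = 0$. The essential observation here — and the reason the budget is $\tilde{\mathcal{O}}(1/\beta)$ rather than $\tilde{\mathcal{O}}(1/\beta^2)$ — is that the empirical Bernstein radius $\psi_{sas'}(n) = \sqrt{2\widehat{P}(1-\widehat{P})\xi(n)} + \tfrac{7}{3}\xi(n)$ has a vanishing variance term when $\widehat{P}=0$, collapsing to $\psi_{sas'}(n) = \tfrac{7}{3}\xi(n)$, which is \emph{linear} in $\xi(n)$. Under $\mathcal{E}$ this gives $P(s,a,s') \le \tfrac{7}{3}\xi(n)$. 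I would then show the chosen $n = \phi_{\texttt{FindAMEC}}(\beta)$ forces $\tfrac{7}{3}\xi(n) < \beta$, so $P(s,a,s') < \beta$; by Assumption~\ref{assump:lower_bound} any nonzero entry is at least $\beta$, hence $P(s,a,s') = 0$. The case $\widehat{P}(s,a,s') = 1$ is symmetric: the variance term again vanishes, $\mathcal{E}$ gives $P(s,a,s') \ge 1 - \tfrac{7}{3}\xi(n) > 1-\beta$, and since any value $P(s,a,s') < 1$ would leave residual mass placed on some successor with probability $\ge\beta$, we conclude $P(s,a,s') = 1$. The main obstacle is exactly this conceptual point: one must notice that restricting to $\widehat{P}\in\{0,1\}$ kills the Bernstein variance term, since using the generic radius $\psi(n)$ with its $\sqrt{\xi(n)}$ term would degrade the budget to $\tilde{\mathcal{O}}(1/\beta^2)$.

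Next I would dispatch the middle case $\widehat{P}(s,a,s') \in (0,1)$ without invoking $\mathcal{E}$: $\widehat{P}(s,a,s') > 0$ means $s'$ was actually drawn as a successor of $(s,a)$ at least once, so deterministically $P(s,a,s') > 0$ and thus $P(s,a,s') \ge \beta$ by Assumption~\ref{assump:lower_bound}; likewise $\widehat{P}(s,a,s') < 1$ means some other $s''$ was observed, so $P(s,a,s'') \ge \beta$ and hence $P(s,a,s') \le 1-\beta$. This yields $P(s,a,s') \in [\beta, 1-\beta]$.

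Finally I would verify the sample count, whose only quantitative requirement is $\tfrac{7}{3}\xi(n) < \beta$ with $\xi(n) = \log(4n^2|\SState|^2|\A|/\delta)/(n-1)$. I would check this directly for $n = \phi_{\texttt{FindAMEC}}(\beta)$: writing $n = \tfrac{5}{\beta} L$ with $L = \log\!\big(\tfrac{100|\SState|^2|\A|}{\beta^2\delta}\big)$ and using $\log 100 = \log 4 + 2\log 5$, the numerator simplifies to $\log(4n^2|\SState|^2|\A|/\delta) = L + 2\log L$, so the requirement $\tfrac{7}{3}(L + 2\log L) < \beta(n-1) = 5L - \beta$ reduces to $8L > 14\log L + 3\beta$, which holds for every $L \ge \log 100$ (the constants $100$ and $\beta^2$ in the logarithm are tuned precisely for this cancellation). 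Alternatively one may invoke the fixed-point inversion of Lemma~\ref{lem:invert_N}. The union bound over all triples $(s,a,s')$ is already folded into $\mathcal{E}$, which holds with probability $1-\delta$ by Lemma~\ref{lem:event}, so no further probability accounting is needed.
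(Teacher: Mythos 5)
Your proposal is correct and follows essentially the same route as the paper's proof: both hinge on the observation that the empirical Bernstein variance term vanishes when $\widehat{P}(s,a,s')\in\{0,1\}$, reducing the requirement to $\tfrac{7}{3}\xi(n)<\beta$ and hence the $\tilde{\mathcal{O}}(1/\beta)$ budget, after which Assumption~\ref{assump:lower_bound} snaps $P(s,a,s')$ to $0$ or $1$. Your handling of the middle case (a purely deterministic "observed at least once, and some other successor observed at least once" argument needing no concentration) and your direct numeric check of $\tfrac{7}{3}\xi(n)<\beta$ at $n=\tfrac{5}{\beta}L$ are minor, clean variations on the paper's appeal to Lemma~\ref{lem:samples_req} with $\zeta^2=\tfrac{3\beta}{7}$, and both are sound.
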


\begin{proof}[\textbf{Proof} of Prop \ref{prop:support_verification}
]

Fix $(s,a,s') \in \SState \times \A \times \SState$. Suppose $\widehat{P}(s,a,s') \in \{0,1\}$ then by $\mathcal{E}$ we have 
\begin{equation}
    \frac{7 \log(4 n^2 |\SState|^2 |\A|/\delta)}{3(n-1)} \leq \beta
\end{equation}
Following the second half of the proof of \ref{lem:samples_req} with $\zeta^2 = \frac{3 \beta}{7},$ we have that if we take $n = \phi_{\texttt{FindAMEC}}(\beta) = \frac{5}{\beta} \log(\frac{100 |\SState^2 \A|}{\beta^2 \delta} ) > \frac{14}{3\beta} \log(\frac{784 |\SState^2 \A|}{ 9\beta^2 \delta} )$ then we have 
\begin{equation} \label{eq:samp_small_enough}
    |P(s,a,s') - \widehat{P}(s,a,s')| < \beta
\end{equation}

\textbf{Case $\widehat{P}(s,a,s') = 1$.} Suppose $\widehat{P}(s,a,s') = 1$. By Eq \eqref{eq:samp_small_enough}, $P(s,a,s') > 1- \beta$. By Assumption \ref{assump:lower_bound} together with the fact that $\sum_{x \in \SState} P(s,a,x) = 1$ then $P(s,a,x) = 0$ for any $x \neq s'$. Therefore, $P(s,a,s') = \widehat{P}(s,a,s')= 1$.

\textbf{Case $\widehat{P}(s,a,s') = 0$.} Suppose $\widehat{P}(s,a,s') = 0$. By Eq \eqref{eq:samp_small_enough}, $P(s,a,s') < \beta$. Hence $P(s,a,s') = \widehat{P}(s,a,s') = 0$, otherwise violating Assumption \ref{assump:lower_bound}.

\textbf{Case, Otherwise.} If $P(s,a,s') > 1-\beta$ or $P(s,a,s') < \beta$ then by following the above arguments we'd yield similar contradictions with Assumption \ref{assump:lower_bound}. Hence, $P(s,a,s') \in [\beta, 1-\beta]$
\end{proof}

\newpage
\subsection{\texttt{PlanRecurrent} proofs}

\gainConvCorrectness*
We formalize Prop \ref{prop:plan_recurrent} as follows by adding the necessary PAC statements:
\begin{restatable}[\texttt{PR} Convergence \& Correctness, Formal]{prop}{gainConvCorrectnessFormal}
\label{prop:plan_recurrent_formal} 
Let $\pi_A$ be the gain-optimal policy in AMEC $(A, \A)$. Algorithm \ref{algo:plan_recurrent} terminates after at most $\log_2\left(\frac{6|A|c_{\max}}{\epsilon_{\texttt{PR}} (1- \bar{\Delta}_A)}\right)$ repeats, and collects at most $n = \tilde{\mathcal{O}}(\frac{|A|^2c_{\max}^2}{\epsilon_{\texttt{PR}}^2 (1- \bar{\Delta}_A)^2})$ samples for each $(s,a) \in (A, \A_A)$. Under the event $\mathcal{E}$ and Assumption \ref{assump:lower_bound} then with probability $1-\delta$, the $\eta$-greedy policy $\pi$ w.r.t.~$v'$ (Alg.~\ref{algo:plan_recurrent}, Line 5) is gain optimal and probability optimal:
 $   |g_{\pi} - g_{\pi_A} | < \epsilon_{\texttt{PR}},$  $\Prob{\pi \models \varphi | s_0 \in A} = 1.$
\end{restatable}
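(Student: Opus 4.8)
The plan is to establish the three claims—termination within the stated number of repeats, the per-pair sample bound, and the twin correctness guarantees—by tracking how the confidence radius $\rho$ simultaneously governs the number of samples drawn and the accuracy of the gain returned by extended relative value iteration. For termination, note that $\rho$ is halved on every pass of the loop, starting from $\rho_0 = 2\psi(\phi_{\texttt{FindAMEC}}(\beta))$, and the loop exits once the stopping test involving the empirical coefficient of ergodicity $\Delta(\tilde P)$ is met. The key fact, to be imported from Lemma~\ref{lem:simulation_avg}, is that $\rho$ never needs to fall below the fixed constant $\tfrac{\epsilon_{\texttt{PR}}(1-\bar\Delta_A)}{6|A|\cmax}$: once $\rho$ is that small the optimistic chain is accurate enough that $\Delta(\tilde P)$ is pinned against $\bar\Delta_A$ and the test fires. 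Dividing $\rho_0$ by this floor and taking $\log_2$ yields the claimed $\log_2\!\big(\tfrac{6|A|\cmax}{\epsilon_{\texttt{PR}}(1-\bar\Delta_A)}\big)$ bound on the repeats. Because the per-iteration sample count $\psi^{-1}(\rho)=\tilde{\mathcal O}(1/\rho^2)$ (Lemma~\ref{lem:samples_req}) grows by a factor of four at each halving, the total across repeats is a geometric series dominated by its last term, giving $n=\tilde{\mathcal O}\!\big(\tfrac{|A|^2\cmax^2}{\epsilon_{\texttt{PR}}^2(1-\bar\Delta_A)^2}\big)$.

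The correctness of the gain then follows the telescoping chain in the overview, each link contributing a budgeted error. First, since $(A,\A_A)$ is communicating the optimal gain is state-independent, so the span-seminorm stopping rule $d_{\texttt{PR}}<\epsilon^{\Bellman}_{\texttt{PR}}=\tfrac{2\epsilon_{\texttt{PR}}}{3}$ of $\texttt{VI}$ guarantees that the returned estimate $\widehat g^{\tilde P}_{\tilde\pi_A}=\tfrac12(\max(v'-v)+\min(v'-v))$ obeys $|\widehat g^{\tilde P}_{\tilde\pi_A}-g^{\tilde P}_{\tilde\pi_A}|\le \tfrac{\epsilon^{\Bellman}_{\texttt{PR}}}{2}$, the standard bracketing of the gain by the increment of relative value iteration. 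Next I would invoke Lemma~\ref{lem:eta_greedy_approx} to pass from the greedy optimizer $\tilde\pi_A$ to the deployed $\eta$-greedy policy $\pi_i$ under the same optimistic dynamics, choosing $\eta$ small so $g^{\tilde P}_{\pi_i}\approx g^{\tilde P}_{\tilde\pi_A}$. The gain simulation Lemma~\ref{lem:simulation_avg} then transfers this gain from $\tilde P$ to the true $P$ for the fixed policy $\pi_i$; this is exactly the link whose error scales like $\|\tilde P-P\|_1/(1-\Delta)$, so the stopping test on $\rho$ is precisely calibrated to keep it at most $\tfrac{\epsilon_{\texttt{PR}}}{3}$. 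Finally, optimism closes the loop: the true $P\in\mathcal P$ under $\mathcal{E}$, so $g^{\tilde P}_{\tilde\pi_A}\le g^P_{\pi_A}$, while the shrunken confidence set makes the reverse gap small. Summing by the triangle inequality gives $|g_\pi-g_{\pi_A}|<\epsilon_{\texttt{PR}}$.

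For probability optimality I would argue structurally: the $\eta$-greedy policy places strictly positive mass on every action in $\A_A$, so the induced chain $P_\pi$ restricted to $A$ is irreducible on the strongly connected, closed set $A$ (by Def.~\ref{def:end_component}). Hence $A$ is a single recurrent class and every accepting state $s^\ast\in \SState^\ast\cap A$—which exists because $A$ is \emph{accepting}—is visited infinitely often almost surely, so $\Prob{\pi\models\varphi\mid s_0\in A}=1$. The main obstacle is the gain simulation step: unlike transient or discounted simulation lemmas, the average-cost perturbation bound must control how errors in $\tilde P$ propagate into the stationary distribution, which is where the factor $1-\Delta$ enters and ultimately dictates both the stopping threshold and the $(1-\bar\Delta_A)^{-2}$ dependence in the sample complexity; moreover, since $\bar\Delta_A$ is unknown a priori, one must certify termination using the adaptively computed $\Delta(\tilde P)$ while still guaranteeing the a posteriori accuracy against the true floor.
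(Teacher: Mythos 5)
Your proposal is correct and follows essentially the same route as the paper: the same four-link telescoping decomposition (extended-VI error, $\eta$-greedy perturbation via Lemma~\ref{lem:eta_greedy_approx}, gain simulation via Lemma~\ref{lem:simulation_avg}, and optimism giving $g^{\tilde P}_{\tilde\pi_A}\le g^P_{\pi_A}$), the same halving-of-$\rho$ termination argument with the floor $\tfrac{\epsilon_{\texttt{PR}}(1-\bar\Delta_A)}{6|A|\cmax}$, and the same irreducibility argument for $\Prob{\pi\models\varphi\mid s_0\in A}=1$. One small imprecision: the VI link you actually need is $|g^{\tilde P}_{\pi_{v'}}-g^{\tilde P}_{\tilde\pi_A}|\le\epsilon^{\Bellman}_{\texttt{PR}}/2$ for the \emph{greedy policy} extracted from $v'$ (so that Lemma~\ref{lem:eta_greedy_approx} can then be applied to its $\eta$-perturbation $\pi$), rather than the accuracy of the numerical estimate $\widehat g^{\tilde P}_{\tilde\pi_A}$, which is the content of Corollary~\ref{cor:plan_recurrent_approx} and is not needed for this proposition; both facts follow from the same standard relative-VI guarantee, so the error budget is unaffected.
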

\begin{proof}[Proof of Prop \ref{prop:plan_recurrent} \& Prop \ref{prop:plan_recurrent_formal}]
Let $\pi_{v'}$ be the greedy policy with respect to $v'$. Let $g_{\tilde{\pi}_A}^{\tilde{P}}$ be the gain of the gain-optimal policy, $\tilde{\pi}_A$, in $A$ with respect to dynamics $\tilde{P}$.

For the approximation error,
\begin{align*}
    0 \leq g^P_{\pi} - g^P_{\pi_A} &= g^P_{\pi} - g^{\tilde{P}}_{\pi} + g^{\tilde{P}}_{\pi} -
    g^{\tilde{P}}_{\pi_{v'}} + g^{\tilde{P}}_{\pi_{v'}} -
    g^{\tilde{P}}_{\ast} + g^{\tilde{P}}_{\ast} 
    - g^P_{\pi_A} \\
    &\leq \underbrace{|g^P_{\pi} - g^{\tilde{P}}_{\pi}|}_{(a)} + \underbrace{|g^{\tilde{P}}_{\pi} -
    g^{\tilde{P}}_{\pi_{v'}}|}_{(b)} + \underbrace{|g^{\tilde{P}}_{\pi_{v'}} -
    g^{\tilde{P}}_{\tilde{\pi}_A}|}_{(c)} + \underbrace{g^{\tilde{P}}_{\tilde{\pi}_A} 
    - g^P_{\pi_A}}_{(d)} \\
    &< \frac{\epsilon_{\texttt{PR}}}{3} + \frac{\epsilon_{\texttt{PR}}}{3} + \frac{\epsilon_{\texttt{PR}}}{3} + 0 = \epsilon_{\texttt{PR}}
\end{align*}

We have the first inequality because $\pi_A$ is gain optimal in $P$. By the Simulation Lemma \ref{lem:simulation_avg} we
have that $(a) < \frac{\epsilon_{\texttt{PR}}}{3}$ by setting $\epsilon_{(2)} = \frac{\epsilon_{\texttt{PR}}}{3}$ in the 
Lemma. By the $\eta$-greedy approximation Lemma \ref{lem:eta_greedy_approx} we have $(b) < \frac{\epsilon_{\texttt{PR}}}{3}$ by setting $\epsilon_{(1)} = \frac{\epsilon_{\texttt{PR}}}{3}$ in the Lemma. For $(c)$, since $\pi_{v'}$ represents the approximately optimal policy in $\tilde{P}$ then, by value iteration 
approximation guarantees, $(c) = |g^{\tilde{P}}_{\pi_{v'}} - g^{\tilde{P}}_{\tilde{\pi}_A}| < \frac{\epsilon^\Bellman_{\texttt{PR}}}{2} \leq \frac{\epsilon_{\texttt{PR}}}{3}$ by setting $\epsilon^\Bellman_{\texttt{PR}} = \frac{2 \epsilon_{\texttt{PR}}}{3}$ \cite{FruitUCRL22020}. It is known that, by optimism and the aperiodicity transformation \cite{FruitUCRL22020,Jaksch2010} for the average cost Bellman operator, $g^{\tilde{P}}_{\tilde{\pi}_A} < g^P_{\pi_A}$ implying $(d) < 0$. 

For the probability of satisfaction, when $s_0 \in A$, following a policy that samples every action in $\A_A$ with positive probability makes the markov chain $P_{\pi}$ recurrent. Thus, each $s \in A$ is visited infinitely often. In particular there is some $s^\ast \in A$ visited infinitely often, implying $\pi \models \varphi$. 

Convergence is guaranteed by Lemma \ref{lem:simulation_avg}: since $\rho$ is halved every iteration then $\rho$ never falls below $\frac{\epsilon_{(2)} (1-\bar{\Delta}_A)}{2 |A| c_{\max}}$, which is reached after $\log_{\frac{1}{2}}(\frac{\epsilon_{\texttt{PR}} (1-\bar{\Delta}_A)}{6 |A| c_{\max}}) = \log_{2}(\frac{6 |A| c_{\max}}{\epsilon_{\texttt{PR}} (1-\bar{\Delta}_A)})$ iterations (since $\epsilon_{(2)} = \frac{\epsilon_{\texttt{PR}}}{3}$). Further by Lemma \ref{lem:simulation_avg}, we get the sample complexity $n = \tilde{\mathcal{O}}(\frac{|A|^2 c_{\max}^2}{\epsilon_{\texttt{PR}}^2 (1- \bar{\Delta}_A)^2})$, completing the proof.
\end{proof}

\begin{cor}\label{cor:plan_recurrent_approx}
Under the same assumptions as Prop \ref{prop:plan_recurrent_formal}, in addition, 
$|g_{\pi}^P - \widehat{g}_{\tilde{\pi}_A}^{\tilde{P}}| \leq \frac{4\epsilon_{\texttt{PR}}}{3}$.
\end{cor}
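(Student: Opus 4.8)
The plan is to reuse the decomposition already established in the proof of Proposition \ref{prop:plan_recurrent_formal} and to append a single new term that accounts for the gap between the \emph{exact} optimal gain under the optimistic dynamics, $g^{\tilde{P}}_{\tilde{\pi}_A}$, and the \emph{estimate} $\widehat{g}^{\tilde{P}}_{\tilde{\pi}_A}$ actually returned on Line 8 of Algorithm \ref{algo:plan_recurrent}. Concretely, I would start from the triangle inequality
\begin{equation*}
|g^P_{\pi} - \widehat{g}^{\tilde{P}}_{\tilde{\pi}_A}| \leq |g^P_{\pi} - g^{\tilde{P}}_{\tilde{\pi}_A}| + |g^{\tilde{P}}_{\tilde{\pi}_A} - \widehat{g}^{\tilde{P}}_{\tilde{\pi}_A}|,
\end{equation*}
and bound the two pieces separately.

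For the first piece, nothing new is needed: it is exactly the quantity $(a)+(b)+(c)$ that appears inside the proof of Proposition \ref{prop:plan_recurrent_formal}. There the simulation Lemma \ref{lem:simulation_avg} controls $|g^P_{\pi} - g^{\tilde{P}}_{\pi}| < \epsilon_{\texttt{PR}}/3$, the $\eta$-greedy Lemma \ref{lem:eta_greedy_approx} controls $|g^{\tilde{P}}_{\pi} - g^{\tilde{P}}_{\pi_{v'}}| < \epsilon_{\texttt{PR}}/3$, and the value-iteration accuracy guarantee \cite{FruitUCRL22020} controls $|g^{\tilde{P}}_{\pi_{v'}} - g^{\tilde{P}}_{\tilde{\pi}_A}| < \epsilon_{\texttt{PR}}/3$; summing gives $|g^P_{\pi} - g^{\tilde{P}}_{\tilde{\pi}_A}| < \epsilon_{\texttt{PR}}$. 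So I would simply cite that chain rather than re-deriving it.

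The second, genuinely new piece is the estimation error $|g^{\tilde{P}}_{\tilde{\pi}_A} - \widehat{g}^{\tilde{P}}_{\tilde{\pi}_A}|$. The key fact is that the returned estimate is the midpoint $\widehat{g}^{\tilde{P}}_{\tilde{\pi}_A} = \tfrac{1}{2}(\max(v'-v) + \min(v'-v))$ of the span of successive value differences from the aperiodicity-transformed relative value iteration, while the true optimal gain under $\tilde{P}$ always lies in the interval $[\min_s(v'-v)(s),\,\max_s(v'-v)(s)]$ \cite{puterman2014markov, FruitUCRL22020}. Hence the estimation error is at most half the span seminorm, which is precisely $\tfrac{1}{2}d_{\texttt{PR}}(v',v)$. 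Since the iteration terminates only once $d_{\texttt{PR}}(v',v) < \epsilon^{\Bellman}_{\texttt{PR}} = \tfrac{2\epsilon_{\texttt{PR}}}{3}$ (Table \ref{tab:param}), this piece is bounded by $\tfrac{\epsilon_{\texttt{PR}}}{3}$. Adding the two bounds yields $|g^P_{\pi} - \widehat{g}^{\tilde{P}}_{\tilde{\pi}_A}| < \epsilon_{\texttt{PR}} + \tfrac{\epsilon_{\texttt{PR}}}{3} = \tfrac{4\epsilon_{\texttt{PR}}}{3}$, as claimed.

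I expect the only subtle point — the step to state carefully rather than wave at — is this midpoint/span bound: one must confirm that the stopping quantity $d_{\texttt{PR}}$ is literally the span of $v'-v$, and then invoke the standard relative-value-iteration fact that the optimal average cost is sandwiched by that span, so the midpoint estimate incurs error at most half the span. Everything else is bookkeeping with the triangle inequality and the intermediate bounds already proved for Proposition \ref{prop:plan_recurrent_formal}.
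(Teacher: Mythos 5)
Your proposal is correct and follows essentially the same route as the paper: the same triangle-inequality split $|g^P_{\pi} - \widehat{g}^{\tilde{P}}_{\tilde{\pi}_A}| \leq |g^P_{\pi} - g^{\tilde{P}}_{\tilde{\pi}_A}| + |g^{\tilde{P}}_{\tilde{\pi}_A} - \widehat{g}^{\tilde{P}}_{\tilde{\pi}_A}|$, with the first term bounded by $\epsilon_{\texttt{PR}}$ via the chain inside the proof of Proposition \ref{prop:plan_recurrent_formal} and the second by $\tfrac{\epsilon^{\Bellman}_{\texttt{PR}}}{2} = \tfrac{\epsilon_{\texttt{PR}}}{3}$. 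Your explicit midpoint-of-the-span justification for the second term is exactly the standard relative-value-iteration fact the paper delegates to \cite{FruitUCRL22020}, so you have merely spelled out what the paper cites.
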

\begin{proof}
Continuing the same argument as in Prop \ref{prop:plan_recurrent_formal}, we have 
\begin{align*}
    0 \leq g_{\pi}^P - g_{\tilde{\pi}_{A}}^{\tilde{P}} + g_{\tilde{\pi}_{A}}^{\tilde{P}} - \widehat{g}_{\tilde{\pi}_{A}}^{\tilde{P}} \leq \epsilon_{\texttt{PR}} + \frac{\epsilon^{\Bellman}_{\texttt{PR}}}{2} = \frac{4\epsilon_{\texttt{PR}}}{3}
\end{align*}
where we use triangle inequality and appeal to Prop \ref{prop:plan_recurrent_formal} for $|g_{\pi}^P - g_{\tilde{\pi}_{A}}^{\tilde{P}}| \leq \epsilon_{\texttt{PR}}$ and \cite{FruitUCRL22020} where $|g_{\tilde{\pi}_{A}}^{\tilde{P}} - \widehat{g}_{\pi_A}^{\tilde{P}}| \leq \frac{\epsilon^{\Bellman}_{\texttt{PR}}}{2} \leq \frac{\epsilon_{\texttt{PR}}}{3}$ since $\epsilon^{\Bellman}_{\texttt{PR}} = \frac{2 \epsilon_{\texttt{PR}}}{3}$.
\end{proof}

\begin{lem}($\eta$-greedy approximation)\label{lem:eta_greedy_approx} Let $P$ be any dynamics. Let $\pi$ be a greedy policy in AMEC $(A, \A_A)$ with dynamics $P$. With $0 \leq \eta \leq 1$, let $\pi_{\eta}$ be $\eta$-greedy with respect to $\pi$. Then, for any error $\epsilon_{(1)} > 0$, there exists some threshold $\eta^\ast \in (0,1]$ such that when $\eta \in (0, \eta^\ast]$ we have
\begin{equation}
    |g_{\pi}^P - g^P_{\pi_\eta}| \leq \epsilon_{(1)}
\end{equation}
\end{lem}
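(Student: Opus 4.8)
The plan is to show that the map $\eta \mapsto g^P_{\pi_\eta}$ is continuous from the right at $\eta = 0$ with limit $g^P_\pi$, after which the claimed $\eta^\ast$ is immediate from the definition of a limit. Write $\pi_\eta = (1-\eta)\pi + \eta\,\texttt{Unif}(\A_A)$, so the induced Markov chain and one-step cost on $A$ are affine in $\eta$: $P_{\pi_\eta} = (1-\eta)P_\pi + \eta P_U$ and $c_{\pi_\eta}(s) = (1-\eta)\sum_a \pi(a|s)\Cost(s,a) + \eta\sum_a \texttt{Unif}(a)\Cost(s,a)$, where $P_U$ and its cost correspond to the uniform policy over $\A_A$. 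For every $\eta \in (0,1]$ the policy $\pi_\eta$ has full support over $\A_A$, and since $(A,\A_A)$ is an end component it is strongly connected; hence $P_{\pi_\eta}$ is irreducible on the finite set $A$. An irreducible finite chain is unichain, so its gain is state-independent and given by $g^P_{\pi_\eta} = \mu_\eta^\top c_{\pi_\eta}$, where $\mu_\eta$ is the unique stationary distribution solving $\mu_\eta^\top(I - P_{\pi_\eta}) = 0$ and $\mu_\eta^\top \one = 1$.

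The one identity I will need is that $\mu^\top c_\pi = g^P_\pi$ for every stationary distribution $\mu$ of $P_\pi$, including the reducible case. Indeed, the long-run average cost started from $\mu$ equals both $\mu^\top c_\pi$ (because $\mu^\top P_\pi^t = \mu^\top$ for stationary $\mu$) and $\sum_{s} \mu(s)\, g^P_\pi(s)$ (averaging the per-state gains). Because $(A, \A_A)$ is communicating and $\pi$ is gain-optimal there, its gain is the state-independent constant $g^P_\pi$ (as recalled in Section~\ref{sec:app:overview}), so $g^P_\pi(s) \equiv g^P_\pi$ and therefore $\mu^\top c_\pi = g^P_\pi \sum_s \mu(s) = g^P_\pi$.

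It remains to pass to the limit. Fix any sequence $\eta_n \downarrow 0$. The $\mu_{\eta_n}$ lie in the compact probability simplex over $A$, so along a subsequence $\mu_{\eta_n} \to \mu_0$. Taking limits in $\mu_{\eta_n}^\top(I - P_{\pi_{\eta_n}}) = 0$ and using $P_{\pi_{\eta_n}} \to P_\pi$ shows $\mu_0^\top(I - P_\pi) = 0$, i.e. $\mu_0$ is stationary for $P_\pi$. Since also $c_{\pi_{\eta_n}} \to c_\pi$, we get $g^P_{\pi_{\eta_n}} = \mu_{\eta_n}^\top c_{\pi_{\eta_n}} \to \mu_0^\top c_\pi = g^P_\pi$ by the identity above. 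As every convergent subsequence of the scalars $g^P_{\pi_{\eta_n}}$ has the same limit $g^P_\pi$, the full sequence converges; since $\eta_n \downarrow 0$ was arbitrary, $\lim_{\eta \to 0^+} g^P_{\pi_\eta} = g^P_\pi$. Hence for any $\epsilon_{(1)} > 0$ there is $\eta^\ast \in (0,1]$ with $|g^P_\pi - g^P_{\pi_\eta}| \le \epsilon_{(1)}$ for all $\eta \in (0,\eta^\ast]$.

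The main obstacle is that the greedy policy $\pi$ may induce a reducible chain $P_\pi$ with several recurrent classes, so ``the'' stationary distribution of $P_\pi$ is not unique and $g^P_\pi$ is a priori only defined per state; the perturbation $\mu_\eta$ selects one particular limiting stationary distribution. This is precisely what the identity in the second paragraph neutralizes: gain-optimality in the communicating AMEC forces the per-state gains to coincide, so every stationary distribution --- and thus every subsequential limit $\mu_0$ --- yields the same value $g^P_\pi$, making the limit argument go through without having to identify $\mu_0$ explicitly or invoke Markov-chain perturbation theory.
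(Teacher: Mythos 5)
Your proof is correct, but it takes a genuinely different route from the paper's. The paper works directly with the average-cost evaluation equations: it encodes the bias $v$ and gain $g$ of $\pi$ as the solution $y$ of a linear system $X_\pi y = b_\pi$ (with the constraints $g(s)=g(s')$ and $v(0)=0$ built into the matrix), bounds the perturbations $\|X_{\pi_\eta}-X_\pi\|_\infty \le 2\alpha\eta|A|$ and $\|b_{\pi_\eta}-b_\pi\|_\infty \le 2\eta c_{\max}$, and applies a standard matrix-perturbation inequality to extract an \emph{explicit} threshold $\eta^\ast = \epsilon_{(1)}/\bigl(\|X_\pi^{-1}\|_\infty(2c_{\max}+2\alpha|A|\,\|y\|_\infty)+2\alpha|A|\,\|X_\pi^{-1}\|_\infty\epsilon_{(1)}\bigr)$. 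You instead prove right-continuity of $\eta\mapsto g^P_{\pi_\eta}$ at $\eta=0$ by compactness of the simplex applied to the stationary distributions $\mu_\eta$ of the irreducible chains $P_{\pi_\eta}$, combined with the identity $\mu^\top c_\pi = g^P_\pi$ for \emph{every} stationary $\mu$ of $P_\pi$ (valid because gain-optimality on the communicating AMEC forces the per-state gains of $\pi$ to coincide). The paper's route buys a constructive, quantitative $\eta^\ast$; yours is purely existential, but the lemma and its downstream uses (Lemma~\ref{lem:simulation_avg}, Prop.~\ref{prop:plan_recurrent_formal}) only require existence, so this is sufficient. A small genuine advantage of your argument is that the stationary-distribution identity cleanly disposes of the case where the greedy $\pi$ induces a reducible chain with several recurrent classes, a case in which the well-posedness (invertibility of $X_\pi$, uniqueness of the solution with a single constant gain) of the paper's evaluation system is somewhat delicate.
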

\begin{proof}
Let $s_0,\ldots,s_{|A|-1}$ be any ordering of the states in $A$. The standard (non-optimistic) average cost Bellman equation with known dynamics $P$ is given by $\Bellman v(s) = g(s) + \min_{a \in \A_A(s)} \big(\mathcal{C}(s,a) +  P(s,a) v \big)$ for each $s \in A$ for a unique $g$ and $v$ unique up to a constant translation \cite{bertsekas2011dynamic}. Furthermore, since the end components are communicating sets then we know that $g$ is a constant vector, i.e. $g = g(s) = g(s')$ for any $s, s' \in A$ \cite{bertsekas2011dynamic}. Since $v$ is unique up to translation, we can always set $v(0) = 0$ to make $v$ unique. The evaluation equations, under policy $\pi$, is similarly, $\Bellman_{\pi} v(s) = g_{\pi} + \mathbb{E}_{a \sim \pi} \big[\mathcal{C}(s,a)] +  P_{\pi}(s,a) v $ \cite{bertsekas2011dynamic}. For more generality, instead of $P_{\pi}$ we consider $\alpha P_{\pi} + (1-\alpha)I$, an aperiodicity transform with any coefficient $\alpha \in [0, 1]$. Then the $\Bellman_{\pi}$ written as a system takes the form:
\begin{equation*}
    0_{2 |A|} = \underbrace{\left[\begin{array}{@{}c|c@{}}
          \alpha P_{\pi}-(1-\alpha)I
          & -I  \\
          \hline
          C & D
    \end{array}
    \right]}_{X_{\pi}}
    \underbrace{\begin{bmatrix} v_0 \\ \vdots \\ v_{|A|-1} \\ \hline g_0 \\ \vdots \\ g_{|A|-1} 
    \end{bmatrix}}_{y} - \underbrace{\begin{bmatrix} \E_{a \sim \pi}{\mathcal{C}(s_0, a)} \\ \vdots \\ \E_{a \sim \pi} {\mathcal{C}(s_{|A|-1}, a)} \\ 0 \\ \vdots \\ 0
    \end{bmatrix}}_{b_\pi}
\end{equation*}
with 
\begin{equation*}
C = \begin{bmatrix}
    1 &  0 & \ldots  & \\
    0 &  0 & \ldots  \\
    \vdots & \ddots & \\
    \end{bmatrix}, D = \begin{bmatrix}
    0 & \ldots &   & 0 \\
    1 & -1 &   & 0 \\
     & \ddots & \ddots  & \\
    0  &        & 1      & -1 \\
    \end{bmatrix}
\end{equation*}
This system combines $\Bellman v(s) = \E_{a \sim \pi(s)}[\mathcal{C}(s,a)] + (\alpha P_{\pi} + (1-\alpha)I)v$ together with $g(s) = g(s')$ for any $s, s' \in A$ and $v(0) = 0$.

Succinctly, $X_{\pi} y - b_\pi = 0$. Similarly, we have $X_{\pi_\eta} y' - b_{\pi_{\eta}} = 0$. Let $dX = X_{\pi_\eta} - X_{\pi}$, $db =  b_{\pi_\eta} - b_{\pi}$ and $dy = y' - y$ then $(X_{\pi} + dX)(y + dy) - (b_{\pi} + db) = 0$. Hence, 
\begin{align*}
    dy &= (X_{\pi} + dX)^{-1}(db - dX y) \\
    &= (I + X_{\pi}^{-1} dX)^{-1} X_{\pi}^{-1} (db - dX y) \\
\end{align*}
We calculate $\|dX\|_\infty:$
\begin{align*}
    \|dX\|_\infty &= \max_{s \in A} \sum_{s' \in A} |\alpha P_{\pi_\eta}(s,s') - \alpha P_{\pi}(s, s')| \\
    &= \max_{s \in A} \sum_{s' \in A} | \alpha ((1-\eta)P_{\pi}(s,s') + \eta P_{Unif}(s,s') ) - \alpha P_{\pi}(s,s') | \\
    &= \alpha \eta \max_{s \in A} \sum_{s' \in A} |  P_{Unif}(s,s') - P_{\pi}(s,s') | \\
    &\leq \alpha \eta 2 |A| \\
\end{align*}
By a similar argument, together with $\mathcal{C} \leq c_{\max}$, then $\|db\|_\infty \leq 2 \eta c_{\max}$
Hence,
\begin{align*}
    \|dy\|_\infty &\leq \|(I + X_{\pi}^{-1} dX)^{-1}\|_\infty \|X_{\pi}^{-1}\|_\infty (\|db\|_\infty + \|dX\|_\infty \|y\|_\infty) \\
    &\leq \frac{\|X_{\pi}^{-1}\|_\infty}{1 - \|X_{\pi}^{-1}\|_\infty \|dX\|_\infty} (\|db\|_\infty + \|dX\|_\infty \|y\|_\infty) \\
    &\leq \frac{\eta \|X_{\pi}^{-1}\|_\infty}{1 -  2 \alpha |A| \eta \|X_{\pi}^{-1}\|_\infty } ( 2 c_{\max} + 2 \alpha |A| \|y\|_\infty) \\
\end{align*}
By selecting 
\begin{equation*}
    \eta \leq \eta^\ast = \frac{\epsilon_{(1)}}{\|X_{\pi}^{-1}\|_\infty (2c_{\max} + 2 \alpha |A| \|y\|_\infty) + \epsilon_{(1)} 2 \alpha |A|\|X_{\pi}^{-1}\|_\infty }
\end{equation*}
we get that $\|dy\|_\infty \leq \epsilon_{(1)}$ and therefore $|g_{\pi}^P - g^P_{\pi_\eta}| \leq \epsilon_{(1)}$, as desired.
\end{proof}


\newpage
\begin{lem}(Simulation Lemma, Avg. Cost)\label{lem:simulation_avg} Fix some $\alpha \in (0,1)$ arbitrary. Let $\tilde{P}$ be the optimistic dynamics achieving the inner minimum of the Bellman equation with respect to $\Bellman^\alpha_{\texttt{PR}}$ (see Table \ref{tab:param}) in the AMEC given by $(A, \A_A)$. Let $\pi$ be the $\eta^\ast$ stochastic policy as in Lemma \ref{lem:eta_greedy_approx}. For some error $\epsilon_{(2)} > 0$. Let $m \in \mathbb{N}$ be the smallest value such that $\Delta((\alpha \tilde{P}_{\pi} + (1-\alpha) I)^m) < 1$. When $n$ is large enough that $\psi(n) \leq \frac{1}{\alpha^2 }\left(\left(\frac{\epsilon_{(2)} (1 - \Delta(\tilde{P}_{\alpha, \pi}^m))}{|A| c_{\max}} + 1 \right)^{1/m} - 1\right)$
then
\begin{equation}\label{eq:sim_lemma}
    |g_{\pi}^P - g^{\tilde{P}}_{\pi}| < \epsilon_{(2)}.
\end{equation}
Let $m = \max_{\pi \in \Pi_A} \min_{m \in \mathbb{N}} \{m | \Delta((\alpha P_{\pi_{\eta}^\ast} + (1-\alpha) I)^m) < 1 \}$ and $\bar{\Delta}_A = \max_{\pi \in \Pi_A} \Delta((\alpha P_{\pi_{\eta}^\ast} + (1-\alpha) I)^m)$ for  $\Pi_A$, the set of deterministic policies in $A$. Then, in particular, \eqref{eq:sim_lemma} holds after $n = \tilde{\mathcal{O}}(\frac{ |A|^{2} c_{\max}^2}{\epsilon_{(2)}^{2} (1- \bar{\Delta}_A)^{2}})$ samples are collected for each state-action pair in $(A, \A_A)$.
\end{lem}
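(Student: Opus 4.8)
The plan is to express each gain as an inner product of a stationary distribution with the (shared) one-step cost vector, and to push the entire discrepancy into the perturbation of the stationary distribution caused by replacing $P$ with the optimistic $\tilde{P}$. Since $\pi$ is the $\eta^\ast$-greedy policy it has full support on $\A_A$, so both Markov chains $P_\pi$ and $\tilde{P}_\pi$ are irreducible on the AMEC $A$; the aperiodicity transform $P_{\alpha,\pi}=\alpha P_\pi+(1-\alpha)I$ (and likewise $\tilde{P}_{\alpha,\pi}$) makes them primitive while leaving the stationary distribution, and hence the gain, unchanged. Writing $c(s)=\E_{a\sim\pi}[\mathcal{C}(s,a)]\le c_{\max}$ for the shared cost vector and $\mu,\tilde\mu$ for the stationary distributions of $P_{\alpha,\pi},\tilde{P}_{\alpha,\pi}$, I would first record $g^P_\pi=\mu^\top c$ and $g^{\tilde{P}}_\pi=\tilde\mu^\top c$, so that $|g^P_\pi-g^{\tilde{P}}_\pi|\le c_{\max}\,\|\mu-\tilde\mu\|_1$. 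Everything then reduces to a stationary-distribution perturbation bound.

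Next I would control $\|\mu-\tilde\mu\|_1$ through the coefficient of ergodicity $\Delta(\cdot)$, which is submultiplicative and satisfies $\|x\,M\|_1\le\Delta(M)\|x\|_1$ for every zero-sum row vector $x$ and stochastic $M$. Because $\Delta(\tilde{P}_{\alpha,\pi})$ may equal $1$ at a single step, the argument must run at the $m$-step scale, where $m$ is the smallest power with $\Delta(\tilde{P}_{\alpha,\pi}^{m})<1$ (finite, since $\tilde{P}_{\alpha,\pi}$ is primitive). Starting from $\mu=\mu P_{\alpha,\pi}^{m}$ and $\tilde\mu=\tilde\mu\,\tilde{P}_{\alpha,\pi}^{m}$ and inserting a telescoping term gives $\mu-\tilde\mu=\mu\,(P_{\alpha,\pi}^{m}-\tilde{P}_{\alpha,\pi}^{m})+(\mu-\tilde\mu)\tilde{P}_{\alpha,\pi}^{m}$; since $\mu-\tilde\mu$ is zero-sum, taking $\ell_1$ norms and absorbing the contraction yields $\|\mu-\tilde\mu\|_1\le \|P_{\alpha,\pi}^{m}-\tilde{P}_{\alpha,\pi}^{m}\|\,/\,(1-\Delta(\tilde{P}_{\alpha,\pi}^{m}))$ in the appropriate matrix norm.

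It then remains to bound the $m$-step matrix gap. Expanding $P_{\alpha,\pi}^{m}=(\tilde{P}_{\alpha,\pi}+E)^{m}$ with $E=P_{\alpha,\pi}-\tilde{P}_{\alpha,\pi}=\alpha(P_\pi-\tilde{P}_\pi)$ and collecting all monomials containing at least one factor $E$ (each remaining factor being stochastic, hence nonexpansive) gives the binomial estimate $\|P_{\alpha,\pi}^{m}-\tilde{P}_{\alpha,\pi}^{m}\|\le(1+\|E\|)^{m}-1$. Under event $\mathcal{E}$ and $\tilde{P}\in\mathcal{P}$ (Def.~\ref{def:plausible_transition}), both $P$ and $\tilde{P}$ lie within $\psi(n)$ of $\widehat{P}$ entrywise, and each AMEC transition row has at most $|A|$ nonzero successors; choosing the matrix norms carefully (so that the per-column count $|A|$ and the transform factors in $\alpha$ land exactly where the hypothesis places them) gives a bound of the form
\[
|g^P_\pi-g^{\tilde{P}}_\pi|\;\le\;\frac{|A|\,c_{\max}}{1-\Delta(\tilde{P}_{\alpha,\pi}^{m})}\Big[(1+\alpha^{2}\psi(n))^{m}-1\Big].
\]
Demanding that the bracketed quantity be at most $\epsilon_{(2)}(1-\Delta(\tilde{P}_{\alpha,\pi}^{m}))/(|A|c_{\max})$ — exactly the stated threshold on $\psi(n)$ — forces $|g^P_\pi-g^{\tilde{P}}_\pi|<\epsilon_{(2)}$, which is \eqref{eq:sim_lemma}.

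Finally, for the sample count I would linearize: for small $\psi$, $(1+\alpha^{2}\psi(n))^{m}-1=\Theta(m\alpha^{2}\psi(n))$, so the threshold reads $\psi(n)=\Omega\big(\epsilon_{(2)}(1-\Delta)/(m\alpha^{2}|A|c_{\max})\big)$; inverting with Lemma~\ref{lem:samples_req}, where $\psi^{-1}(\rho)=\tilde{\mathcal{O}}(1/\rho^{2})$, and replacing the unknown chain-specific quantities by their worst case over deterministic policies — $m$ and $\bar{\Delta}_A$ — yields $n=\tilde{\mathcal{O}}\big(|A|^{2}c_{\max}^{2}/(\epsilon_{(2)}^{2}(1-\bar{\Delta}_A)^{2})\big)$, absorbing the constants $m$ and $\alpha$ into $\tilde{\mathcal{O}}$. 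I expect the main obstacle to be precisely this last, policy-uniform step: the gain depends globally on the stationary distribution, so a one-step contraction fails whenever $\Delta(\tilde{P}_{\alpha,\pi})=1$, and the fix requires both passing to the $m$-step contraction and extracting an $m$ and an $\bar{\Delta}_A$ that are uniform over the (unknown a priori) optimistic dynamics $\tilde{P}$, so that the sample threshold can be stated before $\tilde{P}$ is ever computed.
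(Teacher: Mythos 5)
Your proposal is correct and follows essentially the same route as the paper's proof: bound the gain gap by $c_{\max}$ times the $\ell_1$ distance between stationary distributions, control that distance via the $m$-step coefficient of ergodicity (the paper cites the Seneta/Cho--Meyer perturbation bound where you re-derive it by telescoping), bound $\|P_{\alpha,\pi}^m-\tilde{P}_{\alpha,\pi}^m\|$ by the binomial estimate $(1+\|E\|)^m-1$, and invert $\psi$ while taking a worst case over deterministic policies to get $\bar{\Delta}_A$ and the stated sample count. The only deviations are cosmetic placements of the constants $|A|$ and $\alpha$, which do not affect the asymptotics.
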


\begin{proof} 
Consider, notationally, $P_{\alpha}(s,a,s') = \alpha P(s,a,s') + (1-\alpha) \mathbf{1}_{\{s = s'\}}$ be an aperiodicity transform with $\alpha \in (0,1)$. When fixed by a policy, then $P_{\alpha, \pi} = \alpha P_{\pi} + (1-\alpha) I$. By \cite{puterman2014markov} (Prop. 8.5.8), aperiodicity transforms do not affect gain. Hence $g_{\pi}^{P} = g_{\pi}^{P_{\alpha}}$ and $g_{\pi}^{\tilde{P}} = g_{\pi}^{\tilde{P}_{\alpha}}$. 
Let $x_{\pi, P_{\alpha}}$ be the stationary distribution of $\pi$ in $P_{\alpha}$ and $x_{\pi, \tilde{P}_{\alpha}}$ be the stationary distribution of $\pi$ in $\tilde{P}_{\alpha}$. These quantities exist due to the fact that $\pi$ has full support over $\A_{A}$ making both $P_{\alpha}, \tilde{P}_{\alpha}$ ergodic (finite, irreducible, recurrent, and aperiodic).
Hence,
\begin{align*}
    |g^P_{\pi} - g^{\tilde{P}}_{\pi}| &= |g^{P_{\alpha}}_{\pi} - g^{\tilde{P}_{\alpha}}_{\pi}| \\
    &= |\mathbb{E}_{s \sim x_{\pi, P_{\alpha}}}[\mathbb{E}_{a \sim \pi(s)}[\mathcal{C}(s,a)]] -  \mathbb{E}_{s \sim x_{\pi, \tilde{P}_{\alpha}}}[\mathbb{E}_{a \sim \pi(s)}[\mathcal{C}(s,a)]]| \\
    &= |\sum_{s \in A} \mathbb{E}_{a \sim \pi(s)}[\mathcal{C}(s,a)] (x_{\pi, P_{\alpha}}(s) - x_{\pi, \tilde{P}_{\alpha}}(s))| \\
    &\leq c_{\max} \| x_{\pi, P_{\alpha}} - x_{\pi, \tilde{P}_{\alpha}} \|_1 
\end{align*}
To bound $\| x_{\pi, P_{\alpha}} - x_{\pi, \tilde{P}_{\alpha}} \|_1$, we appeal to classic stationary-distribution perturbation bounds \cite{Cho_Meyer_2001}. First, since $\tilde{P}_{\alpha, \pi}$ is ergodic then $\exists m_0 < \infty$ such that for any $m \geq m_0$ then $\Delta(\tilde{P}_{\alpha, \pi}^m) < 1.$ Then, in particular, $\| x_{\pi, P_{\alpha}} - x_{\pi, \tilde{P}_{\alpha}} \|_{1} \leq \frac{\| \tilde{P}_{\alpha, \pi}^m - P_{\alpha, \pi}^m \|_\infty }{1- \Delta(\tilde{P}_{\alpha, \pi}^m)}$ \cite{seneta_1988, Cho_Meyer_2001}. Let $E = P_{\pi, \alpha} - \tilde{P}_{\pi, \alpha}$, and thus $\|E\|_\infty = \alpha \| P_{\pi} - \tilde{P}_{\pi} \|_\infty \leq \alpha |A| \psi(n)$. Then,
\begin{align*}
    \| \tilde{P}_{\alpha, \pi}^m - P_{\alpha, \pi}^m \|_\infty &= \| \tilde{P}_{\alpha, \pi}^m  - (\alpha P_{\pi} + (1-\alpha)I)^m \|_\infty \\
    &= \| \tilde{P}_{\alpha, \pi}^m - (\alpha E + \alpha \tilde{P}_{\pi} + (1-\alpha)I)^m \|_\infty \\
    &= \|  \tilde{P}_{\alpha, \pi}^m - (\alpha E + \tilde{P}_{\alpha, \pi})^m \|_\infty \\
    &\leq (\alpha \|E\|_\infty + 1)^m - 1 \\
    &\leq (\alpha^2 |A| \psi(n) + 1)^m - 1
\end{align*}
where in the second-to-last inequality uses that $\|\tilde{P}_{\alpha, \pi}\|_\infty = 1$ and $\|AB\|_\infty \leq \|A\|_\infty \|B\|_\infty$ for matrices $A,B$. Putting it all together we have that
\begin{equation}
    |g^P_{\pi} - g^{\tilde{P}}_{\pi}| \leq c_{\max} \frac{(\alpha^2 |A| \psi(n) + 1)^m - 1 }{1 - \Delta(\tilde{P}_{\alpha, \pi}^m)} 
\end{equation}
We therefore require that 
\begin{equation} \label{eq:psi_sim_lemma}
    \psi(n) \leq \frac{1}{\alpha^2 |A|}\left(\left(\frac{\epsilon_{(2)} (1 - \Delta(\tilde{P}_{\alpha, \pi}^m))}{c_{\max}} + 1 \right)^{1/m} - 1\right)
\end{equation}
to yield $|g^P_{\pi} - g^{\tilde{P}}_{\pi}| < \epsilon_{(2)}$. The equation \eqref{eq:psi_sim_lemma} also holds with $\tilde{P}$ replaced with $P$, with (some other) $m$ appropriate.

In the AMEC $(A, \A_A)$ then there are at most $|\Pi_A| = |A|^{|\A_A|}$ deterministic policies. For each policy $\pi \in \Pi_A$, there is some $\eta_\pi^\ast$ satisfying Lemma $\ref{lem:eta_greedy_approx}$. Let $m = \max_{\pi \in \Pi_A} \min_{m \in \mathbb{N}} \{m | \Delta(P_{\alpha, \pi_{\eta^\ast_\pi}}^m) < 1 \}$ and $\bar{\Delta}_A = \max_{\pi \in \Pi_A} \Delta(P_{\alpha, \pi_{ \eta^\ast_\pi}}^m) < 1$ (recall this is guaranteed because $P_{\alpha, \pi_{ \eta^\ast_\pi}}$ is ergodic). Then, when $\psi(n) < \frac{1}{\alpha^2 |A|}\left(\left(\frac{\epsilon_{(2)} (1 - \bar{\Delta}_A)}{ c_{\max}} + 1 \right)^{1/m} - 1\right)$ then $|g^P_{\pi} - g^{\tilde{P}}_{\pi}| < \epsilon_{((2)}$. By Lemma \ref{lem:samples_req}, we have $n = \mathcal{\tilde{O}}(\frac{ |A|^{ \frac{2}{m}} c_{\max}^{\frac{2}{m} }}{\epsilon_{(2)}^{\frac{2}{m}} (1- \bar{\Delta}_A)^{\frac{2}{m}}}) = \mathcal{\tilde{O}}(\frac{ |A|^{2} c_{\max}^{2}}{\epsilon_{(2)}^{2} (1- \bar{\Delta}_A)^{2}})$,
since $m =1$ achieves the maximum.
\end{proof}

\begin{rem}
We do not require knowledge of $\bar{\Delta}_A < 1$. The existence is sufficient to guarantee convergence.
\end{rem}
\begin{rem}
The function $\Delta(M)$, coefficient of ergodicity of matrix $M$, is a measure (and bound) of the second largest eigenvalue of $M$.
\end{rem}
\begin{rem}
In the main paper, we assume that $m=1$ and $\alpha=1$, for simplicity in exposition. For full rigor, $m$ may be larger, though typically small. $m$ can be seen as the smallest value making any column of $P_{\alpha, \pi}^m$ dense. 
From a computational perspective, it is efficient to compute powers of $\tilde{P}_{\alpha, \pi}$ and stop when $\tilde{P}_{\alpha, \pi}^m$ has a dense column, making $\Delta(\tilde{P}_{\alpha, \pi}) < 1$. From there, we can check if $\rho$ (Line 6, Algo \ref{algo:plan_recurrent}) satisfies the r.h.s of Eq \eqref{eq:psi_sim_lemma}. We present the samples required by maximizing over $m \in \mathbb{N}$.
\end{rem}


\newpage
\subsection{\texttt{PlanTransient} proofs}

\transientConvCorrectness*
\begin{restatable}[\texttt{PlanTransient} Convergence \& Correctness, Formal]{prop}{transientConvCorrectnessFormal}
\label{prop:plan_transient_formal}
Let $\{A_i, g_i\}_{i=1}^k$ be the set of inputs to Algorithm \ref{algo:plan_transient}, together with error $\epsilon_{\texttt{PT}} > 0$. Denote the cost- and prob-optimal policy as $\pi'$. After collecting at most $n = \tilde{\mathcal{O}}(\frac{|\SState \setminus \cup_{i=1}^k A_i|^2 \bar{V}^4}{c_{\min}^2 
\epsilon_{\texttt{PT}}^2 
\epsilon_{\varphi}^4})$ samples for each $(s,a) \in (\SState \setminus \cup_{i=1}^k A_i) \times \A$, under the event $\mathcal{E}$ and Assumption \ref{assump:lower_bound} then with probability $1-\delta$, , the greedy policy $\pi$ w.r.t.~$v'$ (Alg.~\ref{algo:plan_transient}, Line 3) is both cost and probability optimal:
\begin{equation*}
    \|\tilde{V}_{\pi} - \tilde{V}_{\pi'} \| < \epsilon_{\texttt{PT}}, \quad |\Prob{\pi \text{ reaches } \cup_{i=1}^k A_i} - \Prob{\pi' \text{ reaches } \cup_{i=1}^k A_i}| \leq \epsilon_{\varphi}.
\end{equation*}
\end{restatable}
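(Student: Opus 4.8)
The plan is to follow the approximation chain sketched in Section \ref{sec:app:overview}, namely $v \approx \tilde{V}^{\tilde{P}}_\pi \approx \tilde{V}^P_\pi \approx \tilde{V}^P_{\pi'}$, by treating \texttt{PlanTransient} as a stochastic shortest path (SSP) instance in which the AMECs $\cup_{i=1}^k A_i$ are absorbing goal states carrying terminal cost $\lambda g_i$, while trajectories that fail to reach any AMEC (those trapped in rejecting end components) accumulate the capped cost $\bar{V}/\epsilon_{\varphi}$. The cap inside $\Bellman_{\texttt{PT}}$ is precisely what keeps the value bounded in the presence of dead-ends, and is the SSP analogue of the setting treated in \cite{Tarbouriech2021, Kolobov2012DeadEnd}. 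I would split the target tolerance $\epsilon_{\texttt{PT}}$ into a VI-convergence budget, a simulation-error budget, and an optimism slack.

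First I would establish the value-iteration leg. Since $\Bellman_{\texttt{PT}}$ is the optimistic SSP Bellman operator, extended VI converges, and at the termination criterion $d_{\texttt{PT}}(v_{n+1},v_n) < \epsilon^{\Bellman}_{\texttt{PT}} = c_{\min}\epsilon_{\texttt{PT}}\epsilon_{\varphi}/(4\bar{V})$ the optimistic value $v$ is within a controlled fraction of $\epsilon_{\texttt{PT}}$ of $\tilde{V}^{\tilde{P}}_\pi$, the value of the returned greedy policy $\pi$ under the optimistic dynamics $\tilde{P}$; this is Lemma \ref{lem:EVI_bound}, where the multiplicative blow-up from the $\ell_1$ stopping test to value error is the effective horizon $\bar{V}/(c_{\min}\epsilon_{\varphi})$. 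Next I would apply the transient simulation lemma (Lemma \ref{lem:simulation_transient}) to pass from $\tilde{P}$ to the true dynamics $P$: having sampled each pair $\phi_{\texttt{PT}} = \psi^{-1}\bigl(\tfrac{c_{\min}\epsilon_{\texttt{PT}}\epsilon_{\varphi}^2}{14\,|\SState\setminus\cup_{i=1}^k A_i|\,\bar{V}^2}\bigr)$ times, the event $\mathcal{E}$ guarantees $\|P-\tilde{P}\|$ is small enough that $|\tilde{V}^{\tilde{P}}_\pi - \tilde{V}^P_\pi|$ is again a fraction of $\epsilon_{\texttt{PT}}$. By optimism, $v$ lower-bounds the true optimal SSP value, so chaining these two approximations against the optimality of $v$ yields the cost guarantee $\|\tilde{V}_\pi - \tilde{V}_{\pi'}\| < \epsilon_{\texttt{PT}}$.

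The probability guarantee I would extract from the heavy-penalty structure of $\bar{V}/\epsilon_{\varphi}$, invoking the selection of $\bar{V}$ in Prop \ref{prop:select_V}. Writing $p = \Prob{\pi \text{ reaches } \cup_{i=1}^k A_i}$ and $p^\ast = \Prob{\pi' \text{ reaches } \cup_{i=1}^k A_i}$, the decomposition \eqref{eq:val_func} shows that the fraction $1-p$ of mass that never reaches an AMEC contributes $\bar{V}/\epsilon_{\varphi}$, whereas the reachable part contributes transient-plus-gain cost bounded by $\bar{V}$. Hence a reach deficit $p^\ast - p$ inflates $\tilde{V}_\pi$ by at least $(p^\ast - p)\,\bar{V}/\epsilon_{\varphi}$ minus a term of order $\bar{V}$; if $p^\ast - p > \epsilon_{\varphi}$ this excess exceeds any cost saving (bounded by $\bar{V}$) and contradicts the $\epsilon_{\texttt{PT}}$-optimality already proved, forcing $|p - p^\ast| \le \epsilon_{\varphi}$. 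Finally I would read off the sample complexity from Lemma \ref{lem:samples_req}, where $\psi^{-1}(\rho) = \tilde{\mathcal{O}}(1/\rho^2)$, giving $\phi_{\texttt{PT}} = \tilde{\mathcal{O}}\bigl(\tfrac{|\SState\setminus\cup_{i=1}^k A_i|^2\,\bar{V}^4}{c_{\min}^2\,\epsilon_{\texttt{PT}}^2\,\epsilon_{\varphi}^4}\bigr)$, the claimed bound, valid on $\mathcal{E}$ and hence with probability $1-\delta$. I expect the main obstacle to be the simultaneous handling of cost and probability optimality under dead-ends: one must choose $\bar{V}$ via Prop \ref{prop:select_V} large enough that the capped operator makes $\pi$ lexicographically prioritize reach probability over cost and that VI does not diverge, yet small enough that the $\tilde{\mathcal{O}}(\bar{V}^4)$ sample cost remains finite, and verifying that the cap does not distort the optimal transient policy is the delicate step.
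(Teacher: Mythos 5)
Your proposal is correct and follows essentially the same route as the paper: the same decomposition of the cost error into a simulation term (Lemma \ref{lem:simulation_transient} with budget $\epsilon_{\texttt{PT}}/2$) and an optimism-plus-EVI term (Lemma \ref{lem:EVI_bound} with $\epsilon^{\Bellman}_{\texttt{PT}} = c_{\min}\epsilon_{\texttt{PT}}\epsilon_{\varphi}/(4\bar{V})$), the same contradiction argument via the $\bar{V}/\epsilon_{\varphi}$ penalty (Prop \ref{prop:select_V}) for the reachability guarantee, and the same inversion of $\psi$ via Lemma \ref{lem:samples_req} for the sample count. The only cosmetic difference is that you budget three error sources where the paper merges the VI-convergence and optimism slack into a single term.
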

\begin{proof}[Proof of \ref{prop:plan_transient}]
Convergence follows from boundedness of $\|v\| \leq \bar{V}$, and monotone convergence and is well studied \cite{puterman2014markov, Jaksch2010, TarbouriechSSPMinimax, FruitUCRL22020}.

Fix $\lambda > 0$ and drop it from the notation $V^P_{\pi, \lambda}$. Let $\tilde{V}_{\ast}^{\tilde{P}}$ be the value function for the optimal policy in $\tilde{P}$. For the approximation error, we have
\begin{align*}
0 \leq \tilde{V}^{P}_{\pi} - \tilde{V}^{P}_{\ast} &= \underbrace{\tilde{V}^{P}_{\pi} - \tilde{V}^{\tilde{P}}_{\pi}}_{(a)} + \underbrace{\tilde{V}^{\tilde{P}}_{\pi} - \tilde{V}^{P}_{\ast}}_{(b)} < \epsilon_{\texttt{PT}} 
\end{align*}
For $(a)$ we appeal to Lemma $\ref{lem:simulation_transient}$ and set  $\epsilon_{(3)} = \epsilon_{\texttt{PT}}/2$ requiring that $\psi(n) = \frac{ \epsilon_{\texttt{PT}} c_{\min} }{14 |\SState \setminus \cup_{i=1}^k A_i| \bar{V}^2 (1+\frac{1}{\epsilon_{\varphi}})^2 }$, occuring when $n = \tilde{\mathcal{O}}((\frac{|\SState \setminus \cup_{i=1}^k A_i| \bar{V}^2}{ \epsilon_{\texttt{PT}} \epsilon_{\varphi}^2 c_{\min}})^{2})$ samples per state-action pair have been collected. For $(b)$, by Lemma \ref{lem:EVI_bound}, by selecting $\epsilon^\Bellman_{\texttt{PT}} = \frac{ c_{\min} \epsilon_{\texttt{PT}} \epsilon_{\varphi}}{4 \bar{V}}$ we have that 
\begin{align*}
     V^{\tilde{P}}_{\pi} - V^P_{\ast} &\leq (1 + \frac{2 \epsilon^\Bellman_{\texttt{PT}} }{c_{\min}}) v - V^P_{\ast} \\
     &= \frac{2 \epsilon^\Bellman_{\texttt{PT}}  v}{c_{\min}} \\
     &\leq \frac{2 \bar{V} \epsilon^\Bellman_{\texttt{PT}}  }{ \epsilon_{\varphi} c_{\min}} \leq \frac{\epsilon_{\texttt{PT}}}{2}.
\end{align*} 

For the probability of satisfaction, by Prop \ref{prop:select_V}, we have that $\pi$ and $\pi^\ast$ coincide in probability of reaching the states in $\cup_{i=1}^k A_i$.
\end{proof}

\begin{prop}[Selecting a bound on $\|v\|$]\label{prop:select_V} Let $\{A_i, g_i\}_{i=1}^k$ be the set of inputs to Algorithm \ref{algo:plan_transient}. Let  $\pi'$ have maximal probability of reaching $\cup_{i=1}^k A_i$. Then, with error $\epsilon_{\varphi}>0$, bounding $\|v\|_\infty = \| \Bellman_{\texttt{PT}}v\|_\infty \leq \frac{\bar{V}}{\epsilon_{\varphi}}$ where $\bar{V} \geq \left( \frac{1}{\beta^{|\SState|}} \left( \frac{1-\beta^{|\SState|}}{1-\beta} \right) + \lambda\right) c_{\max} $ guarantees that $\pi$ returned by Algorithm \ref{algo:plan_transient} is near probability optimal: 
\begin{equation*}
    |\Prob{\pi \models \varphi} - \Prob{ \pi' \models \varphi} | < \epsilon_{\varphi}
\end{equation*}
\end{prop}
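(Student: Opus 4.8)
The plan is to reduce the claim to a comparison between the stochastic-shortest-path value of the returned policy $\pi$ and that of a probability-optimal policy $\pi'$, exploiting that the dead-end penalty $\bar V/\epsilon_{\varphi}$ is chosen far larger than any genuine transient cost. Recall the value-function form induced by $\Bellman_{\texttt{PT}}$ (the decomposition used in the proof of Theorem \ref{thm:generalization}): for any policy,
\[
\tilde V_\pi = p^{\pi}\big(J_\pi + \lambda g_\pi\big) + (1-p^{\pi})\frac{\bar V}{\epsilon_{\varphi}},
\qquad p^{\pi} \equiv \Prob{\pi \text{ reaches } \cup_{i=1}^k A_i},
\]
which holds because trajectories reaching an AMEC accumulate the finite conditional cost $J_\pi + \lambda g_\pi$, whereas trajectories stuck in a rejecting end component accumulate strictly positive cost indefinitely (Assumption \ref{assump:cost_function}) and are therefore capped at $\bar V/\epsilon_{\varphi}$ by the minimization in $\Bellman_{\texttt{PT}}$.

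First I would verify that the stated $\bar V$ upper-bounds the conditional cost $J_\pi + \lambda g_\pi$ of every policy. By Assumption \ref{assump:lower_bound}, from any state that can reach $\cup_{i=1}^k A_i$ there is a path of length at most $|\SState|$ each of whose transitions occurs with probability at least $\beta$; controlling the expected hitting time through this $\beta$-lower-bound (via a potential/recursion argument on the BFS-distance to the target) yields a bound of the form $\tfrac{1}{\beta^{|\SState|}}\tfrac{1-\beta^{|\SState|}}{1-\beta}$. Multiplying by $\cmax$ bounds $J_\pi$, and since any gain satisfies $g_\pi \le \cmax$ we get $\lambda g_\pi \le \lambda \cmax$; together these give $J_\pi + \lambda g_\pi \le \bar V$. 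Note that because the genuine cost $\bar V$ is strictly below the cap $\bar V/\epsilon_{\varphi}$ (as $\epsilon_{\varphi}<1$), the capped and uncapped regimes never conflate, which is precisely what legitimizes the decomposition above.

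Next I would run the comparison. Since $\pi$ is greedy with respect to the fixed point of $\Bellman_{\texttt{PT}}$, it minimizes the SSP value, so $\tilde V_\pi \le \tilde V_{\pi'}$; substituting the decomposition and collecting the penalty terms gives
\[
(p^{\pi'} - p^{\pi})\frac{\bar V}{\epsilon_{\varphi}}
\;\le\; p^{\pi'}\big(J_{\pi'} + \lambda g_{\pi'}\big) - p^{\pi}\big(J_\pi + \lambda g_\pi\big)
\;\le\; \bar V,
\]
where the last step drops the nonnegative subtracted quantity and uses $p^{\pi'} \le 1$ together with $J_{\pi'}+\lambda g_{\pi'}\le \bar V$ from the previous step. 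Dividing through yields $p^{\pi'} - p^{\pi} \le \epsilon_{\varphi}$, and since $\pi'$ has maximal reaching probability we also have $p^{\pi'} \ge p^{\pi}$, hence $|p^{\pi'}-p^{\pi}| \le \epsilon_{\varphi}$. Finally, by Proposition \ref{prop:plan_recurrent} every run reaching an AMEC is accepted once the recurrent policy is followed, so $\Prob{\pi \models \varphi} = p^{\pi}$ (and likewise for $\pi'$); thus maximizing the reaching probability coincides with probability optimality for $\models\varphi$, and $|\Prob{\pi \models \varphi} - \Prob{\pi' \models \varphi}| \le \epsilon_{\varphi}$ as claimed.

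I expect the main obstacle to lie in the first two ingredients rather than in the algebra. Making the decomposition rigorous requires arguing that trajectories failing to reach any AMEC genuinely saturate the cap $\bar V/\epsilon_{\varphi}$ (a consequence of strictly positive costs and the dead-end structure, not coincidence), and establishing the explicit hitting-time bound $\tfrac{1}{\beta^{|\SState|}}\tfrac{1-\beta^{|\SState|}}{1-\beta}$ uniformly over all policies, including the conditioning on $\{\tau \models \varphi\}$ built into $J_\pi$. Once those facts are secured, the optimality comparison and the passage from reaching-probability to satisfaction-probability are immediate.
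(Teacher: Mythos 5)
Your proposal is correct and follows essentially the same route as the paper's proof: the same decomposition $\tilde V_\pi = p^{\pi}(J_\pi + \lambda g_\pi) + (1-p^{\pi})\bar V/\epsilon_{\varphi}$, the same comparison $\tilde V_\pi \le \tilde V_{\pi'}$ leading to $(p^{\pi'}-p^{\pi})\bar V/\epsilon_{\varphi} \le \bar V$, and the same worst-case chain-graph hitting-time bound $\frac{1}{\beta^{|\SState|}}\frac{1-\beta^{|\SState|}}{1-\beta}$ justifying the choice of $\bar V$ (the paper merely phrases the comparison as a contradiction rather than a direct inequality). The obstacles you flag at the end are exactly the parts the paper also treats informally.
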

\begin{proof}[Proof of \ref{prop:select_V}]
Suppose $\bar{V} \geq J_{\pi} + \lambda c_{\max}$ for any $\pi \in \Pi$. Let $\frac{\bar{V}}{\epsilon_{\varphi}}$ be chosen as upper bound on $\|v\| = \|\Bellman_{\texttt{PT}} v\|$. Denote $\Prob{\pi \models \varphi}$ as $p$, and $\Prob{\pi' \models \varphi}$ as $p^\ast$. Suppose, for contradiction, $p^\ast - p > \epsilon_{\varphi}$, yet $\pi$ is returned by the Algorithm. This would imply that $\tilde{V}_{\pi} \leq \tilde{V}_{\pi'}$.

Hence, 
\begin{align*}
    0 \leq \tilde{V}_{\pi'} - \tilde{V}_{\pi}  &\leq \underbrace{p^\ast(J_{\pi'} + \lambda \sum_{i=1}^k \frac{p^\ast_i}{p^\ast} \widehat{g}^{\tilde{P}}_{\tilde{\pi}_{A_i}})}_{ \leq J_{\pi} + \lambda c_{\max}} - \underbrace{ p (J_{\pi} + \lambda \sum_{i=1}^k \frac{p_i}{p} \widehat{g}^{\tilde{P}}_{\tilde{\pi}_{A_i}})}_{\geq 0} +  \underbrace{(p -p^\ast)}_{< -\epsilon_{\varphi}}\frac{\bar{V}}{\epsilon_{\varphi}} \\
    &< J_{\pi} + \lambda c_{\max} - \bar{V} \\
    &\leq 0
\end{align*}
Hence, we have a contradiction. Thus, $|p^\ast - p| \leq \epsilon_{\varphi}$ if $\bar{V} \geq J_{\pi} + \lambda c_{\max}$ for any $\pi \in \Pi$. In fact, since the solution to $\Bellman_{\texttt{PT}}$ is deterministic, it suffices to consider only deterministic $\Pi$.

We will now bound $J_{\pi} = \mathbb{E}_{\tau \sim \Tau_{\pi}}\left[\sum_{t=0}^{\kappa_{\pi}} \Cost(s_t, \pi(s_t)) \bigg| \tau \models \varphi \right] \leq c_{\max} \mathbb{E}_{\tau \sim \Tau_{\pi}}[\kappa_{\tau} | \tau \models \varphi]$, as this is the only unknown quantity. Here $\mathbb{E}_{\tau \sim \Tau_{\pi}}[\kappa_{\tau} | \tau \models \varphi]$ is the expected number of steps it takes $\pi$ to leave the transient states. This means that a worst-case bound would be a policy that remains in the transient states as long as possible. 

We construct the worst-case scenario and give a justification, a formal proof follows from induction. Suppose the starting state is $s_0$. If $\pi$ induces a prob-1 transition back to $s_0$ then $s_0$ is recurrent, and so $\kappa_{\tau}$ would be small. Instead, $\pi$ induces a prob $1-\beta$ transition to $s_0$ and a prob $\beta$ transition to $s_1$. Notice that the transition to $s_1$ must be at least probability $\beta$ due to Assumption \ref{assump:lower_bound}. Again, if $s_1$ gave all of its probability to $s_1$ or $s_0$ then a MEC would form and strictly decrease $\kappa_{\tau}$. This process repeats until we reach state $s_{|\SState|-1}$, which has to have a self-loop. If it does not, then, again a large MEC would form and decrease $\kappa_{\tau}$. Of course, this is the well known chain graph, with easily computable expected hitting time: $\mathbb{E}_{\tau \sim \Tau_{\pi}}[\kappa_{\tau}] \leq \frac{1}{\beta^{|\SState|}} \frac{1-\beta^{|\SState|}}{1-\beta}$. By making $s_{|\SState|-1}$ the accepting state, then $\mathbb{E}_{\tau \sim \Tau_{\pi}}[\kappa_{\tau} | \tau \models \varphi] = \mathbb{E}_{\tau \sim \Tau_{\pi}}[\kappa_{\tau}] = \frac{1}{\beta^{|\SState|}} \frac{1-\beta^{|\SState|}}{1-\beta}$ achieves the bound. Any other choice of accepting states would strictly decrease $\kappa_{\tau}$. Hence, we can select
\begin{equation*}
    \bar{V} \geq \left( \frac{1}{\beta^{|\SState|}} \left( \frac{1-\beta^{|\SState|}}{1-\beta} \right) + \lambda\right) c_{\max} \geq J_{\pi} + \lambda c_{\max},
\end{equation*}
completing the proof.
\end{proof}

\begin{rem}
It may also be possible to empirically estimate $J_{\pi}$ rather than take the bound from Prop \ref{prop:select_V}, considering that we have the structure of $P$ through $\widehat{P}$. We give the high level idea. We know all of the AMECs and rejecting EC, so we have all the transient states (denoted $T$). Then for some policy $\pi$ and $P' \in \mathcal{P}$, submatrix $Q_{\pi}(s,s') = P'_{\pi}(s,s')$ for $s,s' \in T$ represents the transitions in the transient states. It is well known that $\mathbb{E}_{\tau \sim \Tau_{\pi}}[\kappa_{\tau}] = \|(I-Q)^{-1}\|_\infty$. Taking the max over all $\pi \in \Pi$, $P' \in \mathcal{P}$, and finally multiplying by $c_{\max}$ gives a bound on $J_{\pi}$.
\end{rem}

\begin{lem}\label{lem:simulation_transient}(Simulation Lemma, Transient Cost \cite{Tarbouriech2021})
Consider an MDP $(\SState, \A, \ldots).$ For any two transition functions $P', P'' \in \mathcal{P}$, policy $\pi$, and error $\epsilon_{(3)} > 0$ then
\begin{equation*}
    \| \tilde{V}^{P''}_{\pi}\|_\infty = \| \tilde{V}^{P'}_{\pi}\|_\infty \leq (1+\frac{1}{\epsilon_{\varphi}})\bar{V}, \quad \| \tilde{V}^{P'}_{\pi} - \tilde{V}^{P''}_{\pi} \|_\infty  \leq \frac{7 |\SState| \bar{V}^2 (1+\frac{1}{\epsilon_{\varphi}})^2 \psi(n)}{c_{\min}} \leq \epsilon_{(3)}
\end{equation*}
occurring after $n = \tilde{\mathcal{O}}(\frac{|\SState|^2 \bar{V}^4}{\epsilon_{(3)}^2 \epsilon_{\varphi}^4 c_{\min}^2 })$ samples from each state-action pair in $\SState \times \A$.
\end{lem}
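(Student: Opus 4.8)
The plan is to treat $\tilde{V}^P_\pi$ as the value function of a stochastic shortest path (SSP) instance on the transient states $T = \SState \setminus \cup_{i=1}^k A_i$, with the AMECs serving as an absorbing goal set carrying terminal costs $\lambda g_i$, and with the cap $\bar{V}/\epsilon_\varphi$ in $\Bellman_{\texttt{PT}}$ acting as an artificial truncation that keeps values finite even for improper $\pi$. I would first dispatch the norm bound: for $s \in A_i$ the value is exactly $\lambda g_i \le \lambda c_{\max} \le \bar{V}$ (using $g_i \le c_{\max}$ and the definition of $\bar{V}$ from Prop \ref{prop:select_V}), while for transient states $\Bellman_{\texttt{PT}}$ explicitly caps the value at $\bar{V}/\epsilon_\varphi$. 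Hence $\|\tilde{V}^P_\pi\|_\infty \le \bar{V}/\epsilon_\varphi \le (1+\frac{1}{\epsilon_\varphi})\bar{V}$ uniformly for both $P',P''$, which yields the first inequality.

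For the difference, I would write the fixed-point equations for $\tilde{V}^{P'}_\pi$ and $\tilde{V}^{P''}_\pi$ on $T$ and subtract. Because the two kernels differ only in their transition probabilities (the immediate costs $\E_{a\sim\pi}[\mathcal{C}(s,a)]$ are identical), and because $x \mapsto \min(x, \bar{V}/\epsilon_\varphi)$ is $1$-Lipschitz, the cost terms cancel and the subtraction gives, on $T$,
\[
\tilde{V}^{P'}_\pi - \tilde{V}^{P''}_\pi = Q^{P'}_\pi\big(\tilde{V}^{P'}_\pi - \tilde{V}^{P''}_\pi\big) + (P'_\pi - P''_\pi)\,\tilde{V}^{P''}_\pi,
\]
where $Q^{P'}_\pi$ is the substochastic matrix obtained by restricting $P'_\pi$ to $T$. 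Solving produces the perturbation identity $\tilde{V}^{P'}_\pi - \tilde{V}^{P''}_\pi = (I - Q^{P'}_\pi)^{-1}(P'_\pi - P''_\pi)\tilde{V}^{P''}_\pi$, reducing the task to bounding two factors.

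The first factor is controlled by the observation that $(I-Q^{P'}_\pi)^{-1}\mathbf{1}$ is the vector of expected hitting times to the goal set, so since each transient step incurs at least $c_{\min}$ cost we have $\|(I-Q^{P'}_\pi)^{-1}\|_\infty \le \|\tilde{V}^{P'}_\pi\|_\infty / c_{\min} \le (1+\frac{1}{\epsilon_\varphi})\bar{V} / c_{\min}$. The second factor obeys $\|(P'_\pi - P''_\pi)\tilde{V}^{P''}_\pi\|_\infty \le \|P'_\pi - P''_\pi\|_\infty \,\|\tilde{V}^{P''}_\pi\|_\infty$, and since $P',P''\in\mathcal{P}$ are each within $\psi_{sas'}\le\psi(n)$ of $\widehat{P}$ entrywise, every row of $P'_\pi - P''_\pi$ has $\ell_1$-norm at most $2|\SState|\psi(n)$. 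Multiplying the three bounds $(1+\frac{1}{\epsilon_\varphi})\bar{V} / c_{\min}$, $2|\SState|\psi(n)$, and $(1+\frac{1}{\epsilon_\varphi})\bar{V}$ matches the claimed bound $\mathcal{O}\!\left(\frac{|\SState|\bar{V}^2 (1+1/\epsilon_\varphi)^2 \psi(n)}{c_{\min}}\right)$ up to the absolute constant. Setting this at most $\epsilon_{(3)}$ and inverting through Lemma \ref{lem:samples_req} (which gives $\psi^{-1}(\rho)=\tilde{\mathcal{O}}(1/\rho^2)$) then produces $n = \tilde{\mathcal{O}}\big(|\SState|^2\bar{V}^4 / (\epsilon_{(3)}^2\epsilon_\varphi^4 c_{\min}^2)\big)$.

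I expect the main obstacle to be rigorously justifying the fundamental-matrix step in the presence of the cap. The identity and the hitting-time bound $\|(I-Q^{P'}_\pi)^{-1}\|_\infty \le \|\tilde{V}\|_\infty/c_{\min}$ are clean when $\pi$ is proper, but the truncation $\min(\cdot,\bar{V}/\epsilon_\varphi)$ means that on states where the cap binds the transition is effectively replaced by absorption into a cost-$\bar{V}/\epsilon_\varphi$ sink. Arguing that this $1$-Lipschitz cap only strengthens the contraction and that the augmented goal set is reached in finite expected cost is exactly the SSP content imported from \cite{Tarbouriech2021}; I would therefore lean on their simulation lemma for this step rather than re-deriving it, reserving the explicit work for adapting the constants $|\SState|$, $\bar{V}$, $\epsilon_\varphi$, and $c_{\min}$ to the present notation.
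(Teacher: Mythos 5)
Your proposal is correct and follows essentially the same route as the paper: the paper's proof is a one-line appeal to the definition of $\bar{V}$ from Prop.~\ref{prop:select_V}, Lemma 2 of \cite{Tarbouriech2021}, and Lemma~\ref{lem:samples_req}, and your argument reconstructs exactly the content of that cited SSP simulation lemma (the fundamental-matrix perturbation identity, the hitting-time bound $\|(I-Q^{P'}_\pi)^{-1}\|_\infty \leq \|\tilde{V}\|_\infty/c_{\min}$, and the row-wise $2|\SState|\psi(n)$ bound) before deferring to the same citation for the delicate cap/properness issue. The only difference is that you supply the details the paper omits, and your constant is $2$ rather than $7$, which you correctly flag as absorbed into the absolute constant.
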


\begin{proof}
Direct consequence of the definition of $\bar{V}$ from Prop \ref{prop:select_V}, application of Lemma 2 from \cite{Tarbouriech2021} and Lemma \ref{lem:samples_req}.
\end{proof}

\begin{lem}\label{lem:EVI_bound}(EVI Bound, \cite{Tarbouriech2021}) Suppose $v$ is returned by $\texttt{VI}$ with accuracy $\epsilon^\Bellman_{\texttt{PT}}$ with Bellman equation $\Bellman_{\texttt{PT}}$ (See Table \ref{tab:param}). Suppose $\pi$ is greedy with respect to $v$. If $\epsilon^\Bellman_{\texttt{PT}} \leq \frac{c_{\min}}{2}$ then, element-wise,
\begin{equation*}
    v \leq \tilde{V}^P_{\pi^\ast},\quad v \leq \tilde{V}^{\tilde{P}}_{\pi}
    \leq (1 +  \frac{2 \epsilon^\Bellman_{\texttt{PT}}}{c_{\min}}) 
    v
\end{equation*}
\end{lem}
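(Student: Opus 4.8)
The plan is to prove the two optimism inequalities and the multiplicative upper bound separately, treating $\Bellman_{\texttt{PT}}$ as a (truncated) optimistic SSP operator with absorbing goal region $\cup_{i=1}^k A_i$ carrying terminal costs $\lambda g_i$. Throughout I would run \texttt{VI} from the initialization $v_0 = V_T$ (zero on the transient states $\SState\setminus\cup_i A_i$). Because all costs are strictly positive (Assumption \ref{assump:cost_function}) and $\Bellman_{\texttt{PT}}$ is monotone, the iterates increase, $v_0 \le v_1 \le \cdots \le v$, and converge to the fixed point of $\Bellman_{\texttt{PT}}$ from below; in particular $v$ lies below that fixed point, and the one-step residual $r \equiv \Bellman_{\texttt{PT}} v - v$ is nonnegative.

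For the optimism claims I first note that, under $\mathcal{E}$, the true dynamics satisfy $P \in \mathcal{P}$, so the inner minimization $\min_{p\in\mathcal{P}(s,a)} p^\top v$ never exceeds $P(s,a,\cdot)^\top v$; hence $\Bellman_{\texttt{PT}} w \le \Bellman^P w$ pointwise for all $w$, where $\Bellman^P$ is the exact SSP operator under $P$ (with fixed point $\tilde V^{P}_{\pi^\ast}$). By the standard monotonicity-of-fixed-points argument the fixed point of $\Bellman_{\texttt{PT}}$ is $\le \tilde V^{P}_{\pi^\ast}$, and since $v$ is below that fixed point we get $v \le \tilde V^{P}_{\pi^\ast}$. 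For the second inequality, let $\tilde P$ and $\pi$ be the minimizing transition function and the greedy policy realized at $v$, so that the policy operator satisfies $\Bellman^{\tilde P}_\pi v = \Bellman_{\texttt{PT}} v \ge v$; iterating the monotone operator $\Bellman^{\tilde P}_\pi$ and passing to the limit yields $v \le \tilde V^{\tilde P}_\pi$.

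The substantive part is the multiplicative upper bound. The stopping rule $d_{\texttt{PT}}(v_{n+1},v_n) = \|v_{n+1}-v_n\|_1 < \epsilon^{\Bellman}_{\texttt{PT}}$ gives $0 \le r(s) \le \|r\|_\infty \le \|r\|_1 \le \epsilon^{\Bellman}_{\texttt{PT}}$. Since $\tilde V^{\tilde P}_\pi$ is the fixed point of the linear policy operator $\Bellman^{\tilde P}_\pi$, subtracting $\Bellman^{\tilde P}_\pi v = v + r$ from $\tilde V^{\tilde P}_\pi = \Bellman^{\tilde P}_\pi \tilde V^{\tilde P}_\pi$ produces the residual identity $(I-\tilde P_\pi)(\tilde V^{\tilde P}_\pi - v) = r$, i.e. $\tilde V^{\tilde P}_\pi - v = \sum_{t\ge 0}\tilde P_\pi^t r$, which I read probabilistically as $\tilde V^{\tilde P}_\pi(s) - v(s) = \mathbb{E}\big[\sum_{t=0}^{\kappa-1} r(s_t)\mid s_0=s\big] \le \epsilon^{\Bellman}_{\texttt{PT}}\,\mathbb{E}[\kappa\mid s_0=s]$, where $\kappa$ is the hitting time of the goal region under $\tilde P_\pi$. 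The positive-cost bound converts value into time: $\tilde V^{\tilde P}_\pi(s) \ge c_{\min}\,\mathbb{E}[\kappa\mid s_0=s]$, so $\mathbb{E}[\kappa\mid s_0=s]\le \tilde V^{\tilde P}_\pi(s)/c_{\min}$. Substituting gives the self-referential inequality $\tilde V^{\tilde P}_\pi(s)\big(1-\epsilon^{\Bellman}_{\texttt{PT}}/c_{\min}\big)\le v(s)$; with the hypothesis $\epsilon^{\Bellman}_{\texttt{PT}}\le c_{\min}/2$ the bracket is $\ge 1/2$, and $\tfrac{1}{1-x}\le 1+2x$ for $x=\epsilon^{\Bellman}_{\texttt{PT}}/c_{\min}\in[0,1/2]$ yields $\tilde V^{\tilde P}_\pi \le (1+2\epsilon^{\Bellman}_{\texttt{PT}}/c_{\min})\,v$, as claimed.

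I expect the main obstacle to be justifying the residual identity and the finiteness of $\mathbb{E}[\kappa]$: one must know that the greedy $\pi$ is proper (reaches $\cup_{i=1}^k A_i$ almost surely under $\tilde P_\pi$) so that $\tilde P_\pi$ is substochastic on the transient states and $(I-\tilde P_\pi)^{-1}=\sum_t \tilde P_\pi^t$ converges, and that the truncation $\min\{\cdot,\bar V/\epsilon_\varphi\}$ in $\Bellman_{\texttt{PT}}$ does not break the linearity used in the identity. This is exactly where the cap on $\bar V$ from Prop.~\ref{prop:select_V} is needed: it keeps the operator well defined and \texttt{VI} convergent even when rejecting end components exist. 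Given that machinery, the computation above is precisely the extended-VI analysis of \cite{Tarbouriech2021} specialized to $\Bellman_{\texttt{PT}}$, so the cleanest writeup verifies that $\Bellman_{\texttt{PT}}$ meets the SSP hypotheses (strictly positive costs, a reachable absorbing goal region, and $P\in\mathcal{P}$ under $\mathcal{E}$) and then invokes their lemma.
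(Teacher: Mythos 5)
Your proposal is correct in substance, but note that the paper does not actually prove this lemma at all: it is imported wholesale from \cite{Tarbouriech2021} (the lemma header is the citation, and no proof environment follows it in the appendix). What you have written is a reconstruction of the standard extended-value-iteration argument that the citation points to, and the reconstruction is sound: optimism of $v$ below both $\tilde{V}^P_{\pi^\ast}$ and $\tilde{V}^{\tilde{P}}_{\pi}$ via monotone iterates from below and $P\in\mathcal{P}$ under $\mathcal{E}$; the residual identity $\tilde{V}^{\tilde{P}}_{\pi}-v=\sum_{t\ge 0}\tilde{P}_\pi^t r$ with $\|r\|_\infty\le\|r\|_1\le\epsilon^{\Bellman}_{\texttt{PT}}$; the conversion $\mathbb{E}[\kappa\mid s_0=s]\le \tilde{V}^{\tilde{P}}_{\pi}(s)/c_{\min}$; and the self-referential inequality resolved by $\epsilon^{\Bellman}_{\texttt{PT}}\le c_{\min}/2$ and $\frac{1}{1-x}\le 1+2x$ on $[0,1/2]$. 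This is exactly the mechanism behind the cited result, so your route is not different in kind, only more explicit.

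Two caveats you already flag deserve to be stated as part of the proof rather than as ``expected obstacles.'' First, the residual identity is only valid at states where the truncation $\min\{\cdot,\bar{V}/\epsilon_{\varphi}\}$ is inactive; at states where it is active (e.g., inside rejecting end components) both $v$ and the capped value $\tilde{V}^{\tilde{P}}_{\pi}$ saturate at $\bar{V}/\epsilon_{\varphi}$, so the claimed inequality holds there trivially, and the linear argument should be run only on the remaining states, with the capped states folded into the terminal boundary. Second, the convergence of $\sum_t\tilde{P}_\pi^t$ requires the greedy $\pi$ to be proper on the non-capped states under $\tilde{P}$; this follows from the boundedness $v\le\bar{V}/\epsilon_{\varphi}$ together with $c_{\min}>0$ (a state from which $\pi$ fails to reach $\cup_i A_i$ with positive probability would force the untruncated value, hence $v$ absent truncation, to exceed any finite bound). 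Neither point changes the conclusion, but a self-contained writeup should handle them explicitly rather than defer entirely to \cite{Tarbouriech2021}.
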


\newpage
\section{Conjecture on Sample Complexity}\label{app:conjecture}

As we have proven in Theorem \ref{thm:generalization}, the optimal policy creates a set of AMECs which coincide with ${(A_i, \A_i)}_{i=1}^k$. For any potential AMEC, we need to guarantee probabilistic closure. For each state-action pair $(s,a) \in A_i \times \A_{A_i}$ we have to sample enough times to guarantee that we have ``collected'' all of the possible unique transitions $(s,a,s')$. Indeed, this is similar to the famous coupon collection problem, where we want to know how much time it will take to collect all unique transitions $(s,a,s')$. Suppose there are $m$ unique tuples each with probability $\beta = \frac{1}{m}$.

We can use a Chebyshev-based lower bound:
$$
\Prob{N > m \log m - \log(\frac{1}{\delta}) m } \geq \delta
$$
Simplifying, we get that $\Prob{N > m \log(\frac{m}{\delta}) } \geq \delta$. Thus, the number of transitions needed is 
$$
N = \Omega(m \log(\frac{m}{\delta})) = \Omega(\frac{1}{\beta} \log(\frac{1}{\beta\delta})) = \tilde{\Omega}(\frac{1}{\beta})
$$

Further, \cite{Wu2019Support} show that indeed $N \geq \frac{\beta}{\log{\beta}}$.
\newpage
\section{Additional Algorithms}\label{app:aditional_algorithms}

In this section we discuss the additional subroutines used in this paper. We discuss the case where selecting $\cup_{i=1}^k A_i$ as the terminal states for SSP in Algo \ref{algo:main} can fail and an alternative solution.

\subsection{Value Iteration}

Our version of Value Iteration $\texttt{VI}$ (Algo \ref{algo:VI}) is a two-in-one version, due to the similarity of Relative VI (used in $\texttt{PlanRecurrent}$) and SSP (used in $\texttt{PlanTransient}$). The general idea is that you apply the Bellman Operator $\Bellman$ onto your current iterate $v_n$ repeatedly until $d(v_{n+1}, v_n)$ exceeds $\epsilon$. When we wish to find the gain, then $V_T$ (terminal states) is empty, and we use shifting by the first value of $v_n(0)$ for stability \cite{bertsekas2011dynamic}. In other words, we subtract $v_n(0)$ from every value of $v_n$. On the other hand, if a set of terminal costs is provided then these represent the set of states that we want to reach through SSP and the value $v_n(s) = V_T(s)$ is known and must be kept fixed throughout applications of $\Bellman.$ The only difference in our application of $\Bellman$ over standard Bellman operators is that $\Bellman$ is optimistic and has an interior minimization over $\min_{p \in \mathcal{P}(s,a)} p^T v_n$ (See Table \ref{tab:param}). To solve this, minimization we use a modified version from \cite{Jaksch2010} given in Algo \ref{algo:inner_min_P}. The idea of Algo \ref{algo:inner_min_P} is simple: put all the mass of $\tilde{P}$ onto the lowest possible values of $v_n$ while still being consistent with $\widehat{P}$. This is efficient as it requires an ordering over $v$ and then a single pass over the states $\tilde{S}$. The calculated probability $p(\tilde{s}_l)$ (see Algo \ref{algo:inner_min_P}) are what we call the optimistic dynamics $\tilde{P}(s,a,\tilde{s}_l)$.

\begin{minipage}{1\textwidth}
\begin{algorithm}[H]
	\begin{small}
	\centering
	\caption{Value Iteration $(\texttt{VI})$} 
	\label{algo:VI}
	\begin{algorithmic}[1]
	    \REQUIRE Optimistic Bellman Operator $\Bellman$, Error Measure $d$, accuracy $\epsilon > 0$, $V_T$ terminal values (optional)
	    \STATE Set $n=0, v_0 = 0_S, v_1 = \Bellman v_0$
	    \REPEAT
	        \STATE $n \pp 1$
	        \IF{$V_T$ is empty}
	            \STATE Shift $v_n \leftarrow v_n - v_n(0)\mathbf{1} $ \hfill $\COMMENT{\text{Relative Value Iteration}}$
	        \ELSE
	            \STATE $v_n(s) \leftarrow V_T(s)$ for $s \in V_T$ \hfill$\COMMENT{\text{SSP}}$
	        \ENDIF
	        \STATE Apply operator $v_{n+1}, \tilde{P} \leftarrow \Bellman v_n $ \hfill$\COMMENT{\text{Bellman Backup}}$
	    \UNTIL{$d(v_{n+1}, v_n) > \epsilon$}
	\RETURN $v_{n+1}, v_n, \tilde{P}$
	\end{algorithmic}
	\end{small}
\end{algorithm}
\end{minipage}

\begin{minipage}{1\textwidth}
\begin{algorithm}[H]
	\begin{small}
	\caption{$\texttt{InnerMin}$ (for \texttt{PT}/\texttt{PR}) } 
	\label{algo:inner_min_P}
	\begin{algorithmic}[1]
	\REQUIRE A set of states $\tilde{\SState}$, current estimate from VI $v_n$, estimates $\widehat{P}(s,a,\cdot)$ for a specific $(s,a)$ pair with $s \in \tilde{\SState}$, errors $\psi(n)$, lower bound $\beta$ (See Assumption \ref{assump:lower_bound}) \\
	\STATE Sort $\tilde{\SState}= \{\tilde{s}_1, \tilde{s}_2, \ldots, \tilde{s}_m\}$ according to $v_n(\tilde{s}_1) \leq v_n(\tilde{s}_2) \leq \ldots \leq v_n(\tilde{s}_m)$, where $v_n$ is the current \\ 
	\STATE Set
	\begin{equation*}
	    p(\tilde{s}_1) = \begin{cases}
	                        \min(1-\beta, \widehat{P}(s,a, \tilde{s}_1) + \psi(n)),& \widehat{P}(s,a, \tilde{s}_1) \not\in \{0,1\} \\
	                        1,& \widehat{P}(s,a, \tilde{s}_1) = 1\\
	                        0,& \widehat{P}(s,a, \tilde{s}_1) = 0
	                     \end{cases}
	\end{equation*}
	\STATE For remaining $j > 1$, set $p(\tilde{s}_j) = \widehat{P}(s,a, \tilde{s}_j)$ \\
	\STATE Set $l \leftarrow m$
	\WHILE{$\sum_{\tilde{s}_j \in \tilde{\SState}} p(\tilde{s}_j)> 1$}
	    \STATE Reset 
	    \begin{equation*} 
    	    p(\tilde{s}_l) = \begin{cases}
    	                        \max(\beta, 1- \sum_{\tilde{s}_j \neq \tilde{s}_l} p(\tilde{s}_j)),& \widehat{P}(s,a, \tilde{s}_l) \not\in \{0,1\} \\
    	                        1,& \widehat{P}(s,a, \tilde{s}_l) = 1\\
    	                        0,& \widehat{P}(s,a, \tilde{s}_l) = 0
    	                     \end{cases}
	    \end{equation*}
	    \STATE Decrement $l \leftarrow l-1$
	\ENDWHILE
	\STATE Set $\tilde{P}(s,a,\tilde{s}) = p(\tilde{s})$ for each $\tilde{s} \in \tilde{S}$
	\RETURN $\tilde{P}(s,a,\tilde{s})$
	\end{algorithmic}
	\end{small}
\end{algorithm}
\end{minipage}

\subsection{Modified Algorithm handling Blocking Failure in Algorithm \ref{algo:main}} \label{app:sec:blocking}

\begin{figure}[!htp]
\centering
\includegraphics[width=.3\linewidth]{./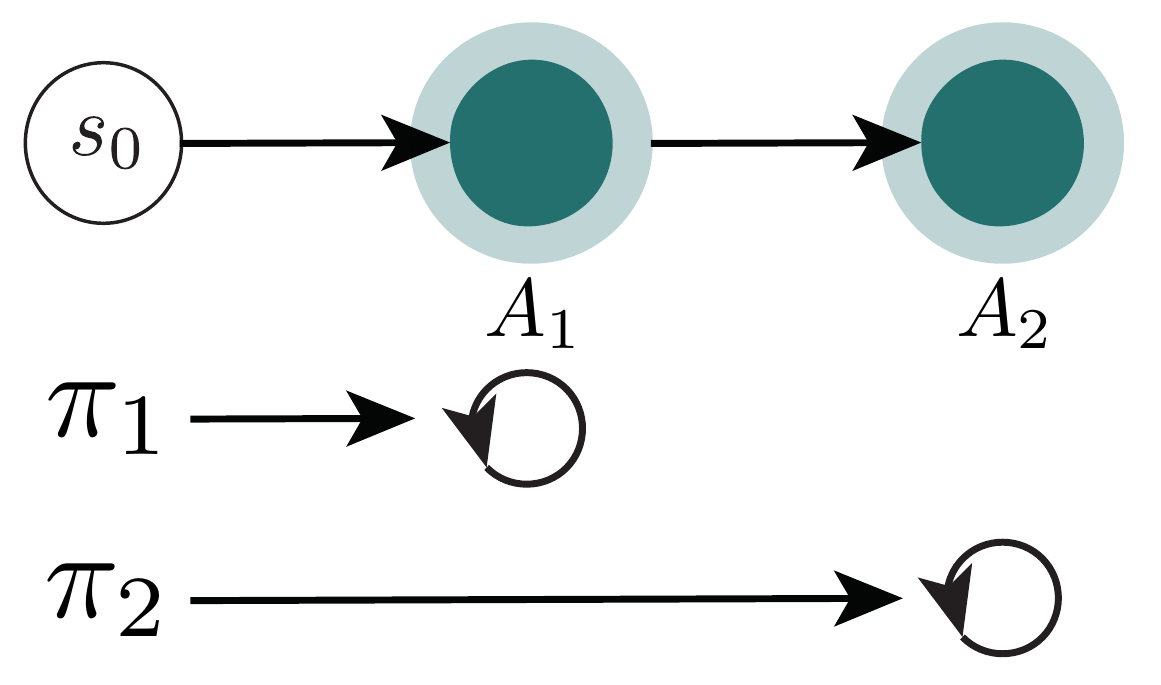}
\caption{\textit{Blocking Issue. If $A_1$ is included in the terminal AMECs (the states we want to reach) then once it is reached $\pi_{A_1}$ is instantiated and $A_1$ becomes recurrent, implying only $\pi_1$ is considered. However, even though it may be the case that $J_{\pi_1} < J_{\pi_2}$, we may still have $V_{\pi_1} > V_{\pi_2}$. This example demonstrates the necessity to pick the terminal AMECs properly, rather than just the union of all AMECs found, to avoid blocking.
}}
 \label{fig:blocking}
\end{figure}

One of the failure modes of Algorithm \ref{algo:main} is in its selection of which AMECs are the necessary AMECs to reach. In fact, by selecting unnecessary AMECs, the SSP procedure fails to treat some AMECs as transient states when in fact, maybe, lower cost could have been achieved if they were. One way to see this is to consider a single directional chain of AMECS (See Figure \ref{fig:blocking}). In the figure, two policies can be considered: (1) $\pi_1$ that reaches for $A_1$ and then starts $\pi_{A_1}$ when $A_1$ is reached, and (2) $\pi_2$ that reaches for $A_2$ and then starts $\pi_{A_2}$ when $A_2$ is reached. It may be the case that $V_{\pi_2} < V_{\pi_1}$ despite $J_{\pi_2} > J_{\pi_1}$, since it requires a longer cost path to reach the desired AMEC. Despite this observation, when $A_1$ is selected as terminal states in the subroutine $\texttt{PlanTransient}$ (Algo \ref{algo:plan_transient}), we disallow consideration of $\pi_2$ at all. As explained in the proof of Theorem \ref{thm:generalization}, whatever AMECs are induced by $\pi^\ast$ coincide with $AMEC = \{A_i, \A_{A_i}\}_{i=1}^k$. Let $\Omega = 2^{AMEC} \setminus \varnothing$, all non-empty subsets of AMECs (possible targets). Since all accepting trajectories of $\pi^\ast$ land in an AMEC, then another way of looking at $\pi^\ast$ is:
\begin{equation*}
    \pi^\ast = \min_{\omega \in \Omega} \min_{\pi \in \tilde{\Pi}(\omega)} V_{\pi} 
\end{equation*}
where $\tilde{\Pi}(\omega) = \{\pi \in \Pi_{\max} | \pi(s,a) = \pi_{A_i}(s,a) \text{ for } s \in A_i \in \omega, a \in \A_{A_i}(s) \}$, which is a policy class where the only degrees of freedom are outside of $\omega$. In other words, $\pi \in \tilde{\Pi}(\omega)$ is followed until the trajectory hits $A_i \in \omega$ and then $\pi_{A_i}$ is followed thereafter.

We will reconcile this failure mode of $\texttt{PlanTransient}$ through a modified, nonblocking, subroutine $\texttt{NoBlockPlanTransient}$ (Algo \ref{algo:no_block_plan_transient}). 

\begin{minipage}{\textwidth}
\begin{algorithm}[H]
	\begin{small}
	\caption{$\texttt{NoBlockPlanTransient}$ ($\texttt{NB-PT}$) } 
	\label{algo:no_block_plan_transient}
	\begin{algorithmic}[1]
	    \REQUIRE States \& gains: $\{(A_i, g_i)\}_{i=1}^k$, err. $\epsilon_{\texttt{PT}} > 0$
	    
	    \STATE Set $v(s) = \infty$ for each $s \in \SState$.
	    \STATE Sample $\phi_{\texttt{PT}}$ times $\forall (s,a)\in \SState \times \A$  
	    \FOR{$\omega \in 2^{\{A_i\}_{i=1}^k} \setminus \varnothing$}
	        \STATE Set $V_T(s) = \lambda g_i$ for $s \in A_i \subseteq \omega$ 
	        \STATE $v_{\omega}', v_{\omega}, \tilde{P}  \leftarrow \texttt{VI}(\Bellman_{\texttt{PT}}, d_{\texttt{PT}}, \epsilon^\Bellman_{\texttt{PT}}, V_T)$
	        \IF{$\mathbb{E}_{s \sim d_0}[v_{\omega}'(s)] < \mathbb{E}_{s \sim d_0}[v(s)]$}
	            \STATE Set $v = v_{\omega}'$
	        \ENDIF  
	    \ENDFOR
	\STATE Set $\pi \leftarrow $greedy policy w.r.t $v$
	\RETURN $\pi$
	\end{algorithmic}
	\end{small}
\end{algorithm}
\end{minipage}

The proof of correctness follows from the fact that $v'_\omega$ closely tracks $V_{\pi}$ where $\pi$ is greedy wrt $v'_\omega$. Then, selecting the smallest $V_{\pi}$ coincides with $V_{\pi^\ast}$.

\begin{prop}[Proof of Correctness and Convergence of \texttt{NoBlockPlanTransient}]\label{prop:no_block_plan_transient}
After collecting at most $n = \tilde{\mathcal{O}}(\frac{|\SState|^2 \bar{V}^4}{c_{\min}^2 
\epsilon_{\texttt{PT}}^2 
\epsilon_{\varphi}^4})$ samples for each $(s,a) \in \SState \times \A$, under the event $\mathcal{E}$ and Assumption \ref{assump:lower_bound} then with probability $1-\delta$, , the greedy policy $\pi$ w.r.t.~$v'$ (Alg.~\ref{algo:plan_transient}, Line 3) is both cost and probability optimal:
\begin{equation*}
    \|\tilde{V}_{\pi} - \tilde{V}_{\pi^\ast} \| < \frac{3 \epsilon_{\texttt{PT}}}{2}, \quad |\Prob{\pi \models \varphi} - \Prob{\pi^\ast \models \varphi}| \leq \epsilon_{\varphi}.
\end{equation*}
\end{prop}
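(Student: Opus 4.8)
The plan is to reduce the analysis of \texttt{NoBlockPlanTransient} to a collection of independent \texttt{PlanTransient} instances, one per candidate target set, and then control the error introduced by the outer selection loop. First I would invoke the reformulation established just above the statement: since the accepting end components of $\pi^\ast$ coincide with a subset of the AMECs $\{(A_i,\A_{A_i})\}_{i=1}^k$ (by the argument opening the proof of Theorem~\ref{thm:generalization}), the global optimum can be written as $\pi^\ast \in \argmin_{\omega \in \Omega}\min_{\pi \in \tilde{\Pi}(\omega)} V_\pi$, where $\Omega = 2^{\{A_i\}}\setminus\varnothing$ and $\tilde{\Pi}(\omega)$ is the class that follows a free transient policy until hitting some $A_i \in \omega$ and plays $\pi_{A_i}$ thereafter. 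The key observation is that for each fixed $\omega$, the inner call $\texttt{VI}(\Bellman_{\texttt{PT}}, d_{\texttt{PT}}, \epsilon^{\Bellman}_{\texttt{PT}}, V_T)$ in Algorithm~\ref{algo:no_block_plan_transient} is exactly the stochastic-shortest-path instance analyzed in Proposition~\ref{prop:plan_transient_formal}, but with terminal set $\cup_{A_i \in \omega} A_i$ in place of the full union. Consequently $v'_\omega$ and its greedy policy $\pi_\omega$ inherit the guarantees of Lemma~\ref{lem:EVI_bound} (optimism and the EVI sandwich) and Lemma~\ref{lem:simulation_transient} (the transient simulation lemma), with the AMECs excluded from $\omega$ correctly treated as escapable transient states under the full action set $\A$ --- which is precisely what removes the blocking artifact of Figure~\ref{fig:blocking}.

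For the cost guarantee I would argue as follows. Let $\omega^\ast \in \Omega$ be the target set realized by $\pi^\ast$; by the reformulation $\pi^\ast$ is the optimal member of $\tilde{\Pi}(\omega^\ast)$, so the optimism half of Lemma~\ref{lem:EVI_bound} gives $v'_{\omega^\ast} \leq \tilde{V}^{P}_{\pi^\ast}$ elementwise. Since Algorithm~\ref{algo:no_block_plan_transient} retains the $\hat{\omega}$ minimizing $\mathbb{E}_{s \sim d_0}[v'_\omega(s)]$, we obtain $\mathbb{E}_{d_0}[v'_{\hat{\omega}}] \leq \mathbb{E}_{d_0}[v'_{\omega^\ast}] \leq \mathbb{E}_{d_0}[\tilde{V}^{P}_{\pi^\ast}]$. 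I would then chain, for the returned policy $\pi$ (greedy w.r.t.\ $v'_{\hat{\omega}}$): the transient simulation lemma to pass from $\tilde{V}^{P}_{\pi}$ to $\tilde{V}^{\tilde{P}}_{\pi}$ (error $\tfrac{\epsilon_{\texttt{PT}}}{2}$ after setting $\epsilon_{(3)} = \tfrac{\epsilon_{\texttt{PT}}}{2}$), the EVI sandwich $\tilde{V}^{\tilde{P}}_{\pi} \leq (1+\tfrac{2\epsilon^{\Bellman}_{\texttt{PT}}}{c_{\min}})v'_{\hat{\omega}}$ together with $\|v'_{\hat{\omega}}\| \leq \bar{V}/\epsilon_\varphi$ and the choice $\epsilon^{\Bellman}_{\texttt{PT}} = \tfrac{c_{\min}\epsilon_{\texttt{PT}}\epsilon_\varphi}{4\bar{V}}$ to absorb the multiplicative gap into another $\tfrac{\epsilon_{\texttt{PT}}}{2}$, and finally the selection inequality above. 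Taking expectations under $d_0$ throughout and using that $\pi^\ast$ is globally optimal (so $\tilde{V}^{P}_\pi - \tilde{V}^{P}_{\pi^\ast} \geq 0$), careful bookkeeping of these halves yields $\|\tilde{V}_\pi - \tilde{V}_{\pi^\ast}\| < \tfrac{3\epsilon_{\texttt{PT}}}{2}$.

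The probability guarantee and the sample bound I would obtain by reusing earlier machinery. For probability optimality, the $\bar{V}/\epsilon_\varphi$ cap in $\Bellman_{\texttt{PT}}$ forces each $v'_\omega$ to prioritize reaching $\omega$ with maximal probability, so the Proposition~\ref{prop:select_V} argument applies to the selected $\hat{\omega}$ and yields $|\Prob{\pi \text{ reaches } \cup_i A_i} - \Prob{\pi^\ast \text{ reaches } \cup_i A_i}| \leq \epsilon_\varphi$; combined with $\Prob{\pi \models \varphi} = \Prob{\pi \text{ reaches } \cup_i A_i}$ (Proposition~\ref{prop:plan_recurrent}, since every entered AMEC is accepting) this is the claim. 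For the sample complexity, note that Algorithm~\ref{algo:no_block_plan_transient} samples every $(s,a) \in \SState \times \A$ rather than only those outside the AMECs, with the same per-pair count $\phi_{\texttt{PT}}$; Lemma~\ref{lem:simulation_transient} and Lemma~\ref{lem:samples_req} then replace the $|\SState\setminus \cup_i A_i|^2$ factor of Proposition~\ref{prop:plan_transient_formal} by $|\SState|^2$, giving $n = \tilde{\mathcal{O}}(\tfrac{|\SState|^2 \bar{V}^4}{c_{\min}^2 \epsilon_{\texttt{PT}}^2 \epsilon_\varphi^4})$. Convergence is immediate: each of the $2^k - 1$ inner \texttt{VI} calls converges by boundedness ($\|v\| \leq \bar{V}/\epsilon_\varphi$) and monotonicity as in Proposition~\ref{prop:plan_transient_formal}, and the outer loop is finite.

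The main obstacle I anticipate is the cross-$\omega$ comparison, because the selection is made on \emph{optimistic} estimates $\mathbb{E}_{d_0}[v'_\omega]$, which are only one-sided. I must rule out that minimizing these optimistic values selects a target set $\hat{\omega}$ whose greedy policy is far from optimal; the remedy is exactly the pairing of the selection inequality evaluated at $\omega^\ast$ with the two-sided EVI sandwich evaluated at $\hat{\omega}$, so that optimism is used in the favorable direction at $\omega^\ast$ while the sandwich controls the loss at $\hat{\omega}$. A secondary delicacy is that all comparisons hold under $d_0$ rather than pointwise, so I would state the cost guarantee as a bound on $\mathbb{E}_{d_0}[\tilde{V}_\pi - \tilde{V}_{\pi^\ast}]$ and verify that the $\tfrac{3\epsilon_{\texttt{PT}}}{2}$ slack genuinely accommodates the stacked simulation, EVI-gap, and selection errors.
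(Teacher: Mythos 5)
Your proposal is correct and follows essentially the same route as the paper: it decomposes the error for the selected $\hat{\omega}$ into a simulation term (Lemma \ref{lem:simulation_transient}), an EVI-sandwich term (Lemma \ref{lem:EVI_bound}), and the selection inequality against $\omega^\ast$, reuses the Proposition \ref{prop:select_V} argument for the probability claim, and maxes the per-$\omega$ sample counts over $\Omega$ to obtain the $|\SState|^2$ factor. The only cosmetic difference is that you invoke the one-sided optimism $v'_{\omega^\ast}\leq \tilde{V}^P_{\pi^\ast}$ directly, whereas the paper routes through $v_{\omega'}\leq \tilde{V}^{\tilde{P}}_{\pi_{v_{\omega'}}}$ plus an extra simulation step before identifying the optimal $\omega'$ with $\pi^\ast$ --- both are the same use of Lemma \ref{lem:EVI_bound} and yield the stated $\frac{3\epsilon_{\texttt{PT}}}{2}$ slack.
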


\begin{proof}
Suppose $v_{\omega} < v_{\omega}'$ for any $\omega' \in  \Omega$, with $\omega \in \Omega$. Fix some $\omega'$. Denote the greedy policies $\pi_{v_{\omega}}, \pi_{v_{\omega'}}$ wrt $v_{\omega}, v_{\omega'}$. Suppose $\tilde{V}_{\pi_{v_{\omega'}}} < \tilde{V}_{\pi_{v_{\omega}}}$. Then an error was made and 
\begin{align*}
    0 \leq \tilde{V}^P_{\pi_{v_{\omega}}} - \tilde{V}^P_{\pi_{v_{\omega'}}} &\leq \tilde{V}^P_{\pi_{v_{\omega}}} - \tilde{V}^{\tilde{P}}_{\pi_{v_{\omega}}} + \tilde{V}^{\tilde{P}}_{\pi_{v_{\omega}}} - v_{\omega} + v_{\omega} - v_{\omega'} + v_{\omega'} - \tilde{V}^{\tilde{P}}_{\pi_{v_{\omega'}}} + \tilde{V}^{\tilde{P}}_{\pi_{v_{\omega'}}} - \tilde{V}^P_{\pi_{v_{\omega'}}} \\
    &\leq \frac{\epsilon_{\texttt{PT}}}{2} + \frac{\epsilon_{\texttt{PT}}}{2} + 0 + 0 + \frac{\epsilon_{\texttt{PT}}}{2}  \\
    &\leq \frac{3\epsilon_{\texttt{PT}}}{2}
\end{align*}
where the second line comes from grouping each pair of elements from the first line and applying the bounds found in proof of Proposition \ref{prop:plan_transient_formal}.

On the other hand, suppose $p + \epsilon_{\varphi}= \Prob{\pi_{v_{\omega}} \models \varphi} + \epsilon_{\varphi} < \Prob{\pi_{v_{\omega'}} \models \varphi} = p'$. The same proof as in Prop \ref{prop:select_V} applies to show that the probability of satisfaction remains close:
\begin{align*}
    0 \leq \tilde{V}_{\pi_{v_{\omega'}}} - \tilde{V}_{\pi_{v_{\omega}}}  &\leq \underbrace{p'(J_{\pi_{v_{\omega'}}} + \lambda \sum_{i=1}^k \frac{p'_i}{p'} \widehat{g}_{\tilde{\pi}_{A_i}})}_{ \leq J_{\pi} + \lambda c_{\max}} - \underbrace{ p (J_{\pi_{v_{\omega}}} + \lambda \sum_{i=1}^k \frac{p_i}{p} \widehat{g}_{\tilde{\pi}_{A_i}})}_{\geq 0} +  \underbrace{(p -p')}_{< -\epsilon_{\varphi}}\frac{\bar{V}}{\epsilon_{\varphi}} \\
    &< J_{\pi} + \lambda c_{\max} - \bar{V} \\
    &\leq 0
\end{align*}
showing that $|p - p'| < \epsilon_{\varphi}.$ 

In particular, since the choice of $\omega'$ was arbitrary, it holds for $\omega'$ achieving $\omega' = \min_{\omega \in \Omega} \min_{\pi \in \tilde{\Pi}(\omega)} V_{\pi}$. Therefore the previous bounds all hold for with $p'$ replaced with $p^\ast$ and $\tilde{V}^P_{\pi_{v_{\omega'}}}$ replaced with $\tilde{V}^P_{\pi^\ast}$.

It is clear we can think of this non-blocking subroutine as checking the different inputs to Algo \ref{algo:plan_transient}, which requires $\phi_{\texttt{PT}}(\omega) = \frac{ \epsilon_{\texttt{PT}} c_{\min} }{14 |\SState \setminus \omega| \bar{V}^2 (1+\frac{1}{\epsilon_{\varphi}})^2 }$, occuring when $n = \tilde{\mathcal{O}}((\frac{|\SState \setminus \omega| \bar{V}^2}{ \epsilon_{\texttt{PT}} \epsilon_{\varphi}^2 c_{\min}})^{2})$ samples per state-action pair have been collected. Taking the maximum over $\omega \in \Omega$, we have $n = \tilde{\mathcal{O}}((\frac{|\SState| \bar{V}^2}{ \epsilon_{\texttt{PT}} \epsilon_{\varphi}^2 c_{\min}})^{2})$ samples required for each state-action pair in $\SState \times \A$.
\end{proof}

\begin{rem}
Recall $\SState^\ast$ is set of accepting states in Product-MDP $\X$.
This subroutine appears to have an exponential runtime in $|\SState^\ast|$; $\Omega$ is at most $2^{|\SState^\ast|}$, which is not related to the typical PAC parameters. In general, $\Omega$ is modestly small. 
\end{rem}

\begin{rem}
While the runtime scales poorly with $|\SState^\ast|$, the sample complexity remains PAC.
\end{rem}

\begin{rem}
We believe it is possible to bring the runtime of the subroutine to be polynomial in $|\SState^\ast|$ by leveraging the MEC quotient structure (see \cite{modelchecking}), but leave that for future work.
\end{rem}

\newpage
\section{Experiments}\label{app:experiments}

\subsection{Environments and Details}\label{sec:app:env_details}

\begin{figure}[!htp]
\begin{center}
\begin{tabular}{cc}
\minipage{.4\textwidth}
  \centering
  \includegraphics[width=\linewidth]{./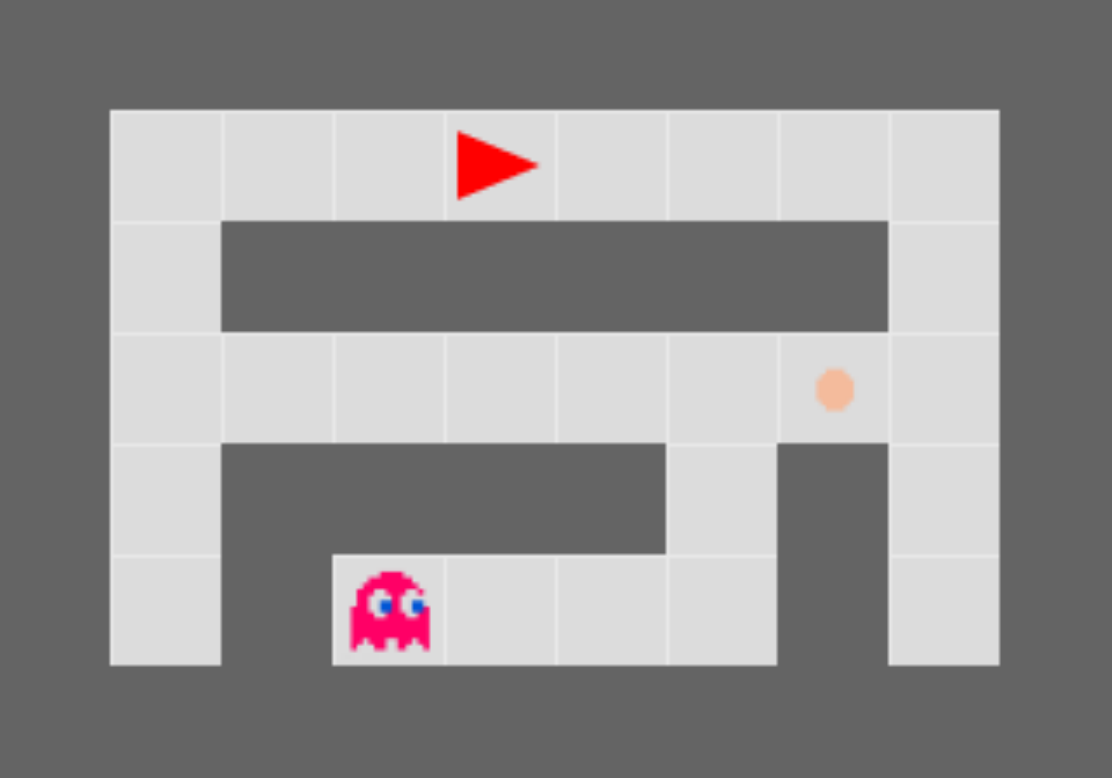}
\endminipage
&
\minipage{.4\textwidth}
  \includegraphics[width=\linewidth]{./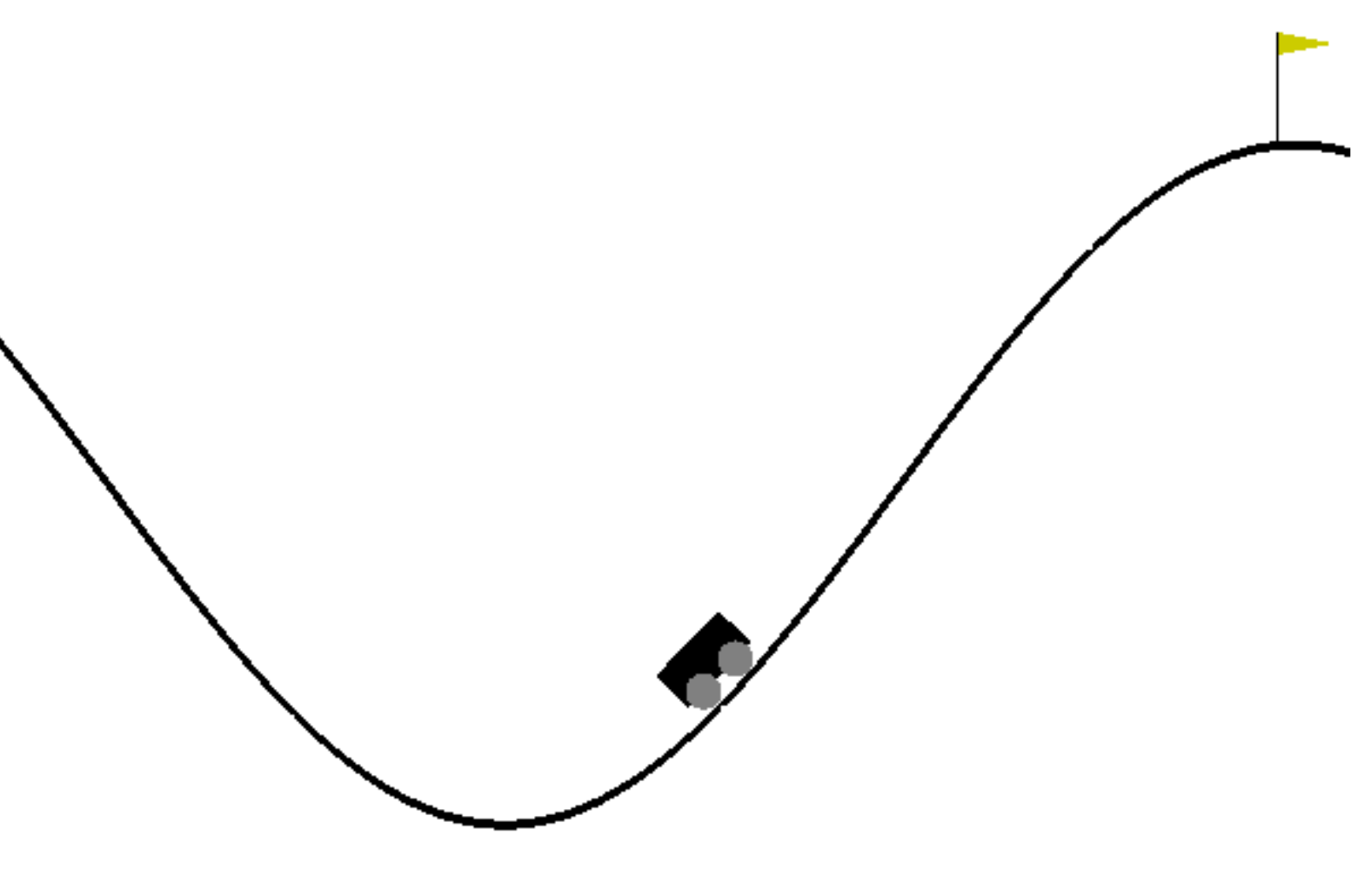}
\endminipage\hfill
\end{tabular}
\end{center}
\caption{\textit{Environment Illustrations. (Left) Pacman. $\varphi$ is for the agent, the red triangle, to eventually collect the food, given by the yellow dot, and always avoid the ghost, the red semicircle with eyes. (Right) Mountain Car (MC). $\varphi$ is to eventually reach the flag.
}}
 \label{fig:experiments_env_illustrations}
\end{figure}

\textbf{Pacman.} This environment (pictured in Fig \ref{fig:experiments_env_illustrations} Left) is a 5x8 gridworld. The starting positions of the agent (red triangle), food (yellow circle), and ghost (red semicircle with eyes), are as illustrated in Fig \ref{fig:experiments_env_illustrations}. The agent has $4$ cardinal directions at each state in addition to a ``do nothing'' action. The LTL specification is to eventually reach the food and to forever avoid the ghost ``F(food) \& G(!ghost)'', where the food state is labelled ``food'' and the ghost state is labelled ``ghost''. Once the food is picked up, it is gone. The ghost chases the agent (following the shortest path) with probability $.4$ and chooses a random action with probability $.6$. Though this is an infinite horizon problem, as there is no terminal state, we allow a maximum horizon of $H = 100$ in our experiments. We track how long the agent has avoided the ghost and whether the agent has picked up the food. To simplify verification, we say the agent has satisfied the spec if the food has been picked up and the ghost has been avoided for all $H$ timesteps. The cost function is defined as $1$ everywhere. 

For the shaped LCRL baseline, we use progression through the LDBA as a ``reward'': if the agent progresses to a new state in the automaton then the cost of that transition is $.1$ instead of $1$. The authors of LCRL used similar ideas in their code as well. However, we must note that progression-based cost shaping eliminates any guarantee of LTL satisfaction. An agent is incentivized to find cycles in the LDBA rather than find an accepting state. In the case when no such cycles exist, then this form of cost shaping can work.

\textbf{Mountain Car} This domain (pictured in Fig \ref{fig:experiments_env_illustrations} Right) is a discretization of the Mountain Car domain from OpenAI \cite{openAI}, with state-space given by tuple (position, velocity) and cost of $1$. We discretize the position space into $32$ equal size bins and the velocity into $32$ geometrically-spaced bins, allowing more granularity around low velocity than high velocity, making $32^2$ bins (states) in the MDP. The starting state is the standard MC starting state, but then placed in the appropriate bin. A bin can be converted back to (pos,vel), for purposes of sampling from $P$, by uniformly selecting from the valid positions/velocities implied by the bin. The agent has $3$ actions: accelerate left, do nothing, accelerate right. The specification is to eventually reach the goal state ``F(goal)'', the standard task, where any bin with position beyond the flag position is labelled ``goal''.

For the shaped LCRL baseline, we use a cost function of $c = .1$ if the change in position is positive and the agent accelerated right, likewise if the change is negative and the agent accelerated left, otherwise $c=1$. This cost function should incentivize the agent to seek actions which make the car go faster. Unlike the previous experiment, here cost-shaping has no effect on the guarantee of LTL satisfaction.

\textbf{Safe Delivery} This domain (pictured in Fig \ref{fig:motivation} Right) is a 4-state MDP: $(0)$ start state, $(1)$ sniffed packet, $(2)$ stolen packet $(3)$ delivered packet. In each state, the agent has two actions, $A$ and $B$. The transition function $P$ in the MDP is given by $P(0,A,1) = 1$, $P(0,B,2) = .5$, $P(0, B, 3) = .5$, $P(1,A,3)=1$, $P(1,B,3)=1$, $P(2,A,2)=1$, $P(2,B,2)=1$, $P(3,A,3)=1$, $P(3,B,3)=1$. In other words, choosing action $A$ in the initial state immediately leads to a sniffed packet, which subsequently leads to the packet being delivered by any action. Alternatively, choosing action $B$ in the initial state has a $50-50$ chance of having the packet stolen or immediately delivered, regardless of action. Once, stolen, it remains stolen. Once delivered, the packet remains delivered, regardless of action. The states are labelled as $L(0) = L(3) = $``safe''. The specification is to always stay in safe states: ``G(safe)''. Let all the costs be $1$. The Product-MDP can be seen in Figure \ref{fig:intuition} Right.

The probability-optimal and cost-optimal policy is then choosing $B$ is state $0$ and then arbitrarily afterward. The maximum probability of satisfying the policy is $50\%$ because $50\%$ of the time the packet gets stolen. Though this is an infinite horizon problem, as there is no terminal state, we allow a maximum horizon of $H = 100$ in our experiments. Thus, the average number of timesteps should be $.5*H = 50$.

Similarly to Pacman, for the shaped LCRL baseline, we use progression through the LDBA as a ``reward'': if the agent progresses to a new state in the automaton then the cost of that transition is $.5$ instead of $1$.

\textbf{Infinite Loop} This environment (pictured in Fig \ref{fig:motivation} Left) is a 2x5 gridworld. The agent starts in the bottom right corner. The agent has $4$ cardinal directions at each state in addition to a ``do nothing'' action. We consider two specifications:

$\varphi_1$: The LTL specification is to perpetually visit the office (in the top right corner) followed by the coffee room (top left corner): ``GF(o \& XFc)'', where the office is labelled $o$ and the coffee room is labelled $c$. The Product MDP is illustrated in Figure \ref{fig:intuition} Center.

$\varphi_2$: We require the agent to 
\begin{equation}\label{eq:varphi}
    \text{``G((c -> XXXXXo) \& (o ->XXXXXc)) \& Xo''},
\end{equation} meaning to get to first get to office in 1 step, then repeatedly reach the coffee room in 5 steps followed by the office in 5 steps.  

Similarly to Pacman, for the shaped LCRL baseline, we use progression through the LDBA as a ``reward'': if the agent progresses to a new state in the automaton then the cost of that transition is $.5$ instead of $1$.

\subsection{Hyperparameters}\label{sec:app:hyperparam}

We use the following hyperparameters for our experiments. Each set of hyperparameters was run with $20$ seeds, with the exception of Safe Delivery which was run with $40$ seeds.

\begin{figure}[!h]
\begin{minipage}[t]{1\textwidth}
\vspace{-.1in}
\captionof{table}{Hyperparameters}
\label{tab:hyperparam}
\vspace{-0.05in}
\begin{center}
\begin{small}
\begin{tabular}{cccccc}
\toprule
Param(s)   & Infinite Loop $\varphi_1$         & Infinite Loop $\varphi_2$                                   & Safe Delivery              & Pacman                        & MC                    \\
$\bar{V}$  &      $50$                &      $50$                                       &     $10$                   &      $100$                    &      $150$            \\
$c_{\min}$ &      $1$                 &      $1$                                        &     $1$                    &       $1$                     &       $1$             \\
$c_{\max}$ &      $1$                 &      $1$                                        &     $1$                    &       $1$                     &       $1$             \\
$\varphi$  & $\text{GF(o \& XFc)}$     & See $\varphi_2$ in \eqref{eq:varphi} &   $\text{G(!unsafe)}$                                                      &  $\text{F(food0) \& G!ghost}$ &  $\text{Fgoal}$      \\
$\epsilon$     & $3$                      & $3$                                             &      $3$                   &      $3$                      &      $10$             \\
$\delta$   & $.1$                     & $.1$                                            &      $.1$                  &      $.1$                     &      $.1$             \\
\hline \\
LCRL Params     & Infinite Loop          & Infinite Loop 2              & Safe Delivery                    & Pacman                 & MC                    \\
Max Traj len.   &      $100$          &         $100$                   &         $100$                     &         $100$        &         $200$        \\
$\gamma$        &      $.99$          &         $.99$                   &         $.99$                     &         $.99$        &         $.95$        \\
Learning rate   &      $.95$          &         $.95$                   &         $.95$                     &         $.95$        &         $.9$        \\
 \midrule
\end{tabular}
\end{small}
\end{center}

  \end{minipage}
  \end{figure}

\subsection{Additonal Results}

\begin{figure}[!htp]
\includegraphics[width=\linewidth]{./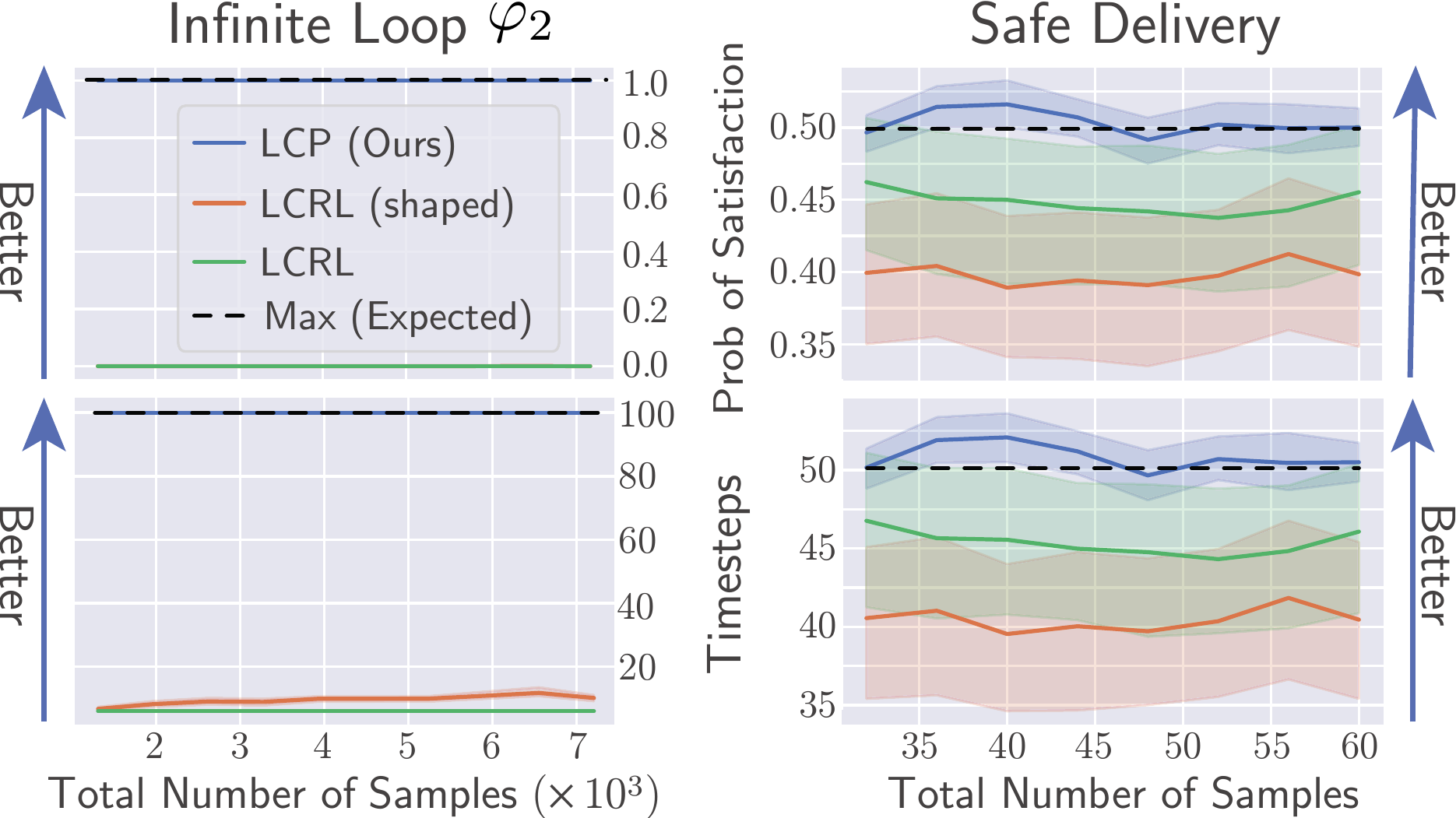}
\caption{\textit{Additional Results. (Left Column) Infinite Loop 2. $\varphi$ is a specific trajectory that needs to be followed: first get to the office in 1 timestep and then the coffee room in 5 and then back to the office in 5, over and over. (Right) Safe Delivery (Right Column). $\varphi$ is to always be safe.
}}
 \label{fig:experiments_additional_results}
\end{figure}

In this section we examine additional results for the experiments we ran. 

For the Infinite Loop environment under $\varphi_2$, we see (Figure \ref{fig:experiments_additional_results} Left Column) that our method is able to follow the trajectory specified by $\varphi_2$ even in low sample regimes. The learning signal for LCRL is very poor as the episode terminates extremely quickly if the agent does not get to the next location that it needs to be in within the allotted time. The shaped LCRL only does marginally better, but still struggles to satisfy the LTL with any probability.

For the Safe Delivery environment, we see (Figure \ref{fig:experiments_additional_results} Left Right) that our method picks out the probability-optimal policy. LCRL is nearly optimal. The sparsity of this problem is significantly less as the feedback for spec satisfaction verification comes after a single timestep. Interestingly, cost shaping in Safe Delivery performs worse than straight LCRL. This isn't surprising since, as noted, the verification feedback comes after a single timestep and is more important than any cost-shaping. However, cost-shaping muddles the feedback making shaped LCRL perform worse. We speculate that with only a few hundred or thousand more samples, both LCRL and shaped LCRL would reach the optimal policy. Recall that LCRL and shaped LCRL are not the same as Q-learning, as they operate in the product-MDP rather than the underlying MDP. Thus, these observations are still consistent with our Motivation section 
(Section \ref{sec:motivation}), insisting that Q-learning would have trouble in this environment.

\subsection{Policies}

\begin{figure}[!htp]
\begin{center}
\begin{tabular}{cc}
\minipage{.4\textwidth}
  \centering
  \includegraphics[width=\linewidth]{./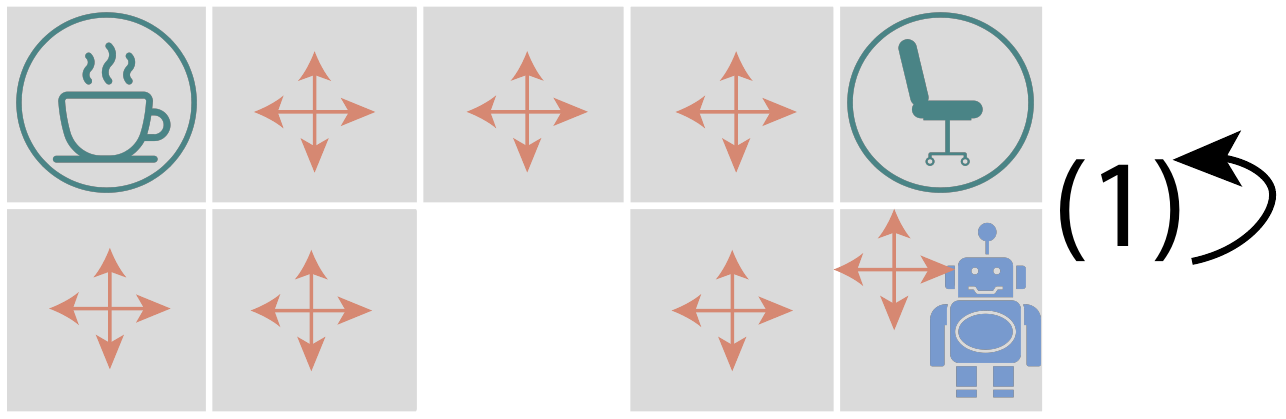}
\endminipage
&
\minipage{.4\textwidth}
  \includegraphics[width=\linewidth]{./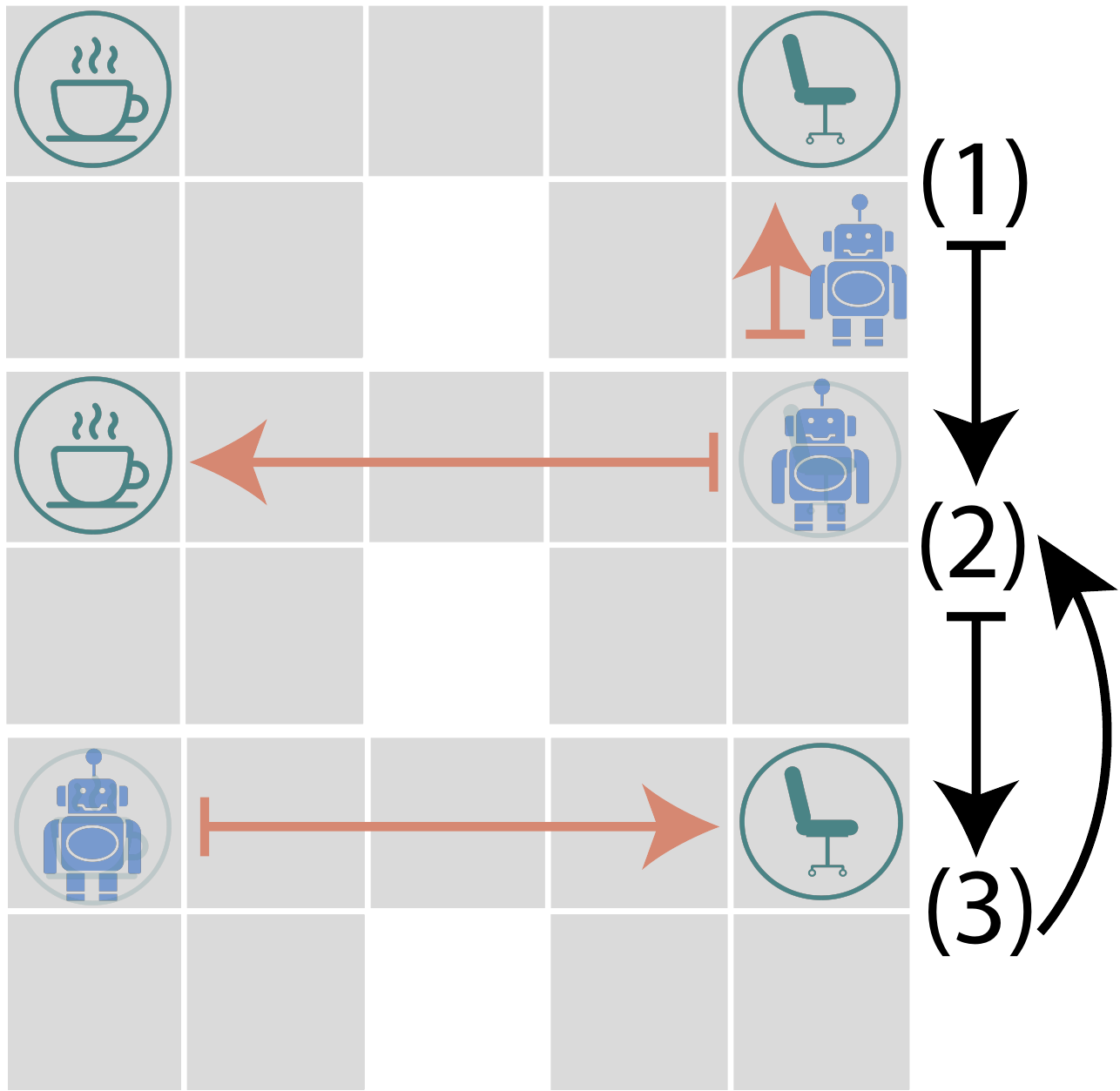}
\endminipage\hfill
\end{tabular}
\end{center}
\caption{\textit{Types of policies for different $\varphi$. (Left) Infinite Loop $\varphi_1$. $\varphi_1$ is to go perpetually walk between the office and the coffee room (Right) Infinite Loop $\varphi_2$. $\varphi_2$ is to get to the office in 1 time step then perpetually, take 5 timesteps to get to the coffee room and 5 steps back to the office.
}}
 \label{fig:experiments_policy_types}
\end{figure}

In this section we examine the policies induced by different specificity in specifications. In particular, we consider the Infinite Loop environment with two different specifications $\varphi_1, \varphi_2$, see Section \ref{sec:app:env_details}, \ref{sec:app:hyperparam} for a description. For $\varphi_1$, we only require that the agent ``eventually'' navigate between the office and coffee room. The agent is incentivized to stay in place (create a cost-1 cycle) for as long as possible and very infrequently take a random action. Of course, eventually taking random actions will loop the agent between the office and coffee room. This behavior is illustrated in Figure \ref{fig:experiments_policy_types} Left, where the agent is always in LDBA state 1 and takes random actions with low probability and does nothing with high probability. It takes exponential time for the agent to make a single loop between the office and coffee room. 

On the other hand, we may want the agent to move quickly. In this case, we can be more specific and use specification $\varphi_2$. The behavior for an agent satisfying $\varphi_2$ is illustrated in Figure \ref{fig:experiments_policy_types} Right. The agent gets to LDBA state 2 by first reaching the office in a single time step. Then the agent loops between LDBA states 2 and 3 by reaching the coffee room and office, repeatedly, within the allotted time. If the agent does not reach the office or chair within the allotted time, there is a fourth LDBA state (unpictured) which is a sink denoting failure of the spec. In essence, the LDBA has created the options, or hierarchy, of solving the problem, as noted in Section \ref{sec:motivation}. It takes $10$ timesteps for the agent to make a single loop between the office and coffee room.

Notice that the high level description of the task is unchanged, but the details of how the task is accomplished is much more specific in $\varphi_2$ rather than $\varphi_1$. This demonstrates that writing LTL task specifications is flexible, but requires thought about ``how'' the task should be accomplished. 



\end{document}